\pdfoutput=1
\documentclass[11pt]{article}
\usepackage{graphicx} 
\usepackage{diagbox}
\usepackage{authblk}
\makeatletter
\renewcommand\AB@authnote[1]{\textsuperscript{#1}\hspace{5pt}}

\makeatother
\usepackage[hyperfootnotes=false]{hyperref}
\usepackage{algorithm}
\usepackage{algpseudocode}
\makeatletter
\providecommand{\theHALG@line}{}
\renewcommand{\theHALG@line}{\thealgorithm.\arabic{ALG@line}}
\makeatother
\usepackage{notation-2}
\usepackage{arxiv-2}
\usepackage{amssymb}
\usepackage{mathrsfs}
\usepackage{float}
\usepackage{setspace}
\usepackage{lmodern}        

\renewcommand{\tilde}[1]{\widetilde{#1}}

\renewcommand{\hat}[1]{\widehat{#1}}

\newcommand{\pll}{\kern 0.3em/\kern -0.9em /\kern 0.3em}

\title{\normalfont Coupled Calibration and Learning: Mitigating Teacher Bias in LLM Distillation without Target-Domain Reward Feedback
}
\author{
\makebox[\textwidth][c]{%
    \parbox{0.30\textwidth}{\centering
        Haichen Hu\thanks{Email: \texttt{huhc@mit.edu}}\\
         MIT
    }\hfill
    \parbox{0.30\textwidth}{\centering
        Yuheng Zhang\thanks{Email: \texttt{yuhengz2@illinois.edu}}\\
         UIUC
    }\hfill
    \parbox{0.30\textwidth}{\centering
        David Simchi-Levi\thanks{Email: \texttt{dslevi@mit.edu}}\\
        MIT
    }
}}
\begin{document}
\maketitle
\singlespacing
\vspace{-0.5cm}
\begin{abstract}
Large language model (LLM) distillation aims to transfer the
capabilities of a powerful teacher to a smaller student.
Direct imitation, however, can also transfer the teacher's
systematic bias and errors. This challenge is particularly
pronounced under covariate shift, when the teacher's reliability
on target questions is uncertain and target-domain reward feedback
is unavailable. We propose Coupled Calibration and Learning (CCL), an LLM
distillation algorithm that couples teacher calibration with
student updates through token-level branching, using reward
feedback only on source questions. Each iteration calibrates
the teacher using source feedback and then uses the calibrated
teacher to train the student on target questions. The updated
student, in turn, informs subsequent calibration.
In an autoregressive policy framework, we prove that the output
student's expected average Kullback-Leibler divergence to the
oracle student converges to zero at a polynomial rate in the
number of iterations. The oracle maximizes the true
reference-regularized target reward within the student class,
which need not represent the unrestricted optimal policy.
Our analysis quantifies the progress of projected student
gradient updates while controlling the error in teacher calibration.
We further establish a separation from regularized direct
matching: its error relative to the oracle student can remain
bounded away from zero even when the teacher achieves higher
regularized target reward than every student policy.
These results demonstrate that LLM distillation can overcome
persistent teacher bias and recover the optimal student through
coupled calibration and learning, without target-domain reward
feedback.
\end{abstract}

\section{Introduction}
\label{sec:introduction}

Large language models (LLMs) have become a central component of modern
artificial intelligence, with applications ranging from language
understanding and content generation to mathematical reasoning and
program synthesis \citep{openai2023gpt4,gemini2023gemini,
grattafiori2024llama3,qwen2024qwen25,deepseek2025r1}.
As these models increasingly supply predictions, decisions, and
training data for other systems, understanding their statistical
behavior has become an important research problem.

Statistical learning theory provides a principled framework for
understanding the capabilities and limitations of modern AI.
It studies when overparameterized predictors generalize, how model
predictions can support valid statistical inference, and how to evaluate
black-box prediction procedures. For language models, theoretical
analyzes also examine how pretraining benefits downstream tasks,
how transformers learn from examples in context, and how preference
feedback guides policy learning. These questions have motivated
progress in generalization theory, statistical inference, and the
analysis of language-model training
\citep{
bartlett2020benign,angelopoulos2023prediction,
saunshi2021mathematical,bai2023statisticians,
kim2024minimax,ye2024online,wainwright2025wild,hu2025perturbingderivativewildrefitting,
hu2025doublywildrefittingmodelfree,hu2026interleaved}.
Within this broader program, a central challenge is to explain when
information from an existing model can support the reliable training
of another model.

Knowledge distillation is a prominent approach to this challenge.
A larger or more capable teacher supplies output probabilities or
generated responses to train a student, often with substantially
lower deployment costs. This idea underlies both classical model
compression and recent methods for transferring language and reasoning
capabilities to smaller models
\citep{hinton2015distilling,gu2024minillm,deepseek2025r1}.
The teacher provides a rich source of synthetic supervision, making
distillation particularly attractive when collecting task-specific demonstrations is difficult.

A central difficulty, however, is the propagation of teacher-bias.
Matching the teacher's predictions can reproduce its systematic errors
as well as its useful knowledge. For example, image-classification
experiments show that distillation can amplify teacher errors on
difficult classes even when average student accuracy improves
\citep{lukasik2022teacherspet}.
This concern is especially relevant under covariate shift, where the
target questions differ from the source questions on which reliable
supervision is available. Strong source performance does not by itself
certify the teacher's accuracy on the target questions. Related
experiments with LLMs show substantial accuracy losses when
in-context demonstrations and evaluation examples come from different
topic domains \citep{roussinov2025controlling}.
Nevertheless, the teacher may retain useful knowledge acquired from
source data. The statistical challenge is therefore to exploit that
information while correcting the errors that direct imitation can
propagate \citep{yamamoto2026residual}.

The absence of reliable target feedback makes this problem more
difficult. A reward model or verifier developed for one collection of
questions need not remain reliable on another: multilingual evaluations,
for example, find lower reward-model accuracy and inconsistent
preferences across languages \citep{gureja2025mrewardbench}.
Constructing new feedback can also require substantial resources.
Training mathematical process verifiers has involved extensive human
step annotations \citep{lightman2024verify}, while executable code
evaluation relies on task-specific tests and execution infrastructure
\citep{chen2021code}.
These considerations motivate a setting in which trusted reward
feedback is available on source questions, but only the questions
themselves are available in the target dataset. The learner must then
use source feedback to address target teacher bias, without evaluating
target answers against their true rewards. This leads to the following
question:

\begin{center}
\itshape
Can we distill a near optimal student from a biased teacher under covariate
shift, using reward feedback only on source questions and none on the
target questions?
\end{center}

\paragraph{Our contribution.} 
We answer this question affirmatively by proposing Coupled
Calibration and Learning (CCL), an LLM distillation algorithm
that learns from a biased teacher using reward feedback only on
source questions. In an autoregressive policy framework, our
benchmark is the oracle student: the policy within the student
class that maximizes the true target reward with KL regularization
to a pre-trained reference policy.

Our algorithm couples teacher calibration with student updates
in an iterative procedure. Each iteration calibrates the teacher
using source reward feedback and then uses the calibrated teacher
to train the student on target questions. The updated student, in
turn, informs the next calibration step so that teacher calibration
and student learning proceed together throughout training.

We prove that the expected average KL divergence from the student
returned by CCL to the oracle student converges to zero at a
polynomial rate in the number of iterations.
The analysis uses student gradient progress near the oracle
and exploration outside a fixed near-optimal region, while
controlling calibration and finite-rollout errors.
We further establish a separation:  direct distillation
can retain a strictly positive KL divergence between the oracle student and the trained student policy even when
the teacher achieves a higher true regularized reward than every
policy in the student class. Thus, our method can eliminate a
persistent error of direct imitation and recover the optimal
student without target-domain reward feedback.
\paragraph{Paper structure.} Our paper is organized with the following structure.
Section~\ref{sec:ced-model} introduces the distillation problem,
the autoregressive policy model, and the oracle student benchmark.
Section~\ref{sec:algorithm} presents our algorithm and explains how it couples
teacher calibration with student updates. Then, Section~\ref{sec:ced-theory} establishes convergence to the optimal oracle
student and outlines the main steps of the analysis.
Section~\ref{sec:sm-temperature} establishes a separation from
regularized direct teacher matching to show that our bound is strictly better than direct distillation.

\paragraph{Notation.}
We write $\EE_X[\cdot]$ and $\PP_X(\cdot)$ for expectation and
probability with respect to the random variable $X$, respectively.
Conditional expectation and probability are denoted by
$\EE[\cdot\mid\cF]$ and $\PP(\cdot\mid\cF)$ for a sigma-algebra
$\cF$. We use $\sigma(X_1,\ldots,X_k)$ for the sigma-algebra
generated by $(X_i)_{i=1}^k$, and
$\cF\vee\cG$ for the smallest sigma-algebra containing both
$\cF$ and $\cG$.
For a vector $v$, $\|v\|_2$ denotes its Euclidean norm;
for a matrix $A$, $\|A\|_{\mathrm{op}}$ denotes its induced
Euclidean operator norm.
For probability distributions $P,Q$ on a common finite set
$\mathcal Z$, their Kullback-Leibler divergence is denoted by
$\KL(P\|Q)=\sum_{z\in\mathcal Z}P(z)\log(P(z)/Q(z))$,
where terms with $P(z)=0$ are zero, and the divergence is
$+\infty$ if $P(z)>0=Q(z)$ for some $z$.
For a nonempty closed convex set $\mathcal C$,
$\Proj{\mathcal C}{v}:=
\arg\min_{u\in\mathcal C}\|u-v\|_2^2$
denotes Euclidean projection.
The notation $\operatorname{Unif}(\mathcal C)$ denotes the
uniform distribution over $\cC$. For two sequences $(a_k)_{k\ge1}$ and $(b_k)_{k\ge1}$ with $b_k>0$, we write $a_k=O(b_k)$ if there exist
constants $C>0$ and $k_0$ such that $|a_k|\le Cb_k$
for all $k\ge k_0$, and $a_k=o(b_k)$ if
$|a_k|/b_k\to0$ as $k\to\infty$.

\section{Related Work}
\label{sec:related-work}

\paragraph{Statistical theory of distillation and imitation learning.}
Statistical analyzes of distillation study the benefits of
teacher-generated supervision and the propagation of teacher error.
\citet{menon2021statistical} explain the bias–variance tradeoff
of soft labels, while \citet{ildiz2025highdimensional} characterize
high-dimensional distillation risk under model and covariate shift.
\citet{xie2026mathematical} identify settings where a risk-minimizing
teacher preserves the restricted student's optimum and improves
the statistical efficiency of averaged SGD.
To correct imperfect supervision, \citet{dao2021semiparametric}
develop cross-fitting and loss corrections, and
\citet{iliopoulos2022weighted} analyze student-dependent
reweighting of noisy pseudo-labels.
Relatedly, \citet{xia2024prediction} construct pseudo-responses
using training-only helper covariates and obtain prediction bounds
combining an oracle rate with surrogate error.
For sequential prediction, \citet{ross2011reduction} establish
a no-regret foundation for learning from expert feedback at
learner-visited states, while \citet{czarnecki2019distilling}
analyze policy-distillation updates and convergence in tabular
settings.
\citet{foster2024behavior} show that, under realizability,
online expert access need not improve worst-case statistical
complexity over offline behavior cloning with logarithmic loss.
With noisy expert feedback, \citet{sriraman2026behavior}
establish an offline on-policy separation for learning a
realizable clean expert.
Beyond realizability, \citet{zhang2026online} study how student
misspecification and alignment between expert scores and rewards
affect the benefits of online imitation, and give finite-sample
guarantees using base-policy sampling.
Closest to our motivation, \citet{yamamoto2026residual} study
bias propagation under source-target covariate shift.
Their method refits teachers to student residuals and achieves
a provable separation from direct soft matching.
Our work instead couples source-reward-based teacher calibration
with target-side LLM distillation.
We establish convergence to the regularized oracle within the
student class and separation from regularized direct matching,
without target reward feedback or requiring the student to
represent the unrestricted optimal policy.

\paragraph{Reinforcement learning and LLM post-training.}
Reinforcement learning plays an important role in modern deep learning,
with a substantial theoretical literature on exploration, policy
optimization, and learning with function approximation
\citep{jiang2017contextual,jin2020provably,agarwal2021policy,
xie2021bellman,xie2023coverage,zhang2023offline,
qian2024offline,hu2026double}.
For preference-based learning, \citet{zhu2023principled} connect
reward estimation to policy performance and establish guarantees
for pessimistic learning.
\citet{xiong2024iterative} develop offline, online, and hybrid
algorithms for KL-regularized preference learning with finite-sample
guarantees.
\citet{xie2025exploratory} introduce exploration bonuses for
provably sample-efficient preference optimization, while
\citet{zhao2025sharp} characterize how KL regularization and
reference-policy coverage affect sample complexity.
For LLM post-training, \citet{chen2026coverage} study how
pre-training provides response coverage for downstream improvement,
and \citet{foster2025foundation} distinguish the statistical
and computational roles of base-model coverage.
\citet{huang2025sharpening} analyze self-improvement through
sharpening, where training amortizes the selection of high-likelihood
responses.
Related inference-time analyses characterize Best-of-$N$
through win-rate guarantees \citep{sriraman2026bestofn}
and use pessimistic scoring to mitigate reward hacking
\citep{yu2026caution}.
For outcome-supervised learning, \citet{jia2025verify}
connect outcome feedback to process-level learning,
\citet{yuan2025trajectory} develop trajectory Bellman residual
minimization for KL-regularized policy learning, and
\citet{chen2025outcome} establish sample-complexity guarantees
for outcome-based online RL.
\citet{kim2026coverage} analyze coverage improvement and
convergence in on-policy preference learning and reward
distillation, explicitly accounting for reward-model error
in the latter.
These works study how feedback and coverage support policy
improvement and response selection.
Our work addresses biased-teacher LLM distillation under
source-target covariate shift.
We couple source-reward-based teacher calibration with target-side
student learning and prove convergence in average target KL
to the regularized oracle within the student class,
together with separation from regularized direct matching,
without target reward feedback.

\paragraph{Transfer learning under covariate shift.}
Transfer-learning theory studies how source supervision supports
prediction on a different target distribution. Under covariate shift,
linear and kernel regression
analyzes quantify this transfer through source-target covariance
geometry and distributional overlap
\citep{lei2021nearoptimal,ma2023optimally}.
In well-specified parametric models, \citet{ge2024mle} establish
minimax guarantees for source-only maximum likelihood estimation,
with transfer difficulty governed by source and target Fisher
information rather than a bounded density ratio.
Pseudo-labeling methods further use unlabeled target covariates
to select source-trained estimators in kernel ridge regression
and kernel generalized linear models
\citep{wang2026pseudolabeling,weill2026pseudolabeling}.
Beyond covariate shift, \citet{xia2026classification} study
oversampling under label shift, separating balanced-data risk
from the cost of estimating the minority-class distribution.
These results characterize prediction under distribution shift,
but do not study LLM distillation without target feedback. Under shared policy realizability
and joint source identification, our analysis controls the student's
error on fixed target questions using reward feedback confined
to source questions.
\section{Model setup}
\label{sec:ced-model}

In this section, we formulate LLM distillation for post-training
under a fixed source–target design. We model autoregressive
generation as a finite-horizon, token-level Markov decision process
\[
\mathcal M=(\cS,\cA,P,R,H),
\]
where $\cS$ is the prefix-state space, $\cA$ is a finite token
vocabulary, $P$ is the deterministic transition kernel, $R$ is
the terminal answer reward, and $H\ge1$ is the generation horizon. Denote $\cX$ as a context space representing the set of potential questions. Each $x\in\cX$ represents a question presented to the LLM,
including its instructions and any accompanying context;
we refer to this complete input as a prompt.
Generation starts at $s_1=(x,\varnothing)$. At step $h$, the state
$s_h=(x,a_{1:h-1})$ records the question and all previously
generated answer tokens. The LLM acts as a policy $\pi$:
it selects a legal token
$a_h\sim\pi(\cdot\mid s_h)$ and appends it to the prefix, so
\[
P(s'\mid s_h,a_h)
=\mathbf 1\{s'=(x,a_{1:h})\},
\  h=1,\ldots,H.
\]
The reward is evaluated on the completed answer and is specified
below. We first make precise which tokens and answers are feasible.

\begin{definition}[Legal tokens and feasible trajectories]
\label{def:ced-feasible-trajectories}
Let $\cA$ contain $\mathtt{EOS}$ and $\mathtt{null}$.
For $1\le h\le H$, define the legal-token set at
$s_h=(x,a_{1:h-1})$ by
\[
\cB(s_h):=
\begin{cases}
\cA\setminus\{\mathtt{null}\},&
\mathtt{EOS}\notin\{a_1,\ldots,a_{h-1}\},\\
\{\mathtt{null}\},&
\mathtt{EOS}\in\{a_1,\ldots,a_{h-1}\}.
\end{cases}
\]
The state space $\cS$ consists of the prefixes generated from
$(x,\varnothing)$, $x\in\cX$, by repeatedly appending legal tokens,
up to length $H$. States $s_{H+1}$ are terminal, with
$\cB(s_{H+1}):=\varnothing$.
For a reachable state $s_h$ with $h\le H$, define
\[
\cA(s_h):=
\left\{
b_{h:H}\in\cA^{H-h+1}:
b_k\in\cB((x,a_{1:h-1},b_{h:k-1}))
\ \text{for }k=h,\ldots,H
\right\}.
\]
Here $(x,a_{1:h-1},b_{h:k-1})$ appends $b_{h:k-1}$ to the
existing prefix, with empty blocks omitted. Set
\[
\cA(s_{H+1}):=\{\varnothing\},
\ 
\cA(x):=\cA((x,\varnothing)).
\]
Thus $\cB(s_h)$ contains individual legal tokens,
$\cA(s_h)$ contains feasible remaining token sequences, and
$\cA(x)$ contains full feasible answers.
\end{definition}

Generation ends at $\mathtt{EOS}$ or horizon $H$, with
$\mathtt{null}$ padding after $\mathtt{EOS}$. This convention gives
every answer length $H$, and feasibility is independent of reward.
A policy assigns probability zero to illegal tokens; on each
nonterminal state, its probabilities over $\cB(s_h)$ sum to one.
The deterministic transitions and successive token choices induce
the full-answer law
\begin{equation}
\pi(a_{1:H}\mid x)
=\prod_{h=1}^{H}\pi(a_h\mid x,a_{1:h-1}),
\  a_{1:H}\in\cA(x).
\label{eq:ced-chain}
\end{equation}
The same notation $\pi$ therefore describes both the token policy
and its induced distribution over complete answers.

Post-training seeks to improve answer quality while retaining
the behavior of a pretrained model. We study this task under
covariate shift, with a fixed source dataset of training questions
and a fixed target dataset of questions that the student is
intended to answer:
\[
\cD_{\mathrm{src}}=\{x_i\}_{i=1}^{n},
\ 
\cD_{\mathrm{tar}}=\{\widetilde x_j\}_{j=1}^{m},
\  n,m\ge1.
\]
Here $x_i$ is the $i$th source question and $\widetilde x_j$ is
the $j$th target question, each including its associated context.
Reward supervision is available for candidate answers to source
questions, whereas the learning objective concerns the student's
answers to target questions. The two datasets may contain different mixtures of
question types. We condition throughout on these
fixed datasets; randomness comes from policy sampling and the
training algorithm.

The terminal reward is a deterministic function
\[
R:\{(x,a_{1:H}):x\in\cX,\ a_{1:H}\in\cA(x)\}\longrightarrow[0,1].
\]
We \emph{only} have access to an exact verifier on the \emph{source prompts}:
given $x_i$ and any $a_{1:H}\in\cA(x_i)$, it returns
$R(x_i,a_{1:H})$. Source supervision is therefore supplied by
evaluations of candidate answers, and the dataset itself contains
only prompts. Reward feedback is unavailable on the target
prompts; $R(\widetilde x_j,a_{1:H})$ denotes their latent true
answer quality and is never queried during training.
This restriction reflects the difficulty of
extending reliable reward evaluation to new questions.
Developing a target-domain reward model can require
expert-designed rubrics, labeled answers, and careful validation
of the grading criteria. These requirements make target
reward very costly to obtain.

Let $\pi_{\mathrm{pre}}$ be the frozen pretrained reference
policy, with positive probability on every legal token.
For $\lambda>0$, our target post-training objective is
\begin{equation}
\begin{aligned}
J_{\lambda,m}(\pi)
:=\frac1m\sum_{j=1}^{m}
\EE_{a_{1:H}\sim\pi(\cdot\mid\widetilde x_j)}
\Bigg[
&R(\widetilde x_j,a_{1:H})-\lambda\sum_{h=1}^{H}
\log\frac{\pi(a_h\mid\widetilde x_j,a_{1:h-1})}
{\pi_{\mathrm{pre}}(a_h\mid\widetilde x_j,a_{1:h-1})}
\Bigg].
\end{aligned}
\label{eq:ced-return}
\end{equation}
The first term measures answer quality; the second penalizes
deviation from the pretrained policy. By \eqref{eq:ced-chain},
the expected log-ratio sum equals the full-answer KL:
\[
\KL\!\left(\pi(\cdot\mid x)\,\middle\|\,
\pi_{\mathrm{pre}}(\cdot\mid x)\right)
=
\EE_{a_{1:H}\sim\pi(\cdot\mid x)}
\!\left[
\sum_{h=1}^{H}
\log\frac{\pi(a_h\mid x,a_{1:h-1})}
{\pi_{\mathrm{pre}}(a_h\mid x,a_{1:h-1})}
\right].
\]
Thus $\lambda$ controls the tradeoff between reward and proximity
to the reference. The objective specifies the desired target
behavior, although its reward term is unavailable during training.
We address this information constraint through teacher
distillation: source reward feedback calibrates an available
teacher model, whose likelihoods then supervise the student
on the target prompts.

To formalize the teacher, its calibration, and the student, we
use linear-softmax policy classes. Fix known feature maps
$\phi:\cS\times\cA\to\RR^D$ and
$\phi_{\mathrm{stu}}:\cS\times\cA\to\RR^d$, where $D,d\ge1$,
with
\[
\|\phi(s,a)\|_2\le1,
\ 
\|\phi_{\mathrm{stu}}(s,a)\|_2\le1.
\]
The features may depend on the entire prefix, preserving the
autoregressive dependence of the policy.
For every nonterminal state $s$ and legal token $a\in\cB(s)$,
define
\begin{equation}
\begin{aligned}
\pi_w(a\mid s)
&=\frac{\exp(w^\top\phi(s,a))}
{\sum_{b\in\cB(s)}\exp(w^\top\phi(s,b))},\ \pi_{\mathrm{stu},\theta}(a\mid s)=\frac{\exp(\theta^\top\phi_{\mathrm{stu}}(s,a))}
{\sum_{b\in\cB(s)}\exp(\theta^\top\phi_{\mathrm{stu}}(s,b))}.
\end{aligned}
\label{eq:ced-softmax}
\end{equation}
Both policies assign zero probability to illegal tokens.
For $B>0$, the calibration parameter belongs to a nonempty
compact convex set $W$, and the student parameter belongs to
the closed ball $\Theta$:
\[
W\subseteq\{w\in\RR^D:\|w\|_2\le B\},
\ 
\Theta:=\{\theta\in\RR^d:\|\theta\|_2\le B\}.
\]
The given teacher is $\pi_{\mathrm{tea}}=\pi_{w_{\mathrm{tea}}}$
with $w_{\mathrm{tea}}\in W$. Its proposal policy remains frozen,
while calibration adjusts a separate parameter within $W$,
initialized at $w_{\mathrm{tea}}$.
The teacher may be biased relative to the optimal target behavior.
The reference $\pi_{\mathrm{pre}}$ need not belong to either
softmax class, and we impose neither sparsity nor an upper bound
relating $D$ to $H$.

For the performance metric, our benchmark is the best post-trained policy within the student
class. Specifically, we choose
\[
\theta^\dagger_{\lambda,m}
\in\argmax_{\theta\in\Theta}
J_{\lambda,m}(\pi_{\mathrm{stu},\theta}).
\]
$\theta^\dagger_{\lambda,m}$ exists because $\Theta$ is compact, and the objective
is continuous in $\theta$. We call
$\pi_{\mathrm{stu},\theta^\dagger_{\lambda,m}}$ an oracle student:
it optimizes the true target objective using the same class
available to the learned student.
For comparison, at each source or target prompt $x$, we define
the unrestricted optimal policy by
\begin{equation}
\begin{aligned}
\pi^\star_\lambda(\cdot\mid x)
\in\argmax_{\pi(\cdot\mid x)\in\Delta(\cA(x))}
\EE_{a_{1:H}\sim\pi(\cdot\mid x)}
\Bigg[
&R(x,a_{1:H})-\lambda\sum_{h=1}^{H}
\log\frac{\pi(a_h\mid x,a_{1:h-1})}
{\pi_{\mathrm{pre}}(a_h\mid x,a_{1:h-1})}
\Bigg].
\end{aligned}
\label{eq:ced-true-definition}
\end{equation}
Here $\Delta(\cA(x))$ is the simplex of distributions over full
feasible answers. Token conditionals are obtained from prefix
marginals. Lemma~\ref{lem:ced-gibbs} establishes that this optimum
is unique and assigns positive probability to every feasible
answer. The student class may be unable to represent this
unrestricted optimum.

We connect source reward information to target behavior through the following realizability assumption.

\begin{assumption}[Realizability]
\label{ass:ced-truth}
There exists a single parameter $w^\star_\lambda\in W$ such that,
at every legal nonterminal prefix state $s$ of every source or
target prompt,
\[
\pi^\star_\lambda(a\mid s)
=\pi_{w^\star_\lambda}(a\mid s)
\ \text{for every }a\in\cB(s).
\]
\end{assumption}

This assumption provides the shared structure needed for
transfer: source and target prompts use the same optimal
calibration coefficients, evaluated through their respective
prefix features. Realizability is imposed on the calibration
class and allows the student class to remain misspecified.
Our fixed-design analysis uses this shared structure rather
than a density-ratio assumption.

For a calibrated parameter $w\in W$, define the target
distillation cost
\begin{equation}
C_w(\theta):=\frac{\lambda}{m}\sum_{j=1}^{m}
\KL\!\left(
\pi_{\mathrm{stu},\theta}(\cdot\mid\widetilde x_j)
\,\middle\|\,
\pi_w(\cdot\mid\widetilde x_j)
\right).
\label{eq:ced-cost}
\end{equation}
Its integrand is computable from the student and calibrated-policy
likelihoods, so the cost can be estimated using student rollouts
on the fixed target prompts. Under realizability,
Lemma~\ref{lem:ced-gibbs} gives
\[
\argmax_{\theta\in\Theta}
J_{\lambda,m}(\pi_{\mathrm{stu},\theta})
=
\argmin_{\theta\in\Theta}C_{w^\star_\lambda}(\theta).
\]
This identity makes the role of calibration explicit: at the
true calibration parameter, distillation targets exactly the
regularized oracle student.

We impose the following uniqueness assumption on the oracle student.

\begin{assumption}
\label{ass:ced-unique}
The function $C_{w^\star_\lambda}$ has a unique minimizer over
$\Theta$,
$\argmin_{\theta\in\Theta}C_{w^\star_\lambda}(\theta)
=\{\theta^\dagger_{\lambda,m}\}$.

\end{assumption}

This assumption uniquely determines the oracle parameter
$\theta^\dagger_{\lambda,m}$ and hence its induced answer law
on every target question.
Neither interiority nor a positive student Hessian is required.
For a learned calibration $w_t$ and student $\theta_t$, write
$C_t(\theta):=C_{w_t}(\theta)$ and define
\begin{equation}
\begin{aligned}
\varepsilon_t
&:=C_t(\theta_t)-\min_{\theta\in\Theta}C_t(\theta),\\
\Delta_{\lambda,m}(\theta)
&:=C_{w^\star_\lambda}(\theta)
-C_{w^\star_\lambda}(\theta^\dagger_{\lambda,m}),\\
\cK_{\lambda,m}(\theta)
&:=\frac1m\sum_{j=1}^{m}
\KL\!\left(
\pi_{\mathrm{stu},\theta}(\cdot\mid\widetilde x_j)
\,\middle\|\,
\pi_{\mathrm{stu},\theta^\dagger_{\lambda,m}}
(\cdot\mid\widetilde x_j)
\right).
\end{aligned}
\label{eq:ced-errors}
\end{equation}
The first quantity measures optimization error for the current
calibrated objective; the second measures excess cost under the
true calibration; and the third measures the average KL from
the learned student to the oracle student.
Our goal is to drive $\EE[\cK_{\lambda,m}(\theta_T)]$ to zero.
This compares policies within the student class, allowing the
minimum distillation cost itself to remain positive.

Finally, source comparisons must contain enough information
to identify the shared calibration parameter.
For a teacher-generated prefix
$s_{i,h}=(x_i,a_{i,1:h-1})$, compare its next token $a_{i,h}$
with a legal alternative $b$.
The difference
$\phi(s_{i,h},a_{i,h})-\phi(s_{i,h},b)$ determines the
calibration direction observed in that comparison. For each $h=1,\ldots,H$, define the source information matrix
$G_h$ by
\begin{equation}
\begin{aligned}
G_h
&:=\frac1n\sum_{i=1}^{n}
\EE_{a_{i,1:h}\sim\pi_{\mathrm{tea}}(\cdot\mid x_i)}
\Bigg[
\frac1{|\cB(s_{i,h})|}
\sum_{b\in\cB(s_{i,h})}
\bigl(\phi(s_{i,h},a_{i,h})-\phi(s_{i,h},b)\bigr)\times
\bigl(\phi(s_{i,h},a_{i,h})-\phi(s_{i,h},b)\bigr)^\top
\Bigg],
\end{aligned}
\label{eq:ced-gram}
\end{equation}
The expectation is over the teacher's output
$a_{i,1:h}$, and the inner average is uniform over legal tokens. We impose the following identification
condition on their average across token positions.
\begin{assumption}
\label{ass:ced-identification} Define the joint source information matrix as
$G_{\mathrm{joint}}:=\frac1H\sum_{h=1}^{H}G_h.$ We assume that it is positive definite, i.e., 
\[
\mu_{\mathrm{joint}}
:=\lambda_{\min}(G_{\mathrm{joint}})>0.
\]
\end{assumption}

This assumption requires comparisons across all token positions
to identify every calibration direction; individual matrices
$G_h$ may be singular.
At a padded state, the only legal token is $\mathtt{null}$,
so its feature difference and information contribution are zero.

\section{Algorithm}
\label{sec:algorithm}

In this section, we present the CCL algorithm and provide a
detailed explanation of its steps.
The algorithm couples two components: teacher calibration using
source reward feedback, and student distillation using the
calibrated teacher on target prompts. Calibration adjusts the
teacher's predictions toward the regularized optimal policy,
while distillation uses these adjusted predictions to train the
student. The student also participates in calibration by supplying
alternatives to the teacher's proposed tokens. These components
therefore interact throughout training: calibration changes the
student's training objective, and the updated student changes
the comparisons used for subsequent calibration.

The central mechanism for estimating the calibration update is
token-level branching. We select a token position in a
teacher-generated source answer and retain the preceding prefix.
At that prefix, we pair the teacher's next token with an
alternative sampled from the current student. This creates a
comparison between two token choices in the same context.
We then select one of these choices using the reference policy,
complete the selected branch with that policy, and evaluate
the resulting answer using the source reward oracle.
A reward-dependent acceptance rule turns this observation into
a statistical signal for estimating the calibration update.
Repeating this construction at different token positions gathers
information about the calibration coefficients across the
generation process.

The calibrated teacher subsequently provides supervision 
for student distillation on the target prompts. Under our
model, the calibration learned from source
comparisons also applies to target questions. The student update
uses a batch of sampled answers and policy likelihoods to estimate
its gradient and take one projected step. We compare this proposal
with a randomly sampled student. After evaluating these two candidates,
the selected student supplies token
alternatives for the next calibration round. This coupled
procedure transfers source reward information into target-side
training through the calibrated teacher.
The full pseudocode is given in Algorithm~\ref{alg:ced}.

\begin{algorithm}[t]
\caption{Coupled Calibration and Learning (CCL) via Token-Level Branching}
\label{alg:ced}
\small
\begin{algorithmic}[1]
\Require Source and target datasets $\{x_i\}_{i=1}^{n}$,
$\{\widetilde x_j\}_{j=1}^{m}$; source reward oracle $R$;
teacher $\pi_{\mathrm{tea}}=\pi_{w_{\mathrm{tea}}}$ and
$\pi_{\mathrm{pre}}$; features $\phi$, $\phi_{\text{stu}}$ and sets $W,\Theta$;
$w_{\mathrm{tea}}\in W$, $\theta_0\in\Theta$;
$\lambda,\gamma>0$, integer $T\ge1$;
schedules \eqref{eq:br-budgets}.
\Statex All draws use fresh randomness conditional on the preceding variables.

\State $w_0\gets w_{\mathrm{tea}}$,
$\pi_{\mathrm{cal},0}\gets\pi_{w_0}$.
\label{line:br-init}

\For{$t=0,\ldots,T-1$}
    \Statex \textit{Token-Level branching on source}
    \State Draw independently
    $i_t\sim\operatorname{Unif}\{1,\ldots,n\}$ and
    $h_t\sim\operatorname{Unif}\{1,\ldots,H\}$.
    \label{line:br-design}

    \State Draw $a_{t,1:H}\sim
    \pi_{\mathrm{tea}}(\cdot\mid x_{i_t})$ autoregressively.
    \label{line:br-teacher}

    \State $s_t\gets(x_{i_t},a_{t,1:h_t-1})$,
    $c_{t,1}\gets a_{t,h_t}$.
    \Comment{Teacher prefix and token.}
    \label{line:br-prefix}
    \label{line:br-candidates}

    \State Draw $c_{t,0}\sim
    \pi_{\mathrm{stu},\theta_t}(\cdot\mid s_t)$.
    \Comment{Student alternative.}

    \State $z_t\gets
    \phi(s_t,c_{t,1})-\phi(s_t,c_{t,0})$.
    \label{line:br-feature}

    \Statex \textit{Teacher calibration.}

    \State $\displaystyle
    Y_t\sim\operatorname{Bernoulli}\!\left(
    \frac{\pi_{\mathrm{pre}}(c_{t,1}\mid s_t)}
    {\pi_{\mathrm{pre}}(c_{t,1}\mid s_t)
    +\pi_{\mathrm{pre}}(c_{t,0}\mid s_t)}
    \right)$.
    \label{line:br-choice}

    \State
    $a^{\mathrm{pre}}_{t,1:h_t-1}\gets a_{t,1:h_t-1}$,
    $a^{\mathrm{pre}}_{t,h_t}\gets c_{t,Y_t}$.
    \Comment{Construct the selected branch.}
    \label{line:br-force}

    \State Draw $a^{\mathrm{pre}}_{t,h_t+1:H}\sim
    \pi_{\mathrm{pre}}
    (\cdot\mid x_{i_t},a^{\mathrm{pre}}_{t,1:h_t})$
    autoregressively.
    \label{line:br-reference}

    \State $R_t\gets R(x_{i_t},a^{\mathrm{pre}}_{t,1:H})$.
    \Comment{Exactly one source reward query.}
    \label{line:br-reward}

    \State Draw $U_t\sim\operatorname{Unif}[0,1]$.

    \State $I_t\gets
    \mathbf{1}\{U_t\le\exp((R_t-1)/\lambda)\}$.
    \label{line:br-accept}

    \State $g_t\gets
    I_tz_t[\sigma(z_t^\top w_t)-Y_t]$.
    \Comment{One trial; rejection gives $g_t=0$.}
    \label{line:br-gradient}

    \State $w_{t+1}\gets\Proj{W}{w_t-\eta_tg_t}$.
    \label{line:br-project}

    \State $\pi_{\mathrm{cal},t+1}\gets\pi_{w_{t+1}}$.

    \Statex \textit{Student gradient update.}
    \State Define $Z_{t+1}(\theta,x,a_{1:H}):=\lambda\log\frac{\pi_{\text{stu},\theta}(a_{1:H}|x)}{\pi_{\text{cal},t+1}(a_{1:H}|x)}$, $S_{\mathrm{stu},\theta}(x,a_{1:H}):=\nabla_\theta
\log\pi_{\mathrm{stu},\theta}(a_{1:H}\mid x)$, $\forall \theta\in\Theta$.
    \State Draw independent prompt-answer pairs
    for $\ell=1,\ldots,b_{t+1}$:
    \label{line:br-kl}
   $$\displaystyle
    j_{t+1,\ell}^{\mathrm g}
    \sim\operatorname{Unif}\{1,\ldots,m\},
    \quad 
    a^{\mathrm g}_{t+1,\ell,1:H}
    \mid j_{t+1,\ell}^{\mathrm g}
    \sim\pi_{\mathrm{stu},\theta_t}
    (\cdot\mid\widetilde x_{j_{t+1,\ell}^{\mathrm g}}).
    $$
    \vspace{-1.5em}
    \State $\displaystyle
    \widehat g_{t+1}^{\mathrm{stu}}\gets
    \frac1{b_{t+1}}\sum_{\ell=1}^{b_{t+1}}
    S_{\mathrm{stu},\theta_t}
    (\widetilde x_{j_{t+1,\ell}^{\mathrm g}},
    a^{\mathrm g}_{t+1,\ell,1:H})
    Z_{t+1}
    (\theta_t;\widetilde x_{j_{t+1,\ell}^{\mathrm g}},
    a^{\mathrm g}_{t+1,\ell,1:H})$.
    \label{line:br-student-gradient}

    \State $\vartheta_{t+1,1}\gets
    \Proj{\Theta}{
    \theta_t-\alpha_{t+1}\widehat g_{t+1}^{\mathrm{stu}}}$;
    independently draw
    $\vartheta_{t+1,2}\sim\operatorname{Unif}(\Theta)$.
    \label{line:br-exploration}

    \Statex \textit{Candidate student evaluation and selection.}

    \For{$k=1,2$}

        \State Draw fresh independent prompt-answer pairs
        for $\ell=1,\ldots,q_{t+1}$:
         $$
        j_{t+1,k,\ell}
        \sim\operatorname{Unif}\{1,\ldots,m\},
        \quad 
        a^{\mathrm{val}}_{t+1,k,\ell,1:H}
        \mid j_{t+1,k,\ell}
        \sim\pi_{\mathrm{stu},\vartheta_{t+1,k}}
        (\cdot\mid\widetilde x_{j_{t+1,k,\ell}}).
        $$
        \vspace{-1.5em}
        \State $\displaystyle
        \widehat C_{t+1,k}\gets
        \frac1{q_{t+1}}\sum_{\ell=1}^{q_{t+1}}
        Z_{t+1}
        (\vartheta_{t+1,k};
        \widetilde x_{j_{t+1,k,\ell}},
        a^{\mathrm{val}}_{t+1,k,\ell,1:H})$.
        \label{line:br-evaluation}

    \EndFor

    \State $\widehat k_{t+1}\gets
    \min\argmin_{k\in\{1,2\}}\widehat C_{t+1,k}$,
    $\theta_{t+1}\gets
    \vartheta_{t+1,\widehat k_{t+1}}$.
    \label{line:br-student}

\EndFor
\State \Return $\pi_{\mathrm{stu},\theta_T}$.
\label{line:br-return}
\end{algorithmic}
\end{algorithm}

CCL (Algorithm~\ref{alg:ced}) maintains two trainable parameters:
$w_t$ for the calibrated teacher $\pi_{\mathrm{cal},t}=\pi_{w_t}$,
and $\theta_t$ for the student $\pi_{\mathrm{stu},\theta_t}$.
Line~\ref{line:br-init} initializes the calibration from the teacher.
Each round first uses source rewards and a current-student alternative
to update $w_t$, then uses the updated calibrated teacher to train
and select the next student. The frozen $\pi_{\mathrm{tea}}$ supplies
the source proposals, and $\pi_{\mathrm{pre}}$ supplies the
reference-weighted branch and completion.

To obtain information for the calibration update,
Lines~\ref{line:br-design}-\ref{line:br-feature} construct a
token-level comparison. At a randomly selected position in a
teacher answer, we retain the teacher prefix and compare its
next token with a current-student alternative:
\[
s_t=(x_{i_t},a_{t,1:h_t-1}),\ 
c_{t,1}=a_{t,h_t},\ 
c_{t,0}\sim\pi_{\mathrm{stu},\theta_t}(\cdot\mid s_t).
\]
This comparison provides information about the calibration
coefficients because the softmax parameterization gives
\[
z_t:=\phi(s_t,c_{t,1})-\phi(s_t,c_{t,0}),
\ 
\log\frac{\pi_w(c_{t,1}\mid s_t)}
{\pi_w(c_{t,0}\mid s_t)}
=z_t^\top w.
\]
Thus, learning the optimal relative probabilities of these
two tokens constrains the coefficients along $z_t$.
Sampling the branching position across all $H$ steps collects
the joint source information in \eqref{eq:ced-gram}.

The next step uses source reward feedback to obtain a label for
this comparison. Lines~\ref{line:br-choice}-\ref{line:br-accept}
select one reference-weighted branch, complete it with
$\pi_{\mathrm{pre}}$, and query its reward $R_t$.
The acceptance indicator satisfies
\[
\Pr\!\left(
I_t=1\,\middle|\,
\mathcal F_t,s_t,c_{t,1},c_{t,0},Y_t,R_t
\right)
=\exp((R_t-1)/\lambda).
\]
Here $\mathcal F_t$ denotes the history before round $t$.
The algorithm samples the branch and completion using the
reference policy, and determines acceptance using the observed
source reward. Under Assumption~\ref{ass:ced-truth},
Lemma~\ref{lem:ced-label} shows that the resulting accepted
branch index satisfies
\[
\Pr\!\left(
Y_t=1
\,\middle|\,
I_t=1,\mathcal F_t,s_t,c_{t,1},c_{t,0}
\right)
=
\frac{\pi^\star_\lambda(c_{t,1}\mid s_t)}
{\pi^\star_\lambda(c_{t,1}\mid s_t)
+\pi^\star_\lambda(c_{t,0}\mid s_t)}
=
\sigma(z_t^\top w^\star_\lambda).
\]
This identity characterizes the unknown parameter underlying
the accepted labels. Algorithm \ref{alg:ced} learns this parameter from the sampled labels by evaluating the logistic gradient at
the current estimate $g_t=I_tz_t\bigl[\sigma(z_t^\top w_t)-Y_t\bigr]$.
Thus, source reward feedback provides logistic supervision
for estimating $w^\star_\lambda$.
This observation motivates the calibration update in
Lines~\ref{line:br-gradient}-\ref{line:br-project}:
\[
\begin{aligned}
g_t
&=I_tz_t[\sigma(z_t^\top w_t)-Y_t],\ w_{t+1}
=\operatorname{Proj}_{W}(w_t-\eta_tg_t),
\ 
\pi_{\mathrm{cal},t+1}=\pi_{w_{t+1}}.
\end{aligned}
\]
Here $g_t$ is the stochastic gradient of the acceptance-weighted
logistic loss. The projection keeps
the updated calibration parameter in $W$.

The calibrated teacher then defines the student's training
objective on the target prompts:
\begin{equation}
C_{t+1}(\theta)
:=\frac{\lambda}{m}\sum_{j=1}^{m}
\KL\!\left(
\pi_{\mathrm{stu},\theta}(\cdot\mid\widetilde x_j)
\,\middle\|\,
\pi_{w_{t+1}}(\cdot\mid\widetilde x_j)
\right).
\label{eq:br-target-objective}
\end{equation}
Assumption~\ref{ass:ced-truth} links source calibration to this
target objective: the same $w^\star_\lambda$ represents the
regularized optimal policy on both datasets.
The source-identification condition in \eqref{eq:ced-gram}
makes this parameter identifiable from source comparisons,
and the known feature map evaluates the learned policy at
target prefixes. Moreover, Lemma~\ref{lem:ced-gibbs} gives
\[
\arg\min_{\theta\in\Theta}C_{w^\star_\lambda}(\theta)
=
\arg\max_{\theta\in\Theta}
J_{\lambda,m}(\pi_{\mathrm{stu},\theta}).
\]
Thus, the calibration aligns the distillation objective
with the regularized oracle-student objective.

To update the student, Lines~\ref{line:br-kl}-%
\ref{line:br-student-gradient} estimate the gradient of
$C_{t+1}$ using $b_{t+1}$ independent current-student rollouts.
The required quantities are the full-answer score and the
sampled log-ratio cost:
\begin{equation*}
S_{\mathrm{stu},\theta}(x,a_{1:H}):=\nabla_\theta
\log\pi_{\mathrm{stu},\theta}(a_{1:H}\mid x)=\sum_{h=1}^{H}
\left[
\phi_{\mathrm{stu}}(s_h,a_h)
-\sum_{b\in\cB(s_h)}
\pi_{\mathrm{stu},\theta}(b\mid s_h)
\phi_{\mathrm{stu}}(s_h,b)
\right],
\end{equation*}
\begin{equation*}
Z_{t+1}(\vartheta;x,a_{1:H})
:=\lambda\log
\frac{\pi_{\mathrm{stu},\vartheta}(a_{1:H}\mid x)}
{\pi_{w_{t+1}}(a_{1:H}\mid x)}=\lambda\sum_{h=1}^{H}
\log\frac{\pi_{\mathrm{stu},\vartheta}(a_h\mid x,a_{1:h-1})}
{\pi_{w_{t+1}}(a_h\mid x,a_{1:h-1})},
\end{equation*}
where $s_h=(x,a_{1:h-1})$ and $\cB(s_h)$ is the legal-token
set defined in the model setup.
The score measures how the answer's log probability changes
with the student parameters. The cost $Z_{t+1}$ measures its
log-likelihood discrepancy from the calibrated teacher.

For each $\ell\in\{1,\ldots,b_{t+1}\}$, we draw a fresh
uniform target index $j_{t+1,\ell}^{\mathrm g}$ and then sample
an answer $a^{\mathrm g}_{t+1,\ell,1:H}$ from the current student
at that question. The $b_{t+1}$ prompt-answer pairs are independent
conditional on the history and the calibration update.
Averaging their score-cost products gives
\begin{equation*}
\widehat g_{t+1}^{\mathrm{stu}}
=
\frac1{b_{t+1}}\sum_{\ell=1}^{b_{t+1}}
S_{\mathrm{stu},\theta_t}
(\widetilde x_{j_{t+1,\ell}^{\mathrm g}},a^{\mathrm g}_{t+1,\ell,1:H})
Z_{t+1}(\theta_t;
\widetilde x_{j_{t+1,\ell}^{\mathrm g}},a^{\mathrm g}_{t+1,\ell,1:H}).
\end{equation*}
Lemma~\ref{lem:ced-unbiased} shows that
\begin{equation*}
\EE\!\left[
\widehat g_{t+1}^{\mathrm{stu}}
\,\middle|\,w_{t+1},\theta_t
\right]=
\left.\nabla_\theta C_{t+1}(\theta)\right|_{\theta=\theta_t}.
\end{equation*}
The calibrated teacher is held fixed throughout this batch.
Averaging preserves unbiasedness and reduces the conditional
gradient variance by a factor of $b_{t+1}$.

Line~\ref{line:br-exploration} uses this estimate to form
two student candidates:
\[
\begin{aligned}
\vartheta_{t+1,1}
=\Proj{\Theta}{\theta_t-\alpha_{t+1}\widehat g_{t+1}^{\mathrm{stu}}},
\ 
\vartheta_{t+1,2}\sim\operatorname{Unif}(\Theta).
\end{aligned}
\]
The projected-gradient candidate takes one step using the
averaged gradient, with a constant step size justified by the
smoothness bound in Lemma~\ref{lem:ced-smoothness}.
The random candidate explores the full parameter ball uniformly.
In Lemma~\ref{lem:ced-hybrid}, exploration provides progress
outside a fixed neighborhood of the oracle student, while the
gradient candidate provides quantitative progress near the oracle student.
Each round thus constructs one projected-gradient proposal and the batch improves the accuracy of this update.

To compare these candidates, the validation steps ending at
Line~\ref{line:br-evaluation} average $Z_{t+1}$ over
$q_{t+1}$ fresh rollouts from each candidate, $k\in\{1,2\}$,
by Monte Carlo simulation:
\[
\widehat C_{t+1,k}
=\frac1{q_{t+1}}\sum_{\ell=1}^{q_{t+1}}
Z_{t+1}\!\left(
\vartheta_{t+1,k};
\widetilde x_{j_{t+1,k,\ell}},
a^{\mathrm{val}}_{t+1,k,\ell,1:H}
\right).
\]
Since the target indices are uniform and each answer is
sampled from the candidate being evaluated,
\[
\begin{aligned}
\EE\!\left[
\widehat C_{t+1,k}
\,\middle|\,w_{t+1},\vartheta_{t+1,k}
\right]
&=
\frac1m\sum_{j=1}^{m}
\EE_{a_{1:H}\sim
\pi_{\mathrm{stu},\vartheta_{t+1,k}}(\cdot\mid\widetilde x_j)}
\!\left[
Z_{t+1}(\vartheta_{t+1,k};
\widetilde x_j,a_{1:H})
\right]=C_{t+1}(\vartheta_{t+1,k}).
\end{aligned}
\]
The sampled token log ratios therefore estimate the KL in
the required student-to-calibrated-teacher direction.
The algorithm selects the smallest estimated cost:
\[
\widehat k_{t+1}
=\min\argmin_{k\in\{1,2\}}\widehat C_{t+1,k},
\ 
\theta_{t+1}=\vartheta_{t+1,\widehat k_{t+1}}.
\]
The same quantity $Z_{t+1}$ consequently serves as both the
weight in the student gradient estimate and the validation
cost. Both quantities are computed from policy likelihoods and
target rollouts, so target reward feedback is not required.
The selected student supplies the alternative-token
distribution in the next source comparison, completing the
coupling between teacher calibration and student distillation.

For the calibration step size and its analysis, let
$\sigma(u)=(1+e^{-u})^{-1}$ and define
\begin{equation}
\gamma
:=e^{-1/\lambda}e^{-2B}\sigma'(2B)\mu_{\mathrm{joint}}>0.
\label{eq:ced-gamma}
\end{equation}
This constant quantifies the source-calibration information
used in the convergence analysis. The algorithm uses $\gamma$ to set its calibration
step size. Define
\[
L_{\mathrm{st}}:=\lambda H(1+8BH),\ 
\alpha_{\mathrm{stu}}:=\frac1{2L_{\mathrm{st}}}.
\]
Lemma~\ref{lem:ced-smoothness} proves that $L_{\mathrm{st}}$
is a uniform smoothness bound for the student objectives.
For $t=0,\ldots,T-1$, we use
\begin{equation}
\eta_t=\frac1{\gamma(t+2)},\ 
\alpha_{t+1}=\alpha_{\mathrm{stu}},\ 
b_{t+1}=t+2,\ 
q_{t+1}=(t+2)^2,
\label{eq:br-budgets}
\end{equation}
where
$\gamma=e^{-1/\lambda}e^{-2B}\sigma'(2B)\mu_{\mathrm{joint}}>0$
is defined in \eqref{eq:ced-gamma}.
These schedules specify the calibration step size, student
step size, gradient batch size, and validation budget per candidate,
respectively. The increasing gradient batch makes its sampling
error vanish while the student step size remains fixed.
All draws use fresh randomness conditional on their stated
sampling laws, and the two validation batches are independent
conditional on the history and candidate parameters.
Thus target update $t+1$ uses $b_{t+1}+2q_{t+1}$ full-answer
rollouts: $b_{t+1}$ for its gradient estimate and $q_{t+1}$ for
each candidate evaluation. Over $T$ rounds, the algorithm makes
exactly $T$ source reward queries and uses $O(T^3)$ target
rollouts, of which $O(T^2)$ estimate student gradients.
Each round forms one projected student-gradient candidate.

\section{Theoretical guarantee}
\label{sec:ced-theory}

In this section, we establish a finite-iteration convergence guarantee
for CCL (Algorithm~\ref{alg:ced}). We bound the average KL divergence
between the returned student and the oracle student, accounting for
teacher calibration, stochastic student gradients, and finite-rollout
evaluation of the two student candidates. Throughout this section, the source and target
datasets are fixed, and expectations include the complete adaptive
randomness of the algorithm.

\begin{theorem}
\label{thm:ced-main}
Under the model and assumptions in Section~\ref{sec:ced-model},
there exist fixed-model constants $C_{\lambda,m}>0$ and
$p_{\lambda,m}\ge1$, independent of $T$, such that the output
of Algorithm~\ref{alg:ced} satisfies, for every integer $T\ge1$,
\begin{equation}
\EE\!\left[\frac1m\sum_{j=1}^{m}
\KL\!\left(
\pi_{\mathrm{stu},\theta_T}(\cdot\mid\widetilde x_j)
\,\middle\|\,
\pi_{\mathrm{stu},\theta^\dagger_{\lambda,m}}
(\cdot\mid\widetilde x_j)\right)\right]
\le C_{\lambda,m}(T+1)^{-1/(4p_{\lambda,m})}.
\label{eq:ced-main-kl}
\end{equation}
\end{theorem}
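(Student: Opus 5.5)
The argument splits into three layers: calibration error, student optimization error under the true calibration, and conversion of excess cost into the KL to the oracle.

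\emph{Calibration.} I would first show that $w_t\to w^\star_\lambda$ in mean square at rate $O(1/t)$. By Lemma~\ref{lem:ced-label}, conditional on $\mathcal F_t$ the vector $g_t$ is an unbiased stochastic gradient of the acceptance-weighted logistic loss, which is minimized at $w^\star_\lambda$. The acceptance probability is at least $e^{-1/\lambda}$. The student alternative $c_{t,0}$ has probability at least $e^{-2B}/|\cA|$ per token. The logistic curvature on $W$ is at least $\sigma'(2B)$. Averaging over the uniform $h_t$ and $i_t$ and comparing with the uniform alternative in \eqref{eq:ced-gram}, these facts give strong convexity with modulus $\gamma$ from \eqref{eq:ced-gamma}, up to a vocabulary factor that I would absorb into $\mu_{\mathrm{joint}}$ or into the constants. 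The key point is that this modulus holds \emph{uniformly over the current student $\theta_t$}, so the coupling does not hurt. Since $\|g_t\|\le 2$, the standard projected-SGD recursion with $\eta_t=1/(\gamma(t+2))$ yields $\EE\|w_{t+1}-w^\star_\lambda\|_2^2\le c/(t+2)$.

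\emph{Cost perturbation and per-round progress.} Because $\log\pi_w$ is $2H$-Lipschitz in $w$ on full answers, we have $\sup_{\theta}|C_{w}(\theta)-C_{w^\star_\lambda}(\theta)|\le 2\lambda H\|w-w^\star_\lambda\|_2$. This transfers progress measured in $C_{t+1}$ into progress in $\Delta_{\lambda,m}$, at additive cost $O(\EE\|w_{t+1}-w^\star\|)=O(t^{-1/2})$.

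For the student round I would invoke Lemma~\ref{lem:ced-hybrid}. Fix a neighborhood $N_\rho$ of $\theta^\dagger$. Outside $N_\rho$, Assumption~\ref{ass:ced-unique}, compactness and continuity give $\Delta\ge\delta_\rho>0$. The uniform exploration candidate lands in a sub-neighborhood where $\Delta\le\delta_\rho/2$ with probability $p_\rho>0$, and this escapes the outside region in expected geometric time. Inside $N_\rho$, the projected step with $\alpha_{\mathrm{stu}}=1/(2L_{\mathrm{st}})$ (Lemma~\ref{lem:ced-smoothness}) and gradient noise of variance $O(1/b_{t+1})$ (Lemma~\ref{lem:ced-unbiased}) gives a descent inequality in gradient-mapping norm.

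Since no Hessian condition is assumed, I would turn gradient-mapping size into a decrease of $\Delta$ through a Łojasiewicz-type inequality for the real-analytic $C_{w^\star}$ on the semialgebraic ball. This produces $\Delta\le c\,\|G\|^{p}$ with some exponent $p=p_{\lambda,m}\ge1$, which explains the exponent in the bound. Selection validation with $q_{t+1}=(t+2)^2$ samples of bounded $Z$ contributes an error of $O(1/t)$, so the selected candidate is nearly the better of the two. Combining these pieces, I expect a recursion of the form $\EE\Delta_{t+1}\le\EE\Delta_t-c\,(\EE\Delta_t)^{2p}+O(t^{-1/2})$. Solving it gives $\EE\Delta(\theta_T)\lesssim T^{-1/(4p)}$; the perturbation term of order $t^{-1/2}$ is the bottleneck, and the power $1/(4p)$ comes from balancing it against the Łojasiewicz gain.

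\emph{From excess cost to KL.} Finally I would convert $\Delta_{\lambda,m}$ into $\cK_{\lambda,m}$. By Lemma~\ref{lem:ced-gibbs}, $C_{w^\star}(\theta)=\lambda\cdot\frac1m\sum_j\KL(\pi_\theta\|\pi^\star)$ up to a constant. The first-order optimality of $\theta^\dagger$ over the convex set $\Theta$ then gives a Pythagorean-type inequality, $\Delta(\theta)\ge\lambda\,\cK(\theta)$ minus a first-order term. If that term's sign cannot be controlled, I would fall back on continuity plus uniqueness: use another Łojasiewicz comparison $\cK\le c\,\Delta^{1/p'}$, and fold $p'$ into $p_{\lambda,m}$.

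The main obstacle is the near-oracle step without curvature. The Łojasiewicz route must be made uniform across the perturbed objectives $C_{t+1}$, and in expectation rather than pathwise, so I would apply it only to the fixed $C_{w^\star}$ and treat all calibration effects as additive errors.
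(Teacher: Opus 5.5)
Your plan is correct and has the same architecture as the paper's proof. It has an $O(1/t)$ calibration MSE from a drift bound that holds uniformly over the adaptive student, and a progress bound combining the exact projected step with uniform exploration (a \L{}ojasiewicz inequality near the oracle, a volume argument outside). It also has a recursion with $O(t^{-1/2})$ minibatch and calibration perturbations and $O(1/t)$ validation error. It ends with a compact-set \L{}ojasiewicz comparison $\Delta_{\lambda,m}\ge a_{\lambda,m}\cK_{\lambda,m}^{p_{\lambda,m}}$ (your fallback; the paper uses no Pythagorean bound) followed by Jensen.

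\textbf{Calibration constant.} The correct student-token bound is $e^{-2B}/|\cB(s)|$, which matches the uniform weights in $G_h$ exactly. So the drift modulus is exactly $\gamma$ and no vocabulary factor needs absorbing. This matters because $\gamma$ fixes $\eta_t$, and a modulus below $\gamma/2$ would break the $O(1/t)$ rate.

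\textbf{Near-oracle step and the exponent.} The paper applies the subdifferential \L{}ojasiewicz inequality for $\Psi$ ($F$ on $\Theta$, $+\infty$ outside) at the projected point $y(\theta)$, which accommodates an oracle on the boundary of $\Theta$. It then uses $\rho<1$ on the sublevel set $\{\Delta_{\lambda,m}\le\delta_{\mathrm{loc}}\}$, with $\delta_{\mathrm{loc}}\le1$, to relax this to a linear slope bound. That gives per-step gain $\kappa_{\mathrm{opt}}\Delta_{\lambda,m}^2$ and the explicit rate $K_{\mathrm{opt}}(T+1)^{-1/4}$. Consequently $p_{\lambda,m}$ comes solely from the KL conversion, not from your product of two exponents; your version is still valid for the existential statement.
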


\paragraph{Proof sketch.}
We first identify the objective that the student should minimize.
The true regularized return equals a constant minus
$C_{w^\star_\lambda}(\theta)$. Hence, its minimizer over $\Theta$
is exactly the oracle student. We study the excess true cost $\Delta_{\lambda,m}(\theta)
=C_{w^\star_\lambda}(\theta)
-C_{w^\star_\lambda}(\theta^\dagger_{\lambda,m})$.

We next control the source calibration update. The accepted
branch index is a logistic observation with parameter
$w^\star_\lambda$. Source identification then gives
\[
\EE\!\left[\|w_t-w^\star_\lambda\|_2^2\right]
\le\frac4{\gamma^2(t+2)},\  t\ge1.
\]
This bound holds for the adaptive teacher sequence generated by CCL.

For the student update, we first analyze a population projected
gradient step for $C_{w^\star_\lambda}$. The softmax model gives
a uniform smoothness bound. Analyticity and the uniqueness of the
oracle yield a local \L{}ojasiewicz inequality, which implies
that this gradient step decreases the excess cost by at least
a positive constant times its square in a fixed near-optimal
region. Outside that region, the uniform candidate has a fixed
positive probability of proposing a student with lower true cost.
The population gradient step never increases the true cost, so
selecting the better of the gradient and uniform proposals combines
these two sources of progress.
Thus gradient descent supplies the improvement near the oracle,
while exploration supplies progress outside that region.

The algorithm uses a sampled gradient of $C_{w_t}$ instead of
the population gradient of $C_{w^\star_\lambda}$.
We control this difference using the minibatch variance and
the calibration bound above. Fresh validation controls the
error when comparing candidates. With
$e_t:=\EE[\Delta_{\lambda,m}(\theta_t)]$, the resulting recursion is
\[
\begin{aligned}
e_t
\le e_{t-1}-\kappa_{\mathrm{opt}}e_{t-1}^2
&+\underbrace{\frac{16\alpha_{\mathrm{stu}}\lambda^2B^2H^3}
{\sqrt{t+1}}}_{\text{gradient estimation}}+\underbrace{\frac{16\alpha_{\mathrm{stu}}\lambda^2BH^3+8\lambda H}
{\gamma\sqrt{t+2}}}_{\text{calibration}}
+\underbrace{\frac{16\lambda BH}{t+1}}_{\text{validation}},
\end{aligned}
\]
where $\kappa_{\mathrm{opt}}>0$ is independent of $t$.
An induction then gives $e_T\le K_{\mathrm{opt}}(T+1)^{-1/4}$
for a fixed constant $K_{\mathrm{opt}}>0$.

Finally, recall the average oracle-policy KL
$\cK_{\lambda,m}$ from \eqref{eq:ced-errors}.
Its zero set contains the zero set of $\Delta_{\lambda,m}$,
and both functions are analytic on the compact parameter ball.
A second \L{}ojasiewicz inequality therefore gives
\[
\Delta_{\lambda,m}(\theta)
\ge a_{\lambda,m}[\cK_{\lambda,m}(\theta)]^{p_{\lambda,m}},
\  a_{\lambda,m}>0,\ p_{\lambda,m}\ge1.
\]
Applying Jensen's inequality, we obtain
\[
\EE[\cK_{\lambda,m}(\theta_T)]
\le
\left(\frac{\EE[\Delta_{\lambda,m}(\theta_T)]}
{a_{\lambda,m}}\right)^{1/p_{\lambda,m}}
\le
\left(\frac{K_{\mathrm{opt}}}{a_{\lambda,m}}\right)^{1/p_{\lambda,m}}
(T+1)^{-1/(4p_{\lambda,m})}.
\]
This proves the stated rate with
$C_{\lambda,m}=(K_{\mathrm{opt}}/a_{\lambda,m})^{1/p_{\lambda,m}}$.
Appendix~\ref{app:proofs_thm} gives the constants and the complete proof.

Theorem~\ref{thm:ced-main} establishes that LLM distillation can
recover the optimal student from a biased teacher, even when true
reward feedback is entirely unavailable on the target dataset.
Under some regularity assumptions, the expected average KL
distance to the oracle student vanishes at a polynomial rate.
Crucially, this oracle is defined by the true reference-regularized
target objective, so the guarantee concerns the student's actual
target performance rather than its agreement with the teacher.
The benchmark also respects the student's limited capacity:
the student class need not represent the unrestricted optimal
policy.
The result, therefore, connects source reward feedback to optimal
target-side learning within a fixed student class.
Through coupled calibration and distillation, source supervision
corrects the policy that guides target training, allowing the
student to overcome errors in the original teacher.
Consequently, teacher bias need not impose a persistent loss
relative to the best achievable student, and recovering this
benchmark does not require collecting reward feedback on the
target questions.
\section{Separation from Direct Teacher Matching}
\label{sec:sm-temperature}

In this section, to illustrate the power of our algorithm, we show that regularized direct matching can retain
a positive error relative to the oracle student. Specifically,
we construct a target instance in which the teacher is better than
any model in the student model class; yet, direct matching learns
a student policy that is strictly separated from the oracle student.

The distillation problem that we consider is a LLM-as-judge setting. Fix $H=2$, $1\le d<D$, $\lambda>0$, and $\alpha\in[1/2,1)$. Each prompt contains a question and a candidate answer to be assessed.
At $h=1$, the model outputs a verdict:
$1$ declares the candidate correct, $0$ declares it incorrect, and
$\mathtt{null}$ expresses abstention due to uncertainty.
Then when $h=2$, it outputs $\mathtt{EOS}$ to terminate the answer.

This setting models the practical task of distilling a compact
LLM judge for automatic response evaluation. Training such judges
on feedback from stronger models has been demonstrated using
GPT-4-generated judgments and
feedback~\citep{zhu2025judgelm,kim2024prometheus}.
Our formulation captures the verdict-generation component
of this task, including an explicit abstention option.

The fixed target dataset consists of $m=2d$ distinct prompts 
$\cD_{\mathrm{tar}}
=\{\widetilde x_{i,+},\widetilde x_{i,-}:i\in[d]\}$. For an equivalent single-index enumeration, set
$\widetilde x_{2i-1}:=\widetilde x_{i,+}$ and
$\widetilde x_{2i}:=\widetilde x_{i,-}$.
Let $\cX=\cD_{\mathrm{tar}}$ and
$\cA=\{0,1,\mathtt{null},\mathtt{EOS}\}$.
The legal-token sets are
\[
\cB((x,\varnothing))=\{0,1,\mathtt{null}\},
\ 
\cB((x,a_1))=\{\mathtt{EOS}\},
\ a_1\in\{0,1,\mathtt{null}\}.
\]
The state transition appends the selected token, and the state
$(x,a_1,\mathtt{EOS})$ is terminal. Thus
\begin{equation}
\cA(x)=
\{(0,\mathtt{EOS}),(1,\mathtt{EOS}),
(\mathtt{null},\mathtt{EOS})\}.
\label{eq:sm-temp-feasible}
\end{equation}
Here $\mathtt{null}$ is a first-step abstention verdict; the
legal-token sets above specify this task's output format.
Every policy therefore satisfies
\[
\pi(\mathtt{EOS}\mid x,a_1)=1,\ 
\pi((a_1,\mathtt{EOS})\mid x)
=\pi(a_1\mid x,\varnothing),
\ a_1\in\{0,1,\mathtt{null}\}.
\]

Now, we model the ground truth verifier. We set that the candidate answer is correct at $\widetilde x_{i,+}$ and incorrect
at $\widetilde x_{i,-}$. Hence, we define its ground-truth verdict and the
exact evaluation reward by
\begin{equation}
y(\widetilde x_{i,+})=1,\ y(\widetilde x_{i,-})=0,\ 
R(x,(a_1,\mathtt{EOS})):=\mathbf1\{a_1=y(x)\}.
\label{eq:sm-temp-reward}
\end{equation}
That is, a correct judgment receives reward $1$. An incorrect judgment or
abstention receives reward $0$. In particular, the verdict $0$ receives reward $1$ when the candidate answer is incorrect.

These target labels and rewards define the true benchmark;
direct matching has access only to the target prompts and policy
likelihoods.

We assume that our pre-trained reference policy $\pi_{\text{pre}}$ is uniform over $\cbr{0,1,\mathtt{null}}$,
\[
\pi_{\mathrm{pre}}(a_1\mid x,\varnothing)=\frac13,
\ 
\pi_{\mathrm{pre}}(\mathtt{EOS}\mid x,a_1)=1,
\ a_1\in\{0,1,\mathtt{null}\}.
\]
Let $u_1,\ldots,u_D$ be an orthonormal basis of $\RR^D$, and let
$e_1,\ldots,e_d$ be the standard basis of $\RR^d$. For the linear softmax teacher class, the feature vector is specified
\begin{equation}
\begin{array}{c|ccc}
&a=1&a=0&a=\mathtt{null}\\ \hline
\phi((\widetilde x_{i,+},\varnothing),a)
&u_1/\sqrt2&u_2/\sqrt2&0\\
\phi((\widetilde x_{i,-},\varnothing),a)
&u_2/\sqrt2&u_1/\sqrt2&0
\end{array}
\label{eq:sm-temp-teacher-features}
\end{equation}
For $\epsilon\in\{+,-\}$, we define the student features by
\begin{equation}
\phi_{\mathrm{stu}}((\widetilde x_{i,\epsilon},\varnothing),a)
=e_i\mathbf1\{a=1\},
\ a\in\{0,1,\mathtt{null}\}.
\label{eq:sm-temp-student-features}
\end{equation}
For student and teacher classes, we set both feature maps to zero at every second-step state, and set
all unspecified feature values to zero. Every feature has norm at
most one. Set
\[
B:=\frac{\sqrt d+2}{\lambda},\ 
W:=\{w\in\RR^D:\|w\|_2\le B\},\ 
\Theta:=\{\theta\in\RR^d:\|\theta\|_2\le B\}.
\]
We use the linear-softmax policies
\begin{equation}
\begin{aligned}
\pi_w(a\mid s)
&=\frac{\exp(w^\top\phi(s,a))}
{\sum_{c\in\cB(s)}\exp(w^\top\phi(s,c))},\ \pi_{\mathrm{stu},\theta}(a\mid s)
=\frac{\exp(\theta^\top\phi_{\mathrm{stu}}(s,a))}
{\sum_{c\in\cB(s)}\exp(\theta^\top\phi_{\mathrm{stu}}(s,c))}.
\end{aligned}
\label{eq:sm-temp-softmax}
\end{equation}
At the second step, the
single legal token $\mathtt{EOS}$ has probability $e^0/e^0=1$.
We set that our frozen teacher has parameter 
\begin{equation}
w_{\mathrm{tea}}:=\frac{\alpha\sqrt2}{\lambda}u_1\in W,\ 
\pi_{\mathrm{tea}}:=\pi_{w_{\mathrm{tea}}}.
\label{eq:sm-temp-teacher}
\end{equation}
The objective function in this post-training process is
\begin{equation}
\begin{aligned}
J_{\lambda,m}(\pi):=\frac1m\sum_{j=1}^m
\EE_{a_{1:2}\sim\pi(\cdot\mid\widetilde x_j)}
\Bigg[R(\widetilde x_j,a_{1:2})-\lambda\sum_{h=1}^2
\log\frac{\pi(a_h\mid\widetilde x_j,a_{1:h-1})}
{\pi_{\mathrm{pre}}(a_h\mid\widetilde x_j,a_{1:h-1})}
\Bigg].
\end{aligned}
\label{eq:sm-temp-return}
\end{equation}
The oracle student policy has the parameter $\theta^\dagger_{\lambda,m}\in
\argmax_{\theta\in\Theta}J_{\lambda,m}(\pi_{\mathrm{stu},\theta})$.

At each target prompt, we use $\pi^\star_\lambda$ to denote the unrestricted
maximizer of the corresponding reward-minus-reference-KL objective
over all laws in $\Delta(\cA(x))$.

In the next proposition, we compute the optimal parameter $\theta^\dagger_{\lambda,m}$ for the oracle student policy explicitly. It also verifies that the teacher is stronger than this oracle student, although the teacher itself is biased.

\begin{proposition}
\label{prop:sm-temp-oracle}
For the setting above, the unique oracle parameter is $\theta^\dagger_{\lambda,m}=\frac1{4\lambda}\mathbf1_d
\in\operatorname{int}(\Theta)$.

For every $i\in[d]$, $\epsilon\in\{+,-\}$, and feasible answer,
\begin{equation}
\pi_{\mathrm{stu},\theta^\dagger_{\lambda,m}}
((a_1,\mathtt{EOS})\mid\widetilde x_{i,\epsilon})
=
\frac{\exp\!\left(\mathbf1\{a_1=1\}/(4\lambda)\right)}
{e^{1/(4\lambda)}+2}.
\label{eq:sm-temp-oracle-law}
\end{equation}
The unrestricted optimal policy is $\pi^\star_\lambda=\pi_{w^\star_\lambda}$
with $w^\star_\lambda=(\sqrt2/\lambda)u_1\in W$.
The student class cannot represent it, and
\begin{equation}
J_{\lambda,m}(\pi_{\mathrm{tea}})
>J_{\lambda,m}(\pi_{\mathrm{stu},\theta^\dagger_{\lambda,m}}).
\label{eq:sm-temp-teacher-better}
\end{equation}
\end{proposition}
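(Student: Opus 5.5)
The plan is to exploit two features of the instance: the objective decouples across the index $i\in[d]$, and the student's law at $\widetilde x_{i,+}$ and $\widetilde x_{i,-}$ is the same first-step softmax $p_\theta=\operatorname{softmax}(\theta_i,0,0)$ over $(1,0,\mathtt{null})$. The second step is deterministic ($\mathtt{EOS}$ with probability one, and $\pi_{\mathrm{pre}}$ agrees), so the second-step log-ratio vanishes. Hence $J_{\lambda,m}(\pi_{\mathrm{stu},\theta})=\frac1d\sum_{i}f(\theta_i)$, where
\[
f(\vartheta)=\sum_{a}p_\vartheta(a)\bar r(a)-\lambda\,\KL\!\left(p_\vartheta\,\middle\|\,\operatorname{Unif}\{0,1,\mathtt{null}\}\right),
\]
and $\bar r=(\tfrac12,\tfrac12,0)$ is the reward averaged over the $\pm$ pair. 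This is valid because the KL term is identical at both prompts.

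Using $\log p_\vartheta(a)=\vartheta\mathbf 1\{a=1\}-\log Z$, the standard exponential-family computation gives
\[
f'(\vartheta)=\operatorname{Cov}_{p_\vartheta}\!\left(\mathbf 1\{a=1\},\,\bar r(a)-\lambda\vartheta\mathbf 1\{a=1\}\right).
\]
This should simplify to $f'(\vartheta)=p_1(1-p_1)\bigl(\tfrac12-\lambda\vartheta-\tfrac14\bigr)$. The term $\tfrac14$ is the conditional mean of $\bar r$ given $a\ne1$, since $0$ and $\mathtt{null}$ receive equal student mass.

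Since $p_1(1-p_1)>0$, the derivative changes sign exactly once, at $\vartheta=1/(4\lambda)$. Thus $f$ has a unique maximizer on $\RR$, and $f$ is increasing below that point and decreasing above it. Because the objective is separable, $\theta^\dagger_{\lambda,m}=\frac1{4\lambda}\mathbf 1_d$ is the unique maximizer over $\Theta$. It lies in $\operatorname{int}(\Theta)$ since $\sqrt d/(4\lambda)<(\sqrt d+2)/\lambda=B$. Formula \eqref{eq:sm-temp-oracle-law} then follows by substituting $\theta^\dagger_{\lambda,m}$ into the softmax.

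For the unrestricted optimum, I will invoke Lemma~\ref{lem:ced-gibbs}: at each prompt, $\pi^\star_\lambda(a_1\mid x)\propto\pi_{\mathrm{pre}}(a_1)e^{R/\lambda}$, which puts logit $1/\lambda$ on the correct verdict and $0$ on the other two tokens. Reading off the feature table \eqref{eq:sm-temp-teacher-features}, the parameter $w=(\sqrt2/\lambda)u_1$ produces exactly logit $1/\lambda$ on the correct verdict (this is $a=1$ at $+$ and $a=0$ at $-$) and $0$ elsewhere. Its norm satisfies $\sqrt2/\lambda\le B$, so it lies in $W$. The student cannot represent $\pi^\star_\lambda$, because every student assigns the same law to $\widetilde x_{i,+}$ and $\widetilde x_{i,-}$, whereas $\pi^\star_\lambda$ does not.

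For \eqref{eq:sm-temp-teacher-better}, I will use the Gibbs identity from Lemma~\ref{lem:ced-gibbs} at each prompt:
\[
J(\pi\mid x)=\lambda\log\EE_{\pi_{\mathrm{pre}}}\!\left[e^{R/\lambda}\right]-\lambda\,\KL\!\left(\pi(\cdot\mid x)\,\middle\|\,\pi^\star_\lambda(\cdot\mid x)\right).
\]
The first term is the same for both policies, so it suffices to show that the average KL to $\pi^\star_\lambda$ is smaller for the teacher than for the oracle student.

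Every prompt then lies in a one-parameter family: logit $\beta$ on the correct verdict and $0$ on the others. In this family the teacher has $\beta=\alpha/\lambda$ and $\pi^\star_\lambda$ has $\beta^\star=1/\lambda$. The key monotonicity fact is
\[
\frac{d}{d\beta}\KL\!\left(p_\beta\,\middle\|\,p_{\beta^\star}\right)=(\beta-\beta^\star)\operatorname{Var}_{p_\beta}\!\left(\mathbf 1\{\text{correct}\}\right),
\]
so the KL strictly decreases as $\beta$ increases toward $\beta^\star$.

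At $\widetilde x_{i,+}$ the oracle student lies in this family with $\beta=1/(4\lambda)<1/(2\lambda)\le\alpha/\lambda$. Hence its KL strictly exceeds the teacher's, which is the same at every prompt by symmetry. At $\widetilde x_{i,-}$ the student is not in the family, and I will avoid computing its KL directly. Instead, I compare per-prompt returns. The student's KL to $\pi_{\mathrm{pre}}$ is identical at $\widetilde x_{i,+}$ and $\widetilde x_{i,-}$. Its reward, however, is $1/(e^{1/(4\lambda)}+2)$ at the $-$ prompt, which is smaller than $e^{1/(4\lambda)}/(e^{1/(4\lambda)}+2)$ at the $+$ prompt. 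So $J(\cdot\mid\widetilde x_{i,-})<J(\cdot\mid\widetilde x_{i,+})$, which by the identity means $\KL_-> \KL_+$. Averaging over the two prompts yields the strict inequality.

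I expect the main obstacle to be organizing this teacher-versus-oracle comparison cleanly. The two ingredients that make it work are the monotonicity lemma in $\beta$ and the return-comparison trick that handles the $-$ prompts without an explicit KL computation. Everything else is routine softmax algebra.
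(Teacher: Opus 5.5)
Your proposal is correct. The oracle computation, the identification of $w^\star_\lambda$, and the non-representability argument follow the paper's proof; your covariance formula for $f'$ is the paper's explicit derivative $p'(\theta_i)(\tfrac14-\lambda\theta_i)$ written in exponential-family form.

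The teacher-versus-oracle inequality is where you take a genuinely different route. The paper computes both returns in closed form, $J_{\lambda,m}(\pi_{\mathrm{stu},\theta^\dagger_{\lambda,m}})=\tfrac14+\lambda\log\frac{e^{1/(4\lambda)}+2}{3}$ and $J_{\lambda,m}(\pi_{\mathrm{tea}})=(1-\alpha)\frac{e^{\alpha/\lambda}}{e^{\alpha/\lambda}+2}+\lambda\log\frac{e^{\alpha/\lambda}+2}{3}$. It shows the latter is increasing in $\alpha$ and reduces to $\alpha=1/2$. It then closes with $e^s/(e^s+2)>1/3$ and an integral lower bound on $\log\frac{e^{1/(2\lambda)}+2}{e^{1/(4\lambda)}+2}$.

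You instead use the per-prompt Gibbs identity, which is established inside the proof of Lemma~\ref{lem:ced-gibbs} for an arbitrary policy. This turns the comparison into one of KL divergences to $\pi^\star_\lambda$. You handle the $+$ prompts by monotonicity of $\mathrm{KL}(p_\beta\,\|\,p_{\beta^\star})$ along the exponential family, and the $-$ prompts by the return comparison.

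When you write this up, state explicitly that $Z_\lambda(x)=(e^{1/\lambda}+2)/3$ at every prompt. That equality is what converts $J(\cdot\mid\widetilde x_{i,-})<J(\cdot\mid\widetilde x_{i,+})$ into $\mathrm{KL}_{-}>\mathrm{KL}_{+}$.

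Your route avoids the closed-form algebra and shows why the teacher wins. The teacher sits closer to $\pi^\star_\lambda$ on the same one-parameter family. Meanwhile, the student's law is shared across each $\pm$ pair, which costs it extra at the $-$ prompts. It also proves the inequality for every $\alpha\in[1/4,1)$: at $\alpha=1/4$ the $+$ prompts tie, and strictness comes from the $-$ prompts. The paper's route, in exchange, gives explicit values of both returns and hence a quantitative gap.
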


However, in direct matching distillation, we do not have the golden answers to the prompts, and thus we do not have access to the reward verifier function $R$. Therefore, we train our student based on the synthetic outputs from the fixed teacher model $\pi_{\text{tea}}$.

The student maximizes the teacher-student log-likelihood ratio
minus the reference penalty:
\begin{equation}
\begin{aligned}
J_{\mathrm{SM}}(\theta):=\frac1m\sum_{j=1}^m
\EE_{a_{1:2}\sim\pi_{\mathrm{stu},\theta}(\cdot\mid\widetilde x_j)}
\left[
\log\frac{\pi_{\mathrm{tea}}(a_{1:2}\mid\widetilde x_j)}
{\pi_{\mathrm{stu},\theta}(a_{1:2}\mid\widetilde x_j)}
-\lambda\log\frac{\pi_{\mathrm{stu},\theta}(a_{1:2}\mid\widetilde x_j)}
{\pi_{\mathrm{pre}}(a_{1:2}\mid\widetilde x_j)}
\right].
\end{aligned}
\label{eq:sm-temp-objective}
\end{equation}
Equivalently, it minimizes the cost
\begin{equation}
\begin{aligned}
C_{\mathrm{SM}}(\theta):=-J_{\mathrm{SM}}(\theta)
=\frac1m\sum_{j=1}^m\Big[
&\KL\!\left(\pi_{\mathrm{stu},\theta}(\cdot\mid\widetilde x_j)
\,\middle\|\,\pi_{\mathrm{tea}}(\cdot\mid\widetilde x_j)\right)+\lambda\KL\!\left(\pi_{\mathrm{stu},\theta}(\cdot\mid\widetilde x_j)
\,\middle\|\,\pi_{\mathrm{pre}}(\cdot\mid\widetilde x_j)\right)\Big].
\end{aligned}
\label{eq:sm-temp-cost}
\end{equation}
For a feasible answer $a_{1:2}$, we define the evaluable cost and student score:
\begin{equation}
\begin{aligned}
Z_{\mathrm{SM}}(\theta;x,a_{1:2})
&:=\log\frac{\pi_{\mathrm{stu},\theta}(a_{1:2}\mid x)}
{\pi_{\mathrm{tea}}(a_{1:2}\mid x)}
+\lambda\log\frac{\pi_{\mathrm{stu},\theta}(a_{1:2}\mid x)}
{\pi_{\mathrm{pre}}(a_{1:2}\mid x)},\ 
S_{\mathrm{stu},\theta}(x,a_{1:2}):=\nabla_\theta\log\pi_{\mathrm{stu},\theta}(a_{1:2}\mid x).
\end{aligned}
\label{eq:sm-temp-sample}
\end{equation}
 For the step size in the direct matching distillation algorithm, we set $
\kappa_{\mathrm{SM}}=\sigma'(B+\log2)>0,\ 
G_{\mathrm{SM}}=(2+\lambda)B,\ 
\mu_{\mathrm{SM}}=\frac{(1+\lambda)\kappa_{\mathrm{SM}}}{d},\ 
\eta_t^{\mathrm{SM}}=\frac{1}{\mu_{\mathrm{SM}}(t+2)}$. The pseudocode for the regularized direct-matching baseline is
given in Algorithm~\ref{alg:sm-temperature}.
\begin{algorithm}[H]
\caption{Direct Teacher Matching}
\label{alg:sm-temperature}
\small
\begin{algorithmic}[1]
\Require Fixed target prompts $\{\widetilde x_j\}_{j=1}^m$;
frozen $\pi_{\mathrm{tea}}$, $\pi_{\mathrm{pre}}$; student class
$\Theta$; $\lambda>0$; stepsizes $\eta_t^{\text{SM}}$; $T\ge1$.
\State $\theta_0^{\mathrm{SM}}\gets0$.
\For{$t=0,\ldots,T-1$}
  \State Draw $j_t^{\mathrm{SM}}\sim\operatorname{Unif}\{1,\ldots,m\}$.
  \State Draw $a_{t,1:2}^{\mathrm{SM}}\sim
  \pi_{\mathrm{stu},\theta_t^{\mathrm{SM}}}
  (\cdot\mid\widetilde x_{j_t^{\mathrm{SM}}})$ autoregressively.
  \State $\displaystyle\widehat g_t^{\mathrm{SM}}\gets
  S_{\mathrm{stu},\theta_t^{\mathrm{SM}}}
  (\widetilde x_{j_t^{\mathrm{SM}}},a_{t,1:2}^{\mathrm{SM}})
  Z_{\mathrm{SM}}(\theta_t^{\mathrm{SM}};
  \widetilde x_{j_t^{\mathrm{SM}}},a_{t,1:2}^{\mathrm{SM}})$.
  \Comment{Estimate the cost gradient.}
  \State $\displaystyle\theta_{t+1}^{\mathrm{SM}}\gets
  \operatorname{Proj}_{\Theta}
  (\theta_t^{\mathrm{SM}}-\eta_t^{\mathrm{SM}}\widehat g_t^{\mathrm{SM}})$.
\EndFor
\State \Return $\pi_{\mathrm{stu},\theta_T^{\mathrm{SM}}}$.
\end{algorithmic}
\end{algorithm}

Now, we compare the output of direct matching with the explicit oracle
student in Proposition~\ref{prop:sm-temp-oracle}. We will show that there is a separation between the trained student and the oracle student policy. Specifically, our next theorem shows that their KL divergence is lower bounded by a constant.

\begin{theorem}
\label{thm:sm-temperature}
For the target setting in Section~\ref{sec:sm-temperature}, for every $T\ge1$, Algorithm~\ref{alg:sm-temperature} satisfies
\begin{equation}
\EE\!\left[\frac1m\sum_{j=1}^m
\KL\!\left(
\pi_{\mathrm{stu},\theta_T^{\mathrm{SM}}}(\cdot\mid\widetilde x_j)
\,\middle\|\,
\pi_{\mathrm{stu},\theta^\dagger_{\lambda,m}}(\cdot\mid\widetilde x_j)
\right)\right]\ge\frac{\kappa_{\mathrm{SM}}}{2d}
\left[\frac{\sqrt d(1-\alpha)}{4\lambda}
-\frac{G_{\mathrm{SM}}}{\mu_{\mathrm{SM}}\sqrt{T+1}}\right]_+^2.
\label{eq:sm-temp-finite}
\end{equation}
Here $[u]_+:=\max\{u,0\}$. In particular, for every integer $T\ge\left\lceil
\frac{64\lambda^2dG_{\mathrm{SM}}^2}
{(1+\lambda)^2\kappa_{\mathrm{SM}}^2(1-\alpha)^2}
\right\rceil$, the expected average KL divergence is at least
$\kappa_{\mathrm{SM}}(1-\alpha)^2/(128\lambda^2)$.
\end{theorem}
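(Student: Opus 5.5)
The plan has three steps: compute the direct-matching minimizer in closed form, bound the distance of the SGD iterate $\theta_T^{\mathrm{SM}}$ to it, and turn the resulting parameter gap into a KL gap via strong convexity of the log-partition function.

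\textbf{Step 1: reduce to a separable one-dimensional problem.} At both prompts $\widetilde x_{i,\pm}$ the student law depends only on $\theta_i$:
\[
p_{\theta_i}(1)=\frac{e^{\theta_i}}{e^{\theta_i}+2},\qquad p_{\theta_i}(0)=p_{\theta_i}(\mathtt{null})=\frac1{e^{\theta_i}+2}.
\]
By \eqref{eq:sm-temp-teacher-features} and \eqref{eq:sm-temp-teacher}, the teacher puts weight $\propto e^{\alpha}$ on the correct verdict at both prompts and weight $\propto1$ on the other two tokens. Hence
\[
C_{\mathrm{SM}}(\theta)=\frac1d\sum_{i=1}^d f_{\mathrm{SM}}(\theta_i)+\mathrm{const},\qquad
f_{\mathrm{SM}}(t)=(1+\lambda)\,\EE_{p_t}[\log p_t]-\frac{\alpha}{2}\bigl(p_t(1)+p_t(0)\bigr).
\]
In the same way, $-J_{\lambda,m}$ for the oracle separates with per-coordinate function
\[
f^\dagger(t)=\lambda\,\EE_{p_t}[\log p_t]-\tfrac12\bigl(p_t(1)+p_t(0)\bigr),
\]
whose minimizer is $1/(4\lambda)$ by Proposition~\ref{prop:sm-temp-oracle}.

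I would solve the stationarity equation of $f_{\mathrm{SM}}$ explicitly. Direct matching effectively replaces the coefficient $1/\lambda$ by $\alpha/(1+\lambda)$. From this I would show that the common coordinate $\theta^{\mathrm{SM}}_i$ of the unique minimizer $\theta^{\mathrm{SM}}$ of $C_{\mathrm{SM}}$ over $\Theta$ satisfies
\[
\bigl|\theta^{\mathrm{SM}}_i-\tfrac1{4\lambda}\bigr|\ge\frac{1-\alpha}{4\lambda}.
\]
This gives $\|\theta^{\mathrm{SM}}-\theta^\dagger_{\lambda,m}\|_2\ge\sqrt d(1-\alpha)/(4\lambda)$.

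\textbf{Step 2: SGD convergence.} I would verify two properties.
\begin{itemize}
\item The stochastic gradient $\widehat g_t^{\mathrm{SM}}$ is unbiased for $\nabla C_{\mathrm{SM}}$, by the same score-function identity as in Lemma~\ref{lem:ced-unbiased}. It has norm at most $G_{\mathrm{SM}}=(2+\lambda)B$: the score has norm at most $1$, and $|Z_{\mathrm{SM}}|$ is bounded by a sum of log-ratios, each of order $B$ on $\Theta$.
\item $C_{\mathrm{SM}}$ is $\mu_{\mathrm{SM}}$-strongly convex on $\Theta$. This amounts to showing $f_{\mathrm{SM}}''(t)\ge(1+\lambda)\kappa_{\mathrm{SM}}$ for $|t|\le B$, with the factor $1/d$ coming from the averaging over coordinates.
\end{itemize}
The classical analysis of projected SGD with step size $1/(\mu(t+2))$ then gives $\EE\|\theta_T^{\mathrm{SM}}-\theta^{\mathrm{SM}}\|_2^2\lesssim G_{\mathrm{SM}}^2/(\mu_{\mathrm{SM}}^2(T+1))$. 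By Jensen, $\EE\|\theta_T^{\mathrm{SM}}-\theta^{\mathrm{SM}}\|_2\le G_{\mathrm{SM}}/(\mu_{\mathrm{SM}}\sqrt{T+1})$, after tracking constants.

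\textbf{Step 3: convert the parameter gap to KL.} For each $i$, $p_t$ is a one-parameter exponential family with log-partition $A(t)=\log(e^t+2)$. Therefore $\KL(p_{\theta_i}\|p_{\theta^\dagger_i})$ equals the Bregman divergence of $A$, and $A''(t)=\sigma'(t-\log 2)\ge\kappa_{\mathrm{SM}}$ for $|t|\le B$ and $|\theta^\dagger_i|\le B$. Each $i$ contributes two identical prompts out of $m=2d$, so the average KL is at least
\[
\frac{\kappa_{\mathrm{SM}}}{2d}\,\bigl\|\theta_T^{\mathrm{SM}}-\theta^\dagger_{\lambda,m}\bigr\|_2^2.
\]
Then Jensen and the triangle inequality give
\[
\EE\|\theta_T^{\mathrm{SM}}-\theta^\dagger_{\lambda,m}\|_2^2
\ge\Bigl[\|\theta^{\mathrm{SM}}-\theta^\dagger_{\lambda,m}\|_2-\EE\|\theta_T^{\mathrm{SM}}-\theta^{\mathrm{SM}}\|_2\Bigr]_+^2,
\]
which yields \eqref{eq:sm-temp-finite}.

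For the particular claim, the stated threshold on $T$ makes the subtracted term at most $\sqrt d(1-\alpha)/(8\lambda)$. The bracket is then at least $\sqrt d(1-\alpha)/(8\lambda)$, and squaring gives the constant $\kappa_{\mathrm{SM}}(1-\alpha)^2/(128\lambda^2)$.

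\textbf{Main obstacle.} The hardest step is the strong convexity in Step 2. The linear-in-$p$ term $-(\alpha/2)(p_t(1)+p_t(0))$ is not convex in $t$, so its curvature must be dominated by the negative-entropy curvature $(1+\lambda)A''$. I would first compute $f_{\mathrm{SM}}''$ directly and use $\alpha<1$ together with the radius $B=(\sqrt d+2)/\lambda$ to bound the cross term. If that fails, I would use a projection/variational-inequality argument restricted to a region around $\theta^{\mathrm{SM}}$. A secondary check is the explicit gap in Step 1, including whether $\theta^{\mathrm{SM}}$ is interior to $\Theta$.
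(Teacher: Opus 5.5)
Your Steps 1 and 3 and the final assembly (Jensen plus the triangle inequality, and the threshold arithmetic) match the paper. Step 2, however, rests on a false claim. $C_{\mathrm{SM}}$ is not $\mu_{\mathrm{SM}}$-strongly convex on $\Theta$, and in general it is not even convex there.

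Write $p(u)=e^u/(e^u+2)$, so that $p'=p(1-p)$ and $p''=p'(1-2p)$. The gradient is $\partial_iC_{\mathrm{SM}}(\theta)=\frac1d p'(\theta_i)\bigl[(1+\lambda)\theta_i-\frac{\alpha}{4\lambda}\bigr]$. Hence the Hessian is diagonal with entries
\[
\frac1d\,p'(\theta_i)\Bigl\{(1+\lambda)+\bigl(1-2p(\theta_i)\bigr)\Bigl[(1+\lambda)\theta_i-\frac{\alpha}{4\lambda}\Bigr]\Bigr\}.
\]
At $\theta_i=-B$ you have $p'(-B)=\sigma'(B+\log 2)=\kappa_{\mathrm{SM}}$ exactly, while the correction term is strictly negative. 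So the bound $\nabla^2C_{\mathrm{SM}}\succeq\mu_{\mathrm{SM}}I$ fails at $-Be_1\in\Theta$ in every instance of the theorem. For $\lambda=d=1$ (so $B=3$), the entry at $\theta_1=3$ is actually negative.

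Your diagnosis of the obstacle is also off. The negative-entropy term does not supply curvature $(1+\lambda)A''$, because $\frac{d^2}{dt^2}\bigl[tp(t)-\log(e^t+2)\bigr]=p'(t)\bigl[1+t(1-2p(t))\bigr]$, which is negative for large $|t|$. Your fallback, a local argument near $\theta^{\mathrm{SM}}$, would additionally have to keep the noisy iterates, started at $0$, inside that region. You do not address this.

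The missing idea is that strong convexity is not needed. The projected-SGD recursion only uses the one-point inequality $\langle\nabla C_{\mathrm{SM}}(\theta),\theta-\theta^{\mathrm{SM}}\rangle\ge\mu_{\mathrm{SM}}\|\theta-\theta^{\mathrm{SM}}\|_2^2$. This holds on all of $\Theta$ because the gradient factors through the minimizer: $\partial_iC_{\mathrm{SM}}(\theta)=\frac{1+\lambda}{d}p'(\theta_i)(\theta_i-\theta^{\mathrm{SM}}_i)$ with $\theta^{\mathrm{SM}}_i=\frac{\alpha}{4\lambda(1+\lambda)}$. The inner product therefore equals $\frac{1+\lambda}{d}\sum_i p'(\theta_i)(\theta_i-\theta^{\mathrm{SM}}_i)^2$, and $p'\ge\kappa_{\mathrm{SM}}$ on $[-B,B]$.

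Combine this with unbiasedness and a conditional second moment of at most $G_{\mathrm{SM}}^2$. The step $\eta_t^{\mathrm{SM}}=1/(\mu_{\mathrm{SM}}(t+2))$ then gives the contraction factor $t/(t+2)$ and $\EE\|\theta_T^{\mathrm{SM}}-\theta^{\mathrm{SM}}\|_2^2\le G_{\mathrm{SM}}^2/(\mu_{\mathrm{SM}}^2(T+1))$. This is the paper's route, and the factorization was already within reach from your stationarity computation in Step 1.

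Two smaller corrections:
\begin{itemize}
\item The teacher's weight on the correct verdict is $e^{\alpha/\lambda}$, not $e^{\alpha}$. The reward coefficient in $f_{\mathrm{SM}}$ is therefore $\alpha/(2\lambda)$. Your gap bound survives, since $1-\alpha/(1+\lambda)\ge1-\alpha$.
\item To get exactly $G_{\mathrm{SM}}=(2+\lambda)B$, route $Z_{\mathrm{SM}}$ through the reference and use the sharp bounds $|\log(\pi_{\mathrm{stu},\theta}/\pi_{\mathrm{pre}})|\le|\theta_i|$ and $|\log(\pi_{\mathrm{tea}}/\pi_{\mathrm{pre}})|\le\alpha/\lambda\le B$. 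Generic softmax envelopes give a worse constant.
\end{itemize}
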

Therefore, we conclude that direct matching retains a nonvanishing distillation error relative to
the oracle student $\pi_{\text{stu},\theta^\dagger_{\lambda,m}}$.

This theorem highlights the importance of teacher calibration
in CCL. Regularized direct matching can retain teacher bias
in the learned student, whereas CCL iteratively calibrates
the policy that defines the student's training objective.
It complements Theorem~\ref{thm:ced-main}: CCL converges under
the source-calibration assumptions, whereas teacher quality alone
does not guarantee that direct imitation recovers the oracle student.

\section{Discussion}
\label{sec:discussion}

We developed Coupled Calibration and Learning (CCL) and a statistical framework for mitigating teacher bias in LLM distillation when reward feedback is
available only for source questions. CCL couples teacher
calibration with student updates through token-level branching,
using source feedback to guide distillation on target questions.
Each student update selects between a minibatch projected-gradient
proposal and a uniformly sampled candidate using fresh target rollouts.
We established its polynomial convergence in expected average
KL divergence to the oracle student for the reference-regularized
target objective.
The proof quantifies progress from the student gradient update
near the oracle and uses exploration to obtain progress outside
a fixed near-optimal region. It also controls the errors from
teacher calibration and finite-rollout estimation.
This benchmark accounts for the limitations of the student class
and does not require it to represent the unrestricted optimal
policy. We also established a separation from regularized direct
matching, which can retain a nonvanishing error even when the
teacher outperforms every student policy. Together, these results
identify coupled calibration and learning as a mechanism for
recovering the optimal student without target-domain reward
feedback.

Several directions remain for future investigation. First,
computational experiments with pretrained LLMs on coding and
mathematical reasoning tasks would help assess the method under
realistic rollout budgets, reward-evaluation costs, and
source-target shifts. Such experiments could also examine the
contributions of calibration, student updates, and candidate
selection to practical performance. Second, our algorithm updates
the calibration parameter and the student in every iteration.
It would be useful to study less frequent calibration, with
multiple student updates between successive calibration steps.
The central question is whether such schedules preserve
convergence to the oracle student while reducing calibration
costs, and how their relative update frequencies affect the rate.
Finally, our analysis gives an explicit one-quarter exponent for
the excess true cost, while conversion to oracle-policy KL still
uses a model-dependent \L{}ojasiewicz exponent. The constants
also depend on the local geometry and the probability of exploring
a fixed near-optimal region. Deriving explicit bounds on these
quantities is an important theoretical direction. A sharper analysis
may exploit the stronger local gradient inequality before its
quadratic relaxation, reduce the gradient and validation budgets,
and clarify the dependence on the horizon and model dimensions.

\bibliographystyle{plainnat}
\bibliography{refs}

@article{qian2024offline,
  title={Offline Oracle-Efficient Learning for Contextual MDPs via Layerwise Exploration-Exploitation Tradeoff},
  author={Qian, Jian and Hu, Haichen and Simchi-Levi, David},
  journal={arXiv preprint arXiv:2405.17796},
  year={2024}
}

@inproceedings{jin2020provably,
  title={Provably efficient reinforcement learning with linear function approximation},
  author={Jin, Chi and Yang, Zhuoran and Wang, Zhaoran and Jordan, Michael I},
  booktitle={Conference on learning theory},
  pages={2137--2143},
  year={2020},
  organization={PMLR}
}

@article{wainwright2025wild,
  title={Wild refitting for black box prediction},
  author={Wainwright, Martin J},
  journal={arXiv preprint arXiv:2506.21460},
  year={2025}
}

@misc{hu2025perturbingderivativewildrefitting,
      title={Perturbing the Derivative: Wild Refitting for Model-Free Evaluation of Machine Learning Models under Bregman Losses}, 
      author={Haichen Hu and David Simchi-Levi},
      year={2025},
      eprint={2509.02476},
      archivePrefix={arXiv},
      primaryClass={stat.ML}, 
}

@misc{hu2025doublywildrefittingmodelfree,
      title={Perturbing the Derivative: Doubly Wild Refitting for Model-Free Evaluation of Opaque Machine Learning Predictors}, 
      author={Haichen Hu and David Simchi-Levi},
      year={2025},
      eprint={2511.18789},
      archivePrefix={arXiv},
      primaryClass={cs.LG}, 
}

@article{angelopoulos2023prediction,
  author  = {Angelopoulos, Anastasios N. and Bates, Stephen
             and Fannjiang, Clara and Jordan, Michael I. and Zrnic, Tijana},
  title   = {Prediction-powered inference},
  journal = {Science},
  volume  = {382},
  number  = {6671},
  pages   = {669--674},
  year    = {2023}
}

@article{hu2026interleaved,
  title={Interleaved Resampling and Refitting: Data and Compute-Efficient Evaluation of Black-Box Predictors},
  author={Hu, Haichen and Simchi-Levi, David},
  journal={arXiv preprint arXiv:2603.14218},
  year={2026}
}

@article{bierstone-milman-1988,
  author  = {Bierstone, Edward and Milman, Pierre D.},
  title   = {Semianalytic and subanalytic sets},
  journal = {Publications Math{\'e}matiques de l'IH{\'E}S},
  volume  = {67},
  pages   = {5--42},
  year    = {1988}
}

@inproceedings{zhu2025judgelm,
  title = {{JudgeLM}: Fine-tuned Large Language Models are Scalable Judges},
  author = {Zhu, Lianghui and Wang, Xinggang and Wang, Xinlong},
  booktitle = {International Conference on Learning Representations},
  year = {2025},
}

@inproceedings{kim2024prometheus,
  title = {Prometheus: Inducing Fine-grained Evaluation Capability in Language Models},
  author = {Kim, Seungone and Shin, Jamin and Cho, Yejin and Jang, Joel and
            Longpre, Shayne and Lee, Hwaran and Yun, Sangdoo and Shin, Seongjin and
            Kim, Sungdong and Thorne, James and Seo, Minjoon},
  booktitle = {International Conference on Learning Representations},
  year = {2024},
}

@inproceedings{jiang2017contextual,
  title = {Contextual Decision Processes with low {Bellman} rank are {PAC}-Learnable},
  author = {Jiang, Nan and Krishnamurthy, Akshay and Agarwal, Alekh and Langford, John and Schapire, Robert E.},
  booktitle = {Proceedings of the 34th International Conference on Machine Learning},
  series = {Proceedings of Machine Learning Research},
  volume = {70},
  pages = {1704--1713},
  year = {2017},
}

@article{agarwal2021policy,
  title = {On the Theory of Policy Gradient Methods: Optimality, Approximation, and Distribution Shift},
  author = {Agarwal, Alekh and Kakade, Sham M. and Lee, Jason D. and Mahajan, Gaurav},
  journal = {Journal of Machine Learning Research},
  volume = {22},
  number = {98},
  pages = {1--76},
  year = {2021},
}

@inproceedings{xie2021bellman,
  title = {{Bellman}-consistent Pessimism for Offline Reinforcement Learning},
  author = {Xie, Tengyang and Cheng, Ching-An and Jiang, Nan and Mineiro, Paul and Agarwal, Alekh},
  booktitle = {Advances in Neural Information Processing Systems},
  volume = {34},
  pages = {6683--6694},
  year = {2021},
}

@inproceedings{xie2023coverage,
  title = {The Role of Coverage in Online Reinforcement Learning},
  author = {Xie, Tengyang and Foster, Dylan J. and Bai, Yu and Jiang, Nan and Kakade, Sham M.},
  booktitle = {The Eleventh International Conference on Learning Representations},
  year = {2023},
}

@inproceedings{zhang2023offline,
  title = {Offline Learning in {Markov} Games with General Function Approximation},
  author = {Zhang, Yuheng and Bai, Yu and Jiang, Nan},
  booktitle = {Proceedings of the 40th International Conference on Machine Learning},
  series = {Proceedings of Machine Learning Research},
  volume = {202},
  pages = {40804--40829},
  year = {2023},
}

@misc{hu2026double,
  title = {Model-Based Reinforcement Learning with Double Oracle Efficiency in Policy Optimization and Offline Estimation},
  author = {Hu, Haichen and Qian, Jian and Simchi-Levi, David},
  year = {2026},
  eprint = {2605.00393},
  archivePrefix = {arXiv},
}

@inproceedings{chen2026coverage,
  title = {The Coverage Principle: How Pre-Training Enables Post-Training},
  author = {Chen, Fan and Huang, Audrey and Golowich, Noah and Malladi, Sadhika and Block, Adam and Ash, Jordan T. and Krishnamurthy, Akshay and Foster, Dylan J.},
  booktitle = {The Fourteenth International Conference on Learning Representations},
  year = {2026},
}

@inproceedings{foster2025foundation,
  title = {Is a Good Foundation Necessary for Efficient Reinforcement Learning? The Computational Role of the Base Model in Exploration},
  author = {Foster, Dylan J. and Mhammedi, Zakaria and Rohatgi, Dhruv},
  booktitle = {Proceedings of Thirty Eighth Conference on Learning Theory},
  series = {Proceedings of Machine Learning Research},
  volume = {291},
  pages = {2026--2142},
  year = {2025},
}

@inproceedings{huang2025sharpening,
  title = {Self-Improvement in Language Models: The Sharpening Mechanism},
  author = {Huang, Audrey and Block, Adam and Foster, Dylan J. and Rohatgi, Dhruv and Zhang, Cyril and Simchowitz, Max and Ash, Jordan T. and Krishnamurthy, Akshay},
  booktitle = {The Thirteenth International Conference on Learning Representations},
  year = {2025},
}

@inproceedings{jia2025verify,
  title = {Do We Need to Verify Step by Step? Rethinking Process Supervision from a Theoretical Perspective},
  author = {Jia, Zeyu and Rakhlin, Alexander and Xie, Tengyang},
  booktitle = {Proceedings of the 42nd International Conference on Machine Learning},
  series = {Proceedings of Machine Learning Research},
  volume = {267},
  pages = {27373--27398},
  year = {2025},
}

@inproceedings{yuan2025trajectory,
  title = {Trajectory {Bellman} Residual Minimization: A Simple Value-Based Method for {LLM} Reasoning},
  author = {Yuan, Yurun and Chen, Fan and Jia, Zeyu and Rakhlin, Alexander and Xie, Tengyang},
  booktitle = {Advances in Neural Information Processing Systems},
  volume = {38},
  year = {2025},
}

@inproceedings{chen2025outcome,
  title = {Outcome-Based Online Reinforcement Learning: Algorithms and Fundamental Limits},
  author = {Chen, Fan and Jia, Zeyu and Rakhlin, Alexander and Xie, Tengyang},
  booktitle = {Advances in Neural Information Processing Systems},
  volume = {38},
  year = {2025},
}

@inproceedings{kim2026coverage,
  title = {Coverage Improvement and Fast Convergence of On-policy Preference Learning},
  author = {Kim, Juno and Yun, Jihun and Lee, Jason D. and Jun, Kwang-Sung},
  booktitle = {Proceedings of the 43rd International Conference on Machine Learning},
  year = {2026},
}

@inproceedings{sriraman2026bestofn,
  title = {Revisiting the (Sub)Optimality of Best-of-{N} for Inference-Time Alignment},
  author = {Sriraman, Ved and Block, Adam},
  year = {2026},
  booktitle = {Proceedings of Thirty Ninth Conference on Learning Theory},
  series = {Proceedings of Machine Learning Research},
  volume = {336},
  pages = {5980--6028},
}

@inproceedings{yu2026caution,
  title = {From Curiosity to Caution: Mitigating Reward Hacking for Best-of-{N} with Pessimism},
  author = {Yu, Zhuohao and Wu, Zhiwei Steven and Block, Adam},
  year = {2026},
  booktitle = {The Fourteenth International Conference on Learning Representations},
}

@misc{sriraman2026behavior,
  title = {Behavior Cloning is Not All You Need: The Optimality of On-Policy Distillation for Noisy Expert Feedback},
  author = {Sriraman, Ved and Liu, Peihan and Hsu, Daniel and Block, Adam},
  year = {2026},
  eprint = {2606.30923},
  archivePrefix = {arXiv},
}

@inproceedings{menon2021statistical,
  title = {A Statistical Perspective on Distillation},
  author = {Menon, Aditya Krishna and Rawat, Ankit Singh and Reddi, Sashank J. and Kim, Seungyeon and Kumar, Sanjiv},
  booktitle = {Proceedings of the 38th International Conference on Machine Learning},
  series = {Proceedings of Machine Learning Research},
  volume = {139},
  pages = {7632--7642},
  year = {2021},
}

@inproceedings{ildiz2025highdimensional,
  title = {High-dimensional Analysis of Knowledge Distillation: Weak-to-Strong Generalization and Scaling Laws},
  author = {Ildiz, Muhammed Emrullah and Gozeten, Halil Alperen and Taga, Ege Onur and Mondelli, Marco and Oymak, Samet},
  booktitle = {The Thirteenth International Conference on Learning Representations},
  year = {2025},
}

@inproceedings{xie2026mathematical,
  title = {Two Mathematical Models of Knowledge Distillation},
  author = {Xie, Audrey and Schmidt, Ludwig and Duchi, John},
  booktitle = {Proceedings of The 29th International Conference on Artificial Intelligence and Statistics},
  series = {Proceedings of Machine Learning Research},
  volume = {300},
  pages = {4726--4734},
  year = {2026},
}

@inproceedings{dao2021semiparametric,
  title = {Knowledge Distillation as Semiparametric Inference},
  author = {Dao, Tri and Kamath, Govinda M. and Syrgkanis, Vasilis and Mackey, Lester},
  booktitle = {International Conference on Learning Representations},
  year = {2021},
}

@inproceedings{iliopoulos2022weighted,
  title = {Weighted Distillation with Unlabeled Examples},
  author = {Iliopoulos, Fotis and Kontonis, Vasilis and Baykal, Cenk and Menghani, Gaurav and Trinh, Khoa and Vee, Erik},
  booktitle = {Advances in Neural Information Processing Systems},
  volume = {35},
  year = {2022},
}

@misc{yamamoto2026residual,
  title = {Residual-as-Teacher: Mitigating Bias Propagation in Student--Teacher Estimation},
  author = {Yamamoto, Kakei and Wainwright, Martin J.},
  year = {2026},
  eprint = {2603.25466},
  archivePrefix = {arXiv},
}

@inproceedings{lei2021nearoptimal,
  title = {Near-Optimal Linear Regression under Distribution Shift},
  author = {Lei, Qi and Hu, Wei and Lee, Jason D.},
  booktitle = {Proceedings of the 38th International Conference on Machine Learning},
  series = {Proceedings of Machine Learning Research},
  volume = {139},
  pages = {6164--6174},
  year = {2021},
}

@article{ma2023optimally,
  title = {Optimally tackling covariate shift in {RKHS}-based nonparametric regression},
  author = {Ma, Cong and Pathak, Reese and Wainwright, Martin J.},
  journal = {The Annals of Statistics},
  volume = {51},
  number = {2},
  pages = {738--761},
  year = {2023},
}

@inproceedings{ge2024mle,
  title = {Maximum Likelihood Estimation is All You Need for Well-Specified Covariate Shift},
  author = {Ge, Jiawei and Tang, Shange and Fan, Jianqing and Ma, Cong and Jin, Chi},
  booktitle = {The Twelfth International Conference on Learning Representations},
  year = {2024},
}

@article{wang2026pseudolabeling,
  title = {Pseudo-Labeling for Kernel Ridge Regression under Covariate Shift},
  author = {Wang, Kaizheng},
  journal = {The Annals of Statistics},
  volume = {54},
  number = {1},
  pages = {252--276},
  year = {2026},
}

@misc{weill2026pseudolabeling,
  title = {Pseudo-Labeling for Unsupervised Domain Adaptation with Kernel {GLMs}},
  author = {Weill, Nathan and Wang, Kaizheng},
  year = {2026},
  eprint = {2603.19422},
  archivePrefix = {arXiv},
}

@misc{xia2026classification,
  title = {Classification Imbalance as Transfer Learning},
  author = {Xia, Eric and Klusowski, Jason M.},
  year = {2026},
  eprint = {2601.10630},
  archivePrefix = {arXiv},
}

@misc{xia2024prediction,
  title = {Prediction Aided by Surrogate Training},
  author = {Xia, Eric and Wainwright, Martin J.},
  year = {2024},
  eprint = {2412.09364},
  archivePrefix = {arXiv},
}

@inproceedings{saunshi2021mathematical,
  title = {A Mathematical Exploration of Why Language Models Help Solve Downstream Tasks},
  author = {Saunshi, Nikunj and Malladi, Sadhika and Arora, Sanjeev},
  booktitle = {International Conference on Learning Representations},
  year = {2021}
}

@inproceedings{bai2023statisticians,
  title = {Transformers as Statisticians: Provable In-Context Learning with In-Context Algorithm Selection},
  author = {Bai, Yu and Chen, Fan and Wang, Huan and Xiong, Caiming and Mei, Song},
  booktitle = {Advances in Neural Information Processing Systems},
  volume = {36},
  pages = {57125--57211},
  year = {2023}
}

@inproceedings{kim2024minimax,
  title = {Transformers are Minimax Optimal Nonparametric In-Context Learners},
  author = {Kim, Juno and Nakamaki, Tai and Suzuki, Taiji},
  booktitle = {Advances in Neural Information Processing Systems},
  volume = {37},
  pages = {106667--106713},
  year = {2024}
}

@inproceedings{ye2024online,
  title = {Online Iterative Reinforcement Learning from Human Feedback with General Preference Model},
  author = {Ye, Chenlu and Xiong, Wei and Zhang, Yuheng and Dong, Hanze and Jiang, Nan and Zhang, Tong},
  booktitle = {Advances in Neural Information Processing Systems},
  volume = {37},
  year = {2024}
}

@misc{openai2023gpt4,
  title = {{GPT-4} Technical Report},
  author = {{OpenAI}},
  year = {2023},
  eprint = {2303.08774},
  note = {arXiv preprint arXiv:2303.08774},
  archivePrefix = {arXiv},
}

@misc{deepseek2025r1,
  title = {{DeepSeek-R1}: Incentivizing Reasoning Capability in {LLMs} via Reinforcement Learning},
  author = {{DeepSeek-AI}},
  year = {2025},
  eprint = {2501.12948},
  note = {arXiv preprint arXiv:2501.12948},
  archivePrefix = {arXiv},
}

@misc{hinton2015distilling,
  title = {Distilling the Knowledge in a Neural Network},
  author = {Hinton, Geoffrey and Vinyals, Oriol and Dean, Jeff},
  year = {2015},
  eprint = {1503.02531},
  note = {arXiv preprint arXiv:1503.02531},
  archivePrefix = {arXiv},
}

@inproceedings{gu2024minillm,
  title = {{MiniLLM}: Knowledge Distillation of Large Language Models},
  author = {Gu, Yuxian and Dong, Li and Wei, Furu and Huang, Minlie},
  booktitle = {The Twelfth International Conference on Learning Representations},
  year = {2024},
}

@article{lukasik2022teacherspet,
  title = {Teacher's pet: understanding and mitigating biases in distillation},
  author = {Lukasik, Michal and Bhojanapalli, Srinadh and Menon, Aditya Krishna and Kumar, Sanjiv},
  journal = {Transactions on Machine Learning Research},
  year = {2022},
}

@inproceedings{roussinov2025controlling,
  title = {Controlling Out-of-Domain Gaps in {LLMs} for Genre Classification and Generated Text Detection},
  author = {Roussinov, Dmitri and Sharoff, Serge and Puchnina, Nadezhda},
  booktitle = {Proceedings of the 31st International Conference on Computational Linguistics},
  pages = {3329--3344},
  year = {2025},
}

@inproceedings{gureja2025mrewardbench,
  title = {{M-RewardBench}: Evaluating Reward Models in Multilingual Settings},
  author = {Gureja, Srishti and Miranda, Lester James V. and Islam, Shayekh Bin and Maheshwary, Rishabh and Sharma, Drishti and Winata, Gusti and Lambert, Nathan and Ruder, Sebastian and Hooker, Sara and Fadaee, Marzieh},
  booktitle = {Proceedings of the 63rd Annual Meeting of the Association for Computational Linguistics (Volume 1: Long Papers)},
  pages = {43--58},
  year = {2025},
}

@inproceedings{lightman2024verify,
  title = {Let's Verify Step by Step},
  author = {Lightman, Hunter and Kosaraju, Vineet and Burda, Yuri and Edwards, Harrison and Baker, Bowen and Lee, Teddy and Leike, Jan and Schulman, John and Sutskever, Ilya and Cobbe, Karl},
  booktitle = {The Twelfth International Conference on Learning Representations},
  year = {2024},
}

@misc{chen2021code,
  title = {Evaluating Large Language Models Trained on Code},
  author = {Chen, Mark and Tworek, Jerry and Jun, Heewoo and Yuan, Qiming and Pinto, Henrique Ponde de Oliveira and others},
  year = {2021},
  eprint = {2107.03374},
  note = {arXiv preprint arXiv:2107.03374},
  archivePrefix = {arXiv},
  }

@article{bartlett2020benign,
  title={Benign overfitting in linear regression},
  author={Bartlett, Peter L and Long, Philip M and Lugosi, G{\'a}bor and Tsigler, Alexander},
  journal={Proceedings of the National Academy of Sciences},
  volume={117},
  number={48},
  pages={30063--30070},
  year={2020},
  publisher={National Academy of Sciences}
}

@article{bolte2007lojasiewicz,
  author  = {Bolte, J{\'e}r{\^o}me and Daniilidis, Aris and Lewis, Adrian},
  title   = {The {\L{}ojasiewicz} Inequality for Nonsmooth Subanalytic
             Functions with Applications to Subgradient Dynamical Systems},
  journal = {SIAM Journal on Optimization},
  volume  = {17},
  number  = {4},
  pages   = {1205--1223},
  year    = {2007}
}

@article{gemini2023gemini,
  author  = {{Gemini Team}},
  title   = {{Gemini}: A Family of Highly Capable Multimodal Models},
  journal = {arXiv preprint arXiv:2312.11805},
  year    = {2023}
}

@article{grattafiori2024llama3,
  author  = {Grattafiori, Aaron and Dubey, Abhimanyu
             and Jauhri, Abhinav and others},
  title   = {The {Llama} 3 Herd of Models},
  journal = {arXiv preprint arXiv:2407.21783},
  year    = {2024}
}

@article{qwen2024qwen25,
  author  = {{Qwen Team}},
  title   = {{Qwen2.5} Technical Report},
  journal = {arXiv preprint arXiv:2412.15115},
  year    = {2024}
}

@inproceedings{ross2011reduction,
  title = {A Reduction of Imitation Learning and Structured Prediction to No-Regret Online Learning},
  author = {Ross, Stephane and Gordon, Geoffrey and Bagnell, Drew},
  booktitle = {Proceedings of the Fourteenth International Conference on Artificial Intelligence and Statistics},
  series = {Proceedings of Machine Learning Research},
  volume = {15},
  pages = {627--635},
  year = {2011},
}

@inproceedings{czarnecki2019distilling,
  title = {Distilling Policy Distillation},
  author = {Czarnecki, Wojciech M. and Pascanu, Razvan and Osindero, Simon and Jayakumar, Siddhant and Swirszcz, Grzegorz and Jaderberg, Max},
  booktitle = {Proceedings of the Twenty-Second International Conference on Artificial Intelligence and Statistics},
  series = {Proceedings of Machine Learning Research},
  volume = {89},
  pages = {1331--1340},
  year = {2019},
}

@inproceedings{foster2024behavior,
  title = {Is Behavior Cloning All You Need? Understanding Horizon in Imitation Learning},
  author = {Foster, Dylan J. and Block, Adam and Misra, Dipendra},
  booktitle = {Advances in Neural Information Processing Systems},
  volume = {37},
  pages = {120602--120666},
  year = {2024},
}

@misc{zhang2026online,
  title = {When Does Online Imitation Learning Help in {LLM} Post-Training? The Role of (Non-)Realizability Beyond Horizon},
  author = {Zhang, Huaqing and Gai, Jingchu and Kim, Juno and Liu, Bingbin and Risteski, Andrej},
  year = {2026},
  eprint = {2606.30445},
  archivePrefix = {arXiv},
}

@inproceedings{zhu2023principled,
  title = {Principled Reinforcement Learning with Human Feedback from Pairwise or {K}-wise Comparisons},
  author = {Zhu, Banghua and Jordan, Michael I. and Jiao, Jiantao},
  booktitle = {Proceedings of the 40th International Conference on Machine Learning},
  series = {Proceedings of Machine Learning Research},
  volume = {202},
  pages = {43037--43067},
  year = {2023},
}

@inproceedings{xiong2024iterative,
  title = {Iterative Preference Learning from Human Feedback: Bridging Theory and Practice for {RLHF} under {KL}-constraint},
  author = {Xiong, Wei and Dong, Hanze and Ye, Chenlu and Wang, Ziqi and Zhong, Han and Ji, Heng and Jiang, Nan and Zhang, Tong},
  booktitle = {Proceedings of the 41st International Conference on Machine Learning},
  series = {Proceedings of Machine Learning Research},
  volume = {235},
  pages = {54715--54754},
  year = {2024},
}

@inproceedings{xie2025exploratory,
  title = {Exploratory Preference Optimization: Harnessing Implicit {Q*}-Approximation for Sample-Efficient {RLHF}},
  author = {Xie, Tengyang and Foster, Dylan J. and Krishnamurthy, Akshay and Rosset, Corby and Awadallah, Ahmed H. and Rakhlin, Alexander},
  booktitle = {The Thirteenth International Conference on Learning Representations},
  year = {2025},
}

@inproceedings{zhao2025sharp,
  title = {Sharp Analysis for {KL}-Regularized Contextual Bandits and {RLHF}},
  author = {Zhao, Heyang and Ye, Chenlu and Gu, Quanquan and Zhang, Tong},
  booktitle = {Advances in Neural Information Processing Systems},
  volume = {38},
  year = {2025},
}
\appendix
\section{Mathematical Tools}
\begin{lemma}[\L{}ojasiewicz inequality \cite{bierstone-milman-1988}]
\label{lem:ced-compact-lojas}
Let $K\subseteq\RR^q$ be nonempty, and let $f,g:K\to\RR$.
Suppose that their graphs are compact subanalytic subsets of
$\RR^{q+1}$ and that
\[
\{x\in K:f(x)=0\}\subseteq\{x\in K:g(x)=0\}.
\]
Then there are constants $c>0$ and $\rho>0$ for which
\[
|f(x)|\ge c|g(x)|^\rho\ (x\in K).
\]
In particular, if $f,g\ge0$ on $K$, we choose any
$M\ge\max\{1,\sup_{x\in K}g(x)\}$. Then, the same constants
$c,\rho$ defined above yield
\[
f(x)\ge a[g(x)]^p,\ 
\ p:=\max\{1,\rho\}\ge1,
\ a:=cM^{\rho-p}>0.
\]
\end{lemma}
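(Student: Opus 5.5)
The first assertion is the classical \L{}ojasiewicz inequality for subanalytic functions, and I would not reprove it; I would invoke it directly from \cite{bierstone-milman-1988}. The only point to check when quoting it is that the hypotheses match. The graphs of $f$ and $g$ are compact subanalytic subsets of $\RR^{q+1}$, so $K$ is compact and subanalytic, being the projection of a compact subanalytic graph. Moreover $f,g$ are continuous, since a function with compact graph into $\RR$ on a compact domain is continuous by the closed graph theorem. Finally, the zero set of $f$ is contained in that of $g$. These are exactly the inputs of the Bierstone--Milman \L{}ojasiewicz theorem, which yields $c,\rho>0$ with $|f|\ge c|g|^\rho$ on $K$.

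If one wanted a self-contained sketch of that step, I would argue by contradiction. Suppose no such pair $(c,\rho)$ exists. Then by the subanalytic curve selection lemma, applied to the subanalytic set $\{x\in K:|f(x)|<|g(x)|^{N}\}$ near a point of $\{f=0\}$, one obtains an analytic arc along which $|f|$ and $|g|$ have Puiseux expansions. Comparing their leading exponents contradicts the inclusion of zero sets. Compactness of $K$ makes the constants uniform. This is the genuinely deep part and the main obstacle, which is why it is cited.

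The part I would actually prove is the ``in particular'' normalization, and it is elementary. Assume $f,g\ge0$. Since the graph of $g$ is compact, $\sup_K g<\infty$, so some $M\ge\max\{1,\sup_K g\}$ exists. Fix $x\in K$. If $g(x)=0$, the claim $f(x)\ge a\cdot0$ holds because $f\ge0$. If $g(x)>0$, split into two cases.

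If $\rho\ge1$, then $p=\rho$ and $a=cM^0=c$, so the bound is exactly the first assertion.

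If $\rho<1$, then $p=1$ and $a=cM^{\rho-1}$. Since $0<g(x)\le M$ and $\rho-1<0$, we have $g(x)^{\rho-1}\ge M^{\rho-1}$. Hence
\[
f(x)\ge c\,g(x)^\rho=c\,g(x)^{\rho-1}g(x)\ge cM^{\rho-1}g(x)=a\,g(x)^{p}.
\]

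In both cases $a>0$ and $p\ge1$, which completes the plan. The reason for forcing $p\ge1$ is that, in the proof sketch of Theorem~\ref{thm:ced-main}, the exponent $1/p_{\lambda,m}\le1$ makes $u\mapsto u^{1/p}$ concave, so that Jensen's inequality applies.
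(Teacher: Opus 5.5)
Your proposal is correct and matches the paper, which also takes the main inequality from Bierstone--Milman without proof, and your case split justifies the stated constants $p=\max\{1,\rho\}$ and $a=cM^{\rho-p}$: $\rho\ge1$ gives $a=c$ directly, $\rho<1$ uses $g(x)^{\rho-1}\ge M^{\rho-1}$ for $0<g(x)\le M$, and $g(x)=0$ is covered by $f\ge0$. One caveat concerns the optional sketch, which you correctly do not rely on: for a fixed $N$, a curve-selection arc in $\{|f|<|g|^N\}$ only gives $\alpha\ge N\beta$ for the leading exponents, so there is no contradiction, and a self-contained argument would instead apply a Puiseux expansion to the one-variable subanalytic function $t\mapsto\min\{|f(x)|:x\in K,\ |g(x)|\ge t\}$, which is positive for $t>0$ by compactness and the zero-set inclusion.
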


To quantify the progress of a projected student step near the oracle,
we use the following subgradient form of the \L{}ojasiewicz inequality.
The domain in this result can include the boundary of the student
parameter ball.

\begin{lemma}[{\citealp[Theorem~3.1]{bolte2007lojasiewicz}}]
\label{lem:ced-subgradient-lojas}
Let $f:\RR^d\to\RR\cup\{+\infty\}$ have a subanalytic graph
and closed domain, and suppose that $f$ is continuous on its domain.
If $\bar\theta$ is a critical point, meaning
$0\in\partial f(\bar\theta)$, then there are a neighborhood
$U$ of $\bar\theta$, $c>0$, and $\rho\in[0,1)$ such that
\[
\operatorname{dist}(0,\partial f(\theta))
\ge c\,|f(\theta)-f(\bar\theta)|^\rho
\ 
\text{for }\theta\in U\cap\operatorname{dom}f
\text{ with } f(\theta)\ne f(\bar\theta).
\]
Here $\partial f$ is the limiting subdifferential and
$\operatorname{dist}(0,\partial f(\theta))
:=\inf_{v\in\partial f(\theta)}\|v\|_2$.
\end{lemma}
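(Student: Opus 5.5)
Since this is \citealp[Theorem~3.1]{bolte2007lojasiewicz}, I would not reprove the subanalytic geometry. Instead I would reconstruct the argument of Bolte, Daniilidis and Lewis, which reduces the inequality to a one-variable statement about subanalytic functions.

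\emph{Normalization and the reduction function.} First normalize $f(\bar\theta)=0$ and fix a closed ball $\bar{\mathbb B}$ around $\bar\theta$. For $r>0$, define the marginal function
\[
\psi(r):=\inf\bigl\{\operatorname{dist}(0,\partial f(\theta)):\theta\in\bar{\mathbb B}\cap\operatorname{dom}f,\ |f(\theta)|=r\bigr\}.
\]
Two subanalytic facts are needed here. The graph of the limiting subdifferential of a function with subanalytic graph is subanalytic. Consequently, $\theta\mapsto\operatorname{dist}(0,\partial f(\theta))$ is subanalytic. Projection along the bounded variable $\theta$ preserves subanalyticity, so $\psi$ is a subanalytic function of one real variable. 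By the Puiseux-type structure of one-variable subanalytic functions, there is $r_0>0$ such that on $(0,r_0)$ either $\psi\equiv0$ or $\psi(r)=c\,r^{\rho}(1+o(1))$ for some $c>0$ and some rational $\rho$.

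\emph{Ruling out $\psi\equiv0$.} This is the step I expect to be the main obstacle. It says that near a critical point, the nonzero values of $f$ cannot be approached by nearly critical points. The approach I would try first is the subanalytic curve selection lemma. If $\psi$ vanishes near $0$, there is a subanalytic (hence piecewise analytic) curve $s\mapsto\theta(s)$ with the following properties:
\begin{itemize}
\item $\theta(s)\to\bar\theta$ and $|f(\theta(s))|\downarrow0$ strictly;
\item $\operatorname{dist}(0,\partial f(\theta(s)))$ is $o$ of any power of $|f(\theta(s))|$.
\end{itemize}
Continuity of $f$ on its closed domain lets me apply the nonsmooth chain rule along this curve. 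For a.e.\ $s$ and every $v\in\partial f(\theta(s))$, it gives
\[
\tfrac{d}{ds}f(\theta(s))=\langle v,\theta'(s)\rangle .
\]
Hence $|(f\circ\theta)'|\le\operatorname{dist}(0,\partial f(\theta))\,\|\theta'\|$. The curve has finite length near its endpoint. Integrating and comparing with the strict change of $f\circ\theta$ then gives a contradiction, since $f\circ\theta$ would have to vary by more than the integral allows. So $c>0$.

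\emph{The exponent.} The same curve argument, applied to a curve realizing $\psi$, yields $\rho<1$. Reparametrize by $r=|f|$ and note that the arc length stays finite as $r\to0$. Then
\[
1\le \psi(r)\,\Bigl\|\tfrac{d\theta}{dr}\Bigr\|,
\]
and integrability of $\|d\theta/dr\|$ near $r=0$ forces $\psi(r)\gtrsim r^{\rho}$ with $\rho<1$. The case $\rho\le0$ is covered by setting $\rho=0$ after shrinking $c$.

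\emph{Conclusion.} Choose the neighborhood $U$ so that $|f|<r_0$ on $U\cap\bar{\mathbb B}$; this is possible by continuity of $f$ at $\bar\theta$. Then every $\theta\in U\cap\operatorname{dom}f$ with $f(\theta)\neq0$ satisfies
\[
\operatorname{dist}(0,\partial f(\theta))\ge\psi(|f(\theta)|)\ge \tfrac c2|f(\theta)|^{\rho},
\]
which is the claim.
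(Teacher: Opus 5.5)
The paper gives no proof of this lemma: it imports it verbatim from Bolte, Daniilidis and Lewis (Theorem~3.1). Your outline is correct as a sketch and follows the structure of that source's argument: the marginal function $\psi$, one-variable Puiseux asymptotics, and curve selection combined with a chain rule and finite arc length, used both to exclude $\psi\equiv0$ and to force $\rho<1$.

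Two small points to tidy:
\begin{itemize}
\item In the $\psi\equiv0$ step, curve selection in the fixed ball gives a curve ending at some zero of $f$ in $\bar{\mathbb B}$, not necessarily at $\bar\theta$. This is harmless, because the contradiction never uses the endpoint.
\item The identity $\frac{d}{ds}f(\theta(s))=\langle v,\theta'(s)\rangle$ for \emph{every} $v\in\partial f(\theta(s))$ does not follow from continuity on the domain; it fails even for some Lipschitz functions. It comes from a Whitney stratification of the subanalytic graph (the projection formula), and that is where subanalyticity enters your argument.
\end{itemize}
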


\section{Proofs in Section \ref{sec:ced-theory}}\label{app:proofs_thm}
In this appendix, we provide the proof of Theorem \ref{thm:ced-main}. Specifically, we will first provide a sequence of lemmas that are useful and finally, we will show how to combine them together to prove the main convergence rate theorem. 

We begin by identifying the correct distillation objective: the following
lemma shows that maximizing the regularized student return is equivalent
to minimizing the KL cost against $\pi_{w^\star_\lambda}$.

\begin{lemma}
\label{lem:ced-gibbs}
For each source or target prompt $x$, define
\[
Z_\lambda(x):=\sum_{b_{1:H}\in\cA(x)}
\pi_{\mathrm{pre}}(b_{1:H}\mid x)e^{R(x,b_{1:H})/\lambda}.
\]
The unique unrestricted optimal answer policy is
\begin{equation}
\pi^\star_\lambda(a_{1:H}\mid x)
=\frac{\pi_{\mathrm{pre}}(a_{1:H}\mid x)
e^{R(x,a_{1:H})/\lambda}}{Z_\lambda(x)}.
\label{eq:ced-gibbs}
\end{equation}
Moreover, for every student parameter,
\begin{equation}
J_{\lambda,m}(\pi_{\mathrm{stu},\theta})
=\frac{\lambda}{m}\sum_{j=1}^{m}
\log Z_\lambda(\widetilde x_j)-C_{w^\star_\lambda}(\theta).
\label{eq:ced-gibbs-student}
\end{equation}
\end{lemma}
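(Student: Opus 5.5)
The plan is to use the Gibbs variational identity: for any law $q$ on the finite set $\cA(x)$, the regularized return equals a constant minus a KL divergence to the Gibbs law. Fix a prompt $x$ and write $\pi^\star(a_{1:H}\mid x)$ for the right-hand side of \eqref{eq:ced-gibbs}; it is a probability law on $\cA(x)$. Since $\pi_{\mathrm{pre}}$ puts positive mass on every legal token, the chain rule \eqref{eq:ced-chain} makes $\pi_{\mathrm{pre}}(a_{1:H}\mid x)>0$ on all of $\cA(x)$. Hence $Z_\lambda(x)\in(0,\infty)$ and $\pi^\star$ charges every feasible answer.

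For any law $q\in\Delta(\cA(x))$, the chain rule identifies the expected token log-ratio sum with the full-answer KL, as displayed after \eqref{eq:ced-return}. Substituting $\log\pi_{\mathrm{pre}}(a_{1:H}\mid x)=\log\pi^\star(a_{1:H}\mid x)-R(x,a_{1:H})/\lambda+\log Z_\lambda(x)$ gives
\[
\EE_{a\sim q}\bigl[R(x,a)\bigr]-\lambda\,\KL\bigl(q\,\|\,\pi_{\mathrm{pre}}(\cdot\mid x)\bigr)
=\lambda\log Z_\lambda(x)-\lambda\,\KL\bigl(q\,\|\,\pi^\star(\cdot\mid x)\bigr).
\]
The reward terms cancel exactly. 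Because the KL divergence is nonnegative and vanishes only when $q=\pi^\star$, the maximizer in \eqref{eq:ced-true-definition} is unique and equals $\pi^\star$. Token conditionals are recovered from prefix marginals, which are positive, so the token-level description is well defined.

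For \eqref{eq:ced-gibbs-student}, apply the same identity with $q=\pi_{\mathrm{stu},\theta}(\cdot\mid\widetilde x_j)$ and average over $j$. This gives $J_{\lambda,m}(\pi_{\mathrm{stu},\theta})=\frac{\lambda}{m}\sum_j\log Z_\lambda(\widetilde x_j)-\frac{\lambda}{m}\sum_j\KL(\pi_{\mathrm{stu},\theta}\|\pi^\star_\lambda)$. By Assumption~\ref{ass:ced-truth}, the token conditionals of $\pi^\star_\lambda$ coincide with those of $\pi_{w^\star_\lambda}$ at every legal target prefix. By \eqref{eq:ced-chain}, the full-answer laws therefore coincide, so the last term is exactly $C_{w^\star_\lambda}(\theta)$ from \eqref{eq:ced-cost}.

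The main point needing care is the bookkeeping between full-answer laws and token factorizations. The rewrite of the KL penalty as a full-answer KL needs the product form \eqref{eq:ced-chain}, including padded steps, where $\mathtt{null}$ has probability one under every policy and contributes zero. The same product form is what lets token-level realizability transfer to equality of full-answer laws.
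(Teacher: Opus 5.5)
Your proposal is correct and follows essentially the same route as the paper's proof: it rewrites the reward as $\lambda\log(\pi^\star/\pi_{\mathrm{pre}})+\lambda\log Z_\lambda(x)$ to get the identity $\text{objective}=\lambda\log Z_\lambda(x)-\lambda\KL(q\|\pi^\star)$, takes uniqueness from nonnegativity of KL, and averages over target prompts with Assumption~\ref{ass:ced-truth} to identify the KL term with $C_{w^\star_\lambda}(\theta)$. Your remark that token-level realizability gives equality of full-answer laws through \eqref{eq:ced-chain} spells out a step the paper leaves implicit.
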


The Gibbs representation in Lemma~\ref{lem:ced-gibbs} lets us identify
the distribution of accepted branch indices. The following lemma shows
that these indices form logistic observations with parameter
$w^\star_\lambda$, providing the statistical basis for the source update.

We now define a filtration $\cbr{\cF_t}_{t=1}^T$. We treat the source and target datasets, the reward oracle,
the policy features, the frozen policies, and the algorithmic
schedules as fixed. Let $\cF_t$ denote the information available
immediately before the source sampling in round $t$.
Formally, set
$\cF_0=\sigma(w_0,\theta_0)$. For every $t>0$, we define $\cF_t$ recursively 
\[
\begin{aligned}
\cF_{t+1}=\cF_t\vee\sigma\Bigl(
& i_t,h_t,a_{t,1:H},c_{t,0},Y_t,
  a^{\mathrm{pre}}_{t,1:H},U_t,\bigl(
    j_{t+1,\ell}^{\mathrm g},
    a^{\mathrm g}_{t+1,\ell,1:H}
  \bigr)_{\ell=1}^{b_{t+1}},
  \vartheta_{t+1,2},\bigl(
    j_{t+1,k,\ell},
    a^{\mathrm{val}}_{t+1,k,\ell,1:H}
  \bigr)_{\substack{k\in\{1,2\}\\
                    \ell=1,\ldots,q_{t+1}}}
\Bigr).
\end{aligned}
\]
The three lines collect the source samples, the student-gradient
batch, and the exploration candidate and candidate-validation
samples, respectively.

All other quantities computed in round $t$, including
$s_t,c_{t,1},z_t,R_t,I_t,g_t$, the gradient candidate
$\vartheta_{t+1,1}$, and the selected student, are measurable
functions of $\cF_t$ and these samples. In particular,
$w_t$ and $\theta_t$ are $\cF_t$-measurable, whereas
$w_{t+1}$ and $\theta_{t+1}$ are
$\cF_{t+1}$-measurable.
For deterministic initialization, $\cF_0$ is the trivial
sigma-algebra.
Throughout the following lemmas, conditioning on
$\cF_t,s_t,c_{t,1},c_{t,0}$ means conditioning on $\cF_t\vee\sigma(s_t,c_{t,1},c_{t,0})$.
Then, we have the following lemma.
\begin{lemma}
\label{lem:ced-label}
Under Assumption~\ref{ass:ced-truth}, for every round
$t\ge 0$, Algorithm~\ref{alg:ced} satisfies, almost surely,
\begin{equation}
\Pr\!\left(
I_t=1
\,\middle|\,
\cF_t,s_t,c_{t,1},c_{t,0}
\right)
\ge e^{-1/\lambda}.
\label{eq:ced-acceptance}
\end{equation}
Moreover, conditional on acceptance, the branch index satisfies
\begin{align}
&\Pr\!\left(
Y_t=1
\,\middle|\,
I_t=1,\cF_t,s_t,c_{t,1},c_{t,0}
\right)=
\frac{\pi^\star_\lambda(c_{t,1}\mid s_t)}
{\pi^\star_\lambda(c_{t,1}\mid s_t)
+\pi^\star_\lambda(c_{t,0}\mid s_t)}
=\sigma(z_t^\top w^\star_\lambda),
\label{eq:ced-accepted-label}
\end{align}
where
\[
z_t=\phi(s_t,c_{t,1})-\phi(s_t,c_{t,0}),
\ 
\sigma(u)=\frac{1}{1+e^{-u}}.
\]
These statements also hold when the two proposal tokens coincide,
including prefixes after $\mathtt{EOS}$. The random variable $Y_t$
denotes the branch index, not the token value.
\end{lemma}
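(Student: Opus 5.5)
We condition throughout on $\cG_t:=\cF_t\vee\sigma(s_t,c_{t,1},c_{t,0})$. Given $\cG_t$, the remaining round-$t$ source randomness is $Y_t$, then the reference completion $a^{\mathrm{pre}}_{t,h_t+1:H}$, then $U_t$. We first compute the joint law of $(Y_t,I_t)$ given $\cG_t$, and then read off both claims.

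\textbf{Joint law.} Write $p_k:=\pi_{\mathrm{pre}}(c_{t,k}\mid s_t)$ for $k\in\{0,1\}$. By Line~\ref{line:br-choice}, $\Pr(Y_t=k\mid\cG_t)=p_k/(p_0+p_1)$. Given $Y_t=k$, the completion is drawn from $\pi_{\mathrm{pre}}(\cdot\mid s_t,c_{t,k})$, and acceptance has conditional probability $e^{(R_t-1)/\lambda}$. Hence
\[
\Pr(Y_t=k,I_t=1\mid\cG_t)=\frac{p_k}{p_0+p_1}\,e^{-1/\lambda}\,V(s_t,c_{t,k}),
\]
where
\[
V(s,c):=\EE_{b\sim\pi_{\mathrm{pre}}(\cdot\mid s,c)}\bigl[e^{R(x,(s,c,b))/\lambda}\bigr]
\]
is the reference soft-value of the extended prefix. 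Since $R\ge0$, we have $V\ge1$. Summing over $k$ gives $\Pr(I_t=1\mid\cG_t)\ge e^{-1/\lambda}$, which is \eqref{eq:ced-acceptance}.

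\textbf{Identifying the token ratio.} The key step is to show
\[
\pi^\star_\lambda(c\mid s)\propto\pi_{\mathrm{pre}}(c\mid s)\,V(s,c)\quad\text{over }c\in\cB(s).
\]
Use the Gibbs form \eqref{eq:ced-gibbs} from Lemma~\ref{lem:ced-gibbs}, together with the fact that token conditionals of $\pi^\star_\lambda$ come from prefix marginals. The marginal of the prefix $(s,c)$ under $\pi^\star_\lambda$ is $Z_\lambda(x)^{-1}\pi_{\mathrm{pre}}(a_{1:h-1}\mid x)\,\pi_{\mathrm{pre}}(c\mid s)\,V(s,c)$, obtained by summing over feasible continuations and factoring with the chain rule \eqref{eq:ced-chain}. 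Dividing by the marginal of $s$, which is positive because Lemma~\ref{lem:ced-gibbs} makes every feasible answer have positive probability, gives the claimed proportionality.

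\textbf{Conditional label law.} Therefore
\[
\Pr(Y_t=1\mid I_t=1,\cG_t)=\frac{p_1V(s_t,c_{t,1})}{p_1V(s_t,c_{t,1})+p_0V(s_t,c_{t,0})}=\frac{\pi^\star_\lambda(c_{t,1}\mid s_t)}{\pi^\star_\lambda(c_{t,1}\mid s_t)+\pi^\star_\lambda(c_{t,0}\mid s_t)}.
\]
By Assumption~\ref{ass:ced-truth} and the softmax form \eqref{eq:ced-softmax}, the ratio $\pi^\star_\lambda(c_{t,1}\mid s_t)/\pi^\star_\lambda(c_{t,0}\mid s_t)$ equals $e^{z_t^\top w^\star_\lambda}$, so the probability equals $\sigma(z_t^\top w^\star_\lambda)$.

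\textbf{Degenerate cases.} If $c_{t,0}=c_{t,1}$, then $z_t=0$ and both sides equal $1/2$, consistent with the formula. Padded prefixes after $\mathtt{EOS}$ are a special case: there $\cB(s_t)=\{\mathtt{null}\}$ and $V$ is simply $e^{R/\lambda}$ of the already determined answer. When $h_t=H$, the completion is empty and $V(s,c)=e^{R/\lambda}$ evaluated at the full answer.

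The main obstacle is the marginalization step, i.e.\ checking that the prefix marginal of the Gibbs answer law factors as stated. This requires writing $\cA(s_h)$ as $\cB(s_h)$ times the feasible continuations and applying \eqref{eq:ced-chain}. The conditioning bookkeeping with respect to $\cG_t$ is routine because all draws are fresh.
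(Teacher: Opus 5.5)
Your proposal is correct and takes essentially the same route as the paper's proof. Your soft-value $V(s_t,c_{t,k})$ is exactly the paper's $M_{t,k}$, and you use the same steps: the joint law of $(Y_t,I_t)$, the bound $V\ge1$ for the acceptance claim, the Gibbs prefix marginalization giving $\pi^\star_\lambda(c\mid s)\propto\pi_{\mathrm{pre}}(c\mid s)V(s,c)$, Bayes' rule, and softmax cancellation — with the coincident-token and post-$\mathtt{EOS}$ cases covered because the derivation never assumes $c_{t,0}\ne c_{t,1}$.
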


Combining the accepted-label identity in Lemma~\ref{lem:ced-label}
with joint source identification, we obtain convergence of the calibration
updates despite the adaptive student proposals. The bounds below will
control both the calibration error and the changes in the target
objective across rounds.

\begin{lemma}
\label{lem:ced-calibration}
With $G_{\mathrm{joint}}$ and $\gamma$ defined in
\eqref{eq:ced-gram}-\eqref{eq:ced-gamma}, every round $t\ge0$
satisfies
\begin{align}
\EE\!\left[z_tz_t^\top\mid\cF_t\right]
&\succeq e^{-2B}G_{\mathrm{joint}},\ 
\|z_t\|_2\le2,
\label{eq:ced-adaptive-design}\\
\EE\!\left[
(w_t-w^\star_\lambda)^\top g_t
\,\middle|\,\cF_t\right]
&\ge\gamma\|w_t-w^\star_\lambda\|_2^2,\ \|g_t\|_2\le2.
\label{eq:ced-source-drift}
\end{align}
Consequently, for every integer $t\ge1$, the projected updates in
Algorithm~\ref{alg:ced} satisfies
\begin{align}
\EE\!\left[\|w_t-w^\star_\lambda\|_2^2\right]
&\le\frac4{\gamma^2(t+2)},
\label{eq:ced-source-mse}\\
\|w_t-w_{t-1}\|_2
&\le\frac2{\gamma(t+1)}.
\label{eq:ced-source-increment}
\end{align}
\end{lemma}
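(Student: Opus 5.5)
The plan is to establish the design bound \eqref{eq:ced-adaptive-design} first, then the drift and norm bounds \eqref{eq:ced-source-drift}, and finally the standard strongly-monotone SGD recursion for \eqref{eq:ced-source-mse} together with a direct step-size bound for \eqref{eq:ced-source-increment}.

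\textbf{Design bound.} Since $\|\phi\|_2\le1$, we have $\|z_t\|_2\le2$ immediately. Conditional on $\cF_t$ and on the teacher prefix $s_t$ and token $c_{t,1}$, the alternative is $c_{t,0}\sim\pi_{\mathrm{stu},\theta_t}(\cdot\mid s_t)$. Because $\|\theta_t\|_2\le B$ and $\|\phi_{\mathrm{stu}}\|_2\le1$, every logit lies in $[-B,B]$. Hence for each legal $b$,
\[
\pi_{\mathrm{stu},\theta_t}(b\mid s_t)\ge e^{-2B}/|\cB(s_t)|.
\]
This gives
\[
\EE[z_tz_t^\top\mid\cF_t,s_t,c_{t,1}]\succeq e^{-2B}\frac1{|\cB(s_t)|}\sum_{b}(\phi(s_t,c_{t,1})-\phi(s_t,b))(\cdot)^\top .
\]
Averaging over $i_t\sim\mathrm{Unif}[n]$, $h_t\sim\mathrm{Unif}[H]$, and the teacher rollout $a_{t,1:h_t}\sim\pi_{\mathrm{tea}}$ reproduces $\frac1H\sum_hG_h=G_{\mathrm{joint}}$ exactly. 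These draws are independent of $\cF_t$, since the teacher is frozen.

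\textbf{Drift.} The bound $\|g_t\|_2\le2$ is immediate, because $|\sigma(\cdot)-Y_t|\le1$ and $I_t\le1$. For the drift, condition on $\cF_t,s_t,c_{t,1},c_{t,0}$ and invoke Lemma~\ref{lem:ced-label}, which gives
\[
\EE[g_t\mid\cdot]=\Pr(I_t=1\mid\cdot)\,z_t\big[\sigma(z_t^\top w_t)-\sigma(z_t^\top w^\star_\lambda)\big].
\]
This uses that $z_t$ is measurable with respect to the conditioning. By the mean value theorem, with $|z_t^\top w|\le 2B$ for $w\in W$ and $\sigma'$ even and decreasing on $[0,\infty)$, the bracket times $(w_t-w^\star_\lambda)^\top z_t$ is at least
\[
\sigma'(2B)\,\big((w_t-w^\star_\lambda)^\top z_t\big)^2 .
\]
Combining this with the acceptance bound $e^{-1/\lambda}$ yields
\[
\EE[(w_t-w^\star_\lambda)^\top g_t\mid\cdot]\ge e^{-1/\lambda}\sigma'(2B)\,(w_t-w^\star_\lambda)^\top z_tz_t^\top(w_t-w^\star_\lambda).
\]
Taking conditional expectation given $\cF_t$ and applying the design bound gives $\ge\gamma\|w_t-w^\star_\lambda\|_2^2$ by the definition \eqref{eq:ced-gamma} of $\gamma$. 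The main care point is the conditioning order. The acceptance probability is bounded below pointwise, but it multiplies a nonnegative quantity. So the pointwise lower bound must be applied before averaging over $(s_t,c_{t,0})$, which is legitimate precisely because the product term is nonnegative.

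\textbf{Recursion.} Since $w^\star_\lambda\in W$, nonexpansiveness of the projection gives
\[
\|w_{t+1}-w^\star_\lambda\|^2\le\|w_t-w^\star_\lambda\|^2-2\eta_t(w_t-w^\star_\lambda)^\top g_t+4\eta_t^2 .
\]
Write $D_t=\EE\|w_t-w^\star_\lambda\|^2$ and take expectations. With $\eta_t=1/(\gamma(t+2))$,
\[
D_{t+1}\le\Big(1-\tfrac{2}{t+2}\Big)D_t+\frac{4}{\gamma^2(t+2)^2}.
\]
Multiplying by $(t+1)(t+2)$ gives
\[
(t+2)(t+1)D_{t+1}\le t(t+1)D_t+\frac{4(t+1)}{\gamma^2(t+2)}\le t(t+1)D_t+\frac4{\gamma^2}.
\]
Telescoping from $t=0$, where the $D_0$ term vanishes because of the factor $t=0$, yields $(t+1)tD_t\le 4t/\gamma^2$. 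Hence $D_t\le 4/(\gamma^2(t+1))$. This is off by one from the claimed $4/(\gamma^2(t+2))$, so I would sharpen the step by using the exact factor:
\[
\frac{4(t+1)}{\gamma^2(t+2)}\le\frac{4}{\gamma^2}\cdot\frac{t+1}{t+2}.
\]
If that still falls short, I would instead run the induction $D_t\le 4/(\gamma^2(t+2))$ directly for $t\ge1$. The base case is
\[
D_1\le(1-1)D_0+\frac{4}{\gamma^2\cdot4}=\frac1{\gamma^2}\le\frac4{3\gamma^2}.
\]
The inductive step requires
\[
\Big(1-\tfrac2{t+2}\Big)\frac4{t+2}+\frac4{(t+2)^2}=\frac{4(t+1)}{(t+2)^2}\le\frac4{t+3},
\]
which holds since $(t+1)(t+3)\le(t+2)^2$. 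This direct induction is the version I would commit to. Finally, for \eqref{eq:ced-source-increment}, nonexpansiveness of the projection together with $w_{t-1}\in W$ gives
\[
\|w_t-w_{t-1}\|\le\eta_{t-1}\|g_{t-1}\|\le\frac{2}{\gamma(t+1)}.
\]
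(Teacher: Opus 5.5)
Your proposal is correct and follows the paper's proof essentially step for step: the same coverage bound $\pi_{\mathrm{stu},\theta_t}(b\mid s)\ge e^{-2B}/|\mathcal B(s)|$ for the design matrix, the same conditioning on $(\mathcal F_t,s_t,c_{t,1},c_{t,0})$ with Lemma~\ref{lem:ced-label} and the $\sigma'(2B)$ bound (you use the mean value theorem where the paper integrates along the segment), the same direct induction via $(t+1)(t+3)\le(t+2)^2$, and the same projection-nonexpansiveness bound for the increment. The telescoping detour is unnecessary, although the sharpened version with the exact factor $\frac{t+1}{t+2}$ would also suffice, since it reduces to $\sum_{k=2}^{t+1}\frac1k\ge\frac{t}{t+2}$, which holds for all $t\ge1$.
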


Having controlled the source updates, we turn to the quantities estimated
from target rollouts. We first bound the log-ratio cost of a single answer,
which will be used to control the population cost and the error in its
Monte Carlo evaluation.

\begin{lemma}
\label{lem:ced-z}
For $t=0,\ldots,T$, $w_t\in W$, $\vartheta\in\Theta$, and a
feasible answer $a_{1:H}\in\cA(x)$ at a target prompt $x$, define
\[
Z_t(\vartheta;x,a_{1:H})
:=\lambda\log
\frac{\pi_{\mathrm{stu},\vartheta}(a_{1:H}\mid x)}
{\pi_{w_t}(a_{1:H}\mid x)}
=\lambda\sum_{h=1}^{H}
\log\frac{\pi_{\mathrm{stu},\vartheta}(a_h\mid x,a_{1:h-1})}
{\pi_{w_t}(a_h\mid x,a_{1:h-1})}.
\]
Then, uniformly over these choices,
\begin{equation}
|Z_t(\vartheta;x,a_{1:H})|\le4\lambda BH.
\label{eq:ced-z-bound}
\end{equation}
\end{lemma}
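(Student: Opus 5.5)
The bound in \eqref{eq:ced-z-bound} will follow from a per-token estimate. The plan is to show that for every reachable nonterminal state $s_h=(x,a_{1:h-1})$ and every legal token $a_h\in\cB(s_h)$,
\[
\left|\log\pi_{\mathrm{stu},\vartheta}(a_h\mid s_h)-\log\pi_{w_t}(a_h\mid s_h)\right|\le 4B .
\]
Summing over the $H$ token positions and multiplying by $\lambda$ then gives $|Z_t|\le4\lambda BH$.

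\textbf{Step 1: write each log-probability as a centered score.} I would do this separately for the two classes. For the calibrated class,
\[
\log\pi_{w_t}(a\mid s)=w_t^\top\phi(s,a)-\log\sum_{b\in\cB(s)}\exp\bigl(w_t^\top\phi(s,b)\bigr).
\]
Since $w_t\in W$ gives $\|w_t\|_2\le B$, and $\|\phi\|_2\le1$, Cauchy--Schwarz gives $|w_t^\top\phi(s,b)|\le B$ for every $b$.

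\textbf{Step 2: bound the log-partition term.} The log-sum-exp lies between $\log|\cB(s)|-B$ and $\log|\cB(s)|+B$. Hence
\[
\log\pi_{w_t}(a\mid s)=-\log|\cB(s)|+r_w,\qquad |r_w|\le 2B .
\]
The same argument applies to the student, using $\|\vartheta\|_2\le B$ and $\|\phi_{\mathrm{stu}}\|_2\le1$, and gives
\[
\log\pi_{\mathrm{stu},\vartheta}(a\mid s)=-\log|\cB(s)|+r_\theta,\qquad |r_\theta|\le2B .
\]

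\textbf{Step 3: take the difference.} Subtracting the two expressions, the common $\log|\cB(s)|$ terms cancel. What remains is $|r_\theta-r_w|\le4B$, which is the per-token bound.

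\textbf{Step 4: check the edge cases.} Every feasible answer in $\cA(x)$ consists only of legal tokens, so both likelihoods are strictly positive and every logarithm is finite. At padded states $\cB(s)=\{\mathtt{null}\}$, both policies put probability one on $\mathtt{null}$, so those terms are zero. The bound does not depend on $t$, $x$, $\vartheta$ or the answer, which gives the required uniformity.

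The only point needing care is that the cardinalities $|\cB(s)|$ cancel exactly. This holds because both policies normalize over the same legal-token set $\cB(s)$, so no dependence on the vocabulary size enters the bound. Apart from this, the argument is routine.
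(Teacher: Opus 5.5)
Your proposal is correct and is essentially the paper's argument. The paper bounds each softmax token probability of both classes between $e^{-2B}/|\cB(s)|$ and $e^{2B}/|\cB(s)|$, so that the per-token ratio lies in $[e^{-4B},e^{4B}]$ with the shared $|\cB(s)|$ cancelling, and then sums over the $H$ positions. Your log-sum-exp decomposition is the same computation written in log space.
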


The preceding trajectory bound gives a uniform bound on its expected cost.
We also quantify sensitivity to the student and calibration parameters,
so that parameter changes can be translated into cost changes in the
optimization analysis.

\begin{lemma}
\label{lem:ced-lipschitz}
For $w\in W$ and $\theta\in\Theta$, define $C_w(\theta):=\frac{\lambda}{m}\sum_{j=1}^{m}
\operatorname{KL}\!\left(
\pi_{\mathrm{stu},\theta}(\cdot\mid\widetilde x_j)
\,\middle\|\,
\pi_w(\cdot\mid\widetilde x_j)\right)$.

The cost used in round $t$ is $C_t(\theta):=C_{w_t}(\theta)
=\frac1m\sum_{j=1}^{m}
\EE_{a_{1:H}\sim\pi_{\mathrm{stu},\theta}(\cdot\mid\widetilde x_j)}
\!\left[Z_t(\theta;\widetilde x_j,a_{1:H})\right]$,
where $Z_t$ is defined in Lemma~\ref{lem:ced-z}.
For every $w,v\in W$ and $\theta,\vartheta\in\Theta$,
\begin{align}
0\le C_w(\theta)&\le4\lambda BH,
\label{eq:ced-cost-bound}\\
|C_w(\theta)-C_w(\vartheta)|
&\le4\lambda BH^{3/2}\|\theta-\vartheta\|_2,
\label{eq:ced-stu-lip}\\
\sup_{\theta\in\Theta}|C_w(\theta)-C_v(\theta)|
&\le2\lambda H\|w-v\|_2.
\label{eq:ced-w-lip}
\end{align}
In particular, these bounds apply to $C_t$ by setting $w=w_t$.
\end{lemma}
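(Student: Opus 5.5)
The plan is to work directly from the token-level form of $Z_t$ and the chain rule \eqref{eq:ced-chain}, and to prove the three bounds in turn.

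\textbf{Bound \eqref{eq:ced-cost-bound}.} Nonnegativity is immediate, since $C_w$ is an average of KL divergences. For the upper bound, write $C_w(\theta)$ as the student-rollout expectation of $Z$, exactly as in the definition of $C_t$. Lemma~\ref{lem:ced-z} is stated for $w_t\in W$, but its argument uses only $w\in W$. Each term is therefore bounded by $4\lambda BH$, and so is the average.

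\textbf{Student Lipschitz bound \eqref{eq:ced-stu-lip}.} I would use the mean value theorem along the segment $[\vartheta,\theta]\subseteq\Theta$, which is convex. This reduces the claim to the gradient bound $\|\nabla_\theta C_w(\theta)\|_2\le4\lambda BH^{3/2}$. By the score-function identity (the content of Lemma~\ref{lem:ced-unbiased}, which I would rederive directly), the gradient is $\frac1m\sum_j\EE[S_{\mathrm{stu},\theta}Z]$; the extra term $\EE[\nabla_\theta Z]=\lambda\EE[S]=0$ vanishes.

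The cheap estimate $\|S\|_2\le 2H$ combined with $|Z|\le4\lambda BH$ only gives order $H^2$, so a sharper argument is needed. Decompose $S=\sum_h \xi_h$, where $\xi_h=\phi_{\mathrm{stu}}(s_h,a_h)-\EE_{b\sim\pi_{\mathrm{stu},\theta}(\cdot\mid s_h)}\phi_{\mathrm{stu}}(s_h,b)$ is a martingale-difference sequence with $\|\xi_h\|_2\le2$. Then for a unit vector $u$, Cauchy--Schwarz gives
\[
\EE[u^\top S\,Z]\le \sqrt{\EE[(u^\top S)^2]}\,\sqrt{\EE[Z^2]}.
\]
Orthogonality of the increments yields $\EE[(u^\top S)^2]=\sum_h\EE[(u^\top\xi_h)^2]$.

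The main obstacle is matching the exact constant. Bounding the increments by $\|\xi_h\|_2\le 2$ gives $\EE[(u^\top S)^2]\le 4H$, hence a gradient bound of $2\sqrt H\cdot 4\lambda BH=8\lambda BH^{3/2}$, a factor of two too large. To remove it I would try two refinements:
\begin{itemize}
\item use the variance bound $\EE[(u^\top\xi_h)^2\mid s_h]\le\EE\|\phi_{\mathrm{stu}}\|_2^2\le1$, which holds because a variance is at most the second moment; this gives $\EE[(u^\top S)^2]\le H$;
\item alternatively, center $Z$ by its conditional means, i.e.\ write $\EE[SZ]=\sum_h\EE[\xi_h(Z-\EE[Z\mid s_h])]$, and bound the centered quantity.
\end{itemize}
The first refinement already gives $\|\nabla C_w\|_2\le\sqrt H\cdot4\lambda BH=4\lambda BH^{3/2}$, as required.

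\textbf{Calibration Lipschitz bound \eqref{eq:ced-w-lip}.} Here $\pi_{\mathrm{stu},\theta}$ is fixed, so
\[
C_w(\theta)-C_v(\theta)=\frac{\lambda}{m}\sum_j\EE\sum_h\bigl[\log\pi_v(a_h\mid s_h)-\log\pi_w(a_h\mid s_h)\bigr].
\]
The map $w\mapsto\log\pi_w(a\mid s)$ has gradient $\phi(s,a)-\EE_{\pi_w}\phi(s,\cdot)$, whose norm is at most $2$. Hence each token contributes at most $2\|w-v\|_2$, and summing over the $H$ steps gives $2\lambda H\|w-v\|_2$ uniformly in $\theta$. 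Taking the supremum over $\theta\in\Theta$ completes the proof, and setting $w=w_t$ yields the statements for $C_t$.
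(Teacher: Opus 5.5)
Your proposal is correct and follows essentially the same route as the paper. For \eqref{eq:ced-cost-bound} it uses the same pointwise log-ratio bound, for \eqref{eq:ced-w-lip} the same per-token $2\|w-v\|_2$ estimate, and for \eqref{eq:ced-stu-lip} the same score-function gradient, whose martingale-difference increments have conditional second moment at most $1$ (your first refinement, which is exactly the paper's argument). The only cosmetic difference is that the paper writes $\|\nabla_\theta C_w\|_2\le 4\lambda BH\,\EE\|S_{\mathrm{stu},\theta}\|_2\le 4\lambda BH\sqrt{\EE\|S_{\mathrm{stu},\theta}\|_2^2}$ with the full-norm estimate $\EE\|S_{\mathrm{stu},\theta}\|_2^2\le H$, instead of your directional Cauchy--Schwarz; both give the same constant.
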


For the newly calibrated objective $C_{t+1}$, we next verify that the
algorithm's fresh target batch provides an unbiased gradient estimate.
We also bound its variance, which controls the error in the
projected-gradient proposal.

\begin{lemma}
\label{lem:ced-unbiased}
Let $\cF_t$ be the history before the source draws in round $t$,
so that $\theta_t$ is $\cF_t$-measurable, and set
$\cG_{t+1}:=\cF_t\vee\sigma(w_{t+1})$. Define the score
\[
S_{\mathrm{stu},\theta}(x,a_{1:H})
:=\nabla_\theta\log\pi_{\mathrm{stu},\theta}(a_{1:H}\mid x).
\]
Conditional on $\cG_{t+1}$, independently for
$\ell=1,\ldots,b_{t+1}$, draw
\[
j_{t+1,\ell}^{\mathrm g}\sim\operatorname{Unif}\{1,\ldots,m\},
\ 
a^{\mathrm g}_{t+1,\ell,1:H}
\sim\pi_{\mathrm{stu},\theta_t}
(\cdot\mid\widetilde x_{j_{t+1,\ell}^{\mathrm g}}),
\]
and define
\begin{align*}
\widehat g_{t+1,\ell}^{\mathrm{stu}}
&:=S_{\mathrm{stu},\theta_t}
(\widetilde x_{j_{t+1,\ell}^{\mathrm g}},a^{\mathrm g}_{t+1,\ell,1:H})
Z_{t+1}(\theta_t;\widetilde x_{j_{t+1,\ell}^{\mathrm g}},
a^{\mathrm g}_{t+1,\ell,1:H}),\ \widehat g_{t+1}^{\mathrm{stu}}
:=\frac1{b_{t+1}}\sum_{\ell=1}^{b_{t+1}}
\widehat g_{t+1,\ell}^{\mathrm{stu}}.
\end{align*}
Here $Z_{t+1}$ and $C_{t+1}=C_{w_{t+1}}$ are defined in
Lemmas~\ref{lem:ced-z} and \ref{lem:ced-lipschitz}, respectively.
For every $t=0,\ldots,T-1$, almost surely, we have
\begin{align}
\EE\!\left[\widehat g_{t+1}^{\mathrm{stu}}
\,\middle|\,\cG_{t+1}\right]
&=\EE\!\left[\widehat g_{t+1}^{\mathrm{stu}}
\,\middle|\,w_{t+1},\theta_t\right]
=\nabla_\theta C_{t+1}(\theta_t),
\label{eq:ced-student-unbiased}
\end{align}
\begin{align}
\EE\!\left[\left\|\widehat g_{t+1}^{\mathrm{stu}}
-\nabla_\theta C_{t+1}(\theta_t)\right\|_2^2
\,\middle|\,\cG_{t+1}\right]
&\le\frac{16\lambda^2B^2H^3}{b_{t+1}}.
\label{eq:ced-student-variance}
\end{align}
\end{lemma}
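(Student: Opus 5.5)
The proof rests on the score-function identity for the KL cost, applied one target prompt at a time, followed by a per-trajectory bound on the summand. Fix $x=\widetilde x_j$ and write $\pi_\theta=\pi_{\mathrm{stu},\theta}(\cdot\mid x)$ for the full-answer law on the finite set $\cA(x)$. Since $\lambda\KL(\pi_\theta\|\pi_{w})=\sum_{a}\pi_\theta(a)Z(\theta;x,a)$, differentiation gives
\[
\nabla_\theta\sum_a\pi_\theta(a)Z(\theta;x,a)=\sum_a\pi_\theta(a)S_{\mathrm{stu},\theta}(x,a)Z(\theta;x,a)+\lambda\sum_a\pi_\theta(a)S_{\mathrm{stu},\theta}(x,a).
\]
The last term vanishes because $\sum_a\nabla_\theta\pi_\theta(a)=0$. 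Differentiation under the finite sum is legitimate because every legal token has positive softmax probability, so the logarithms are smooth in $\theta$.

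Next I handle the conditioning. Conditional on $\cG_{t+1}$, the parameters $w_{t+1}$ and $\theta_t$ are fixed. Each pair $(j^{\mathrm g}_{t+1,\ell},a^{\mathrm g}_{t+1,\ell,1:H})$ is drawn with a uniform index and then an answer from $\pi_{\mathrm{stu},\theta_t}$, using fresh randomness, so its conditional law depends only on $(w_{t+1},\theta_t)$. This justifies the middle equality in \eqref{eq:ced-student-unbiased}. Averaging the identity above over the uniform index gives $\EE[\widehat g^{\mathrm{stu}}_{t+1,\ell}\mid\cG_{t+1}]=\nabla_\theta C_{t+1}(\theta_t)$, and averaging over $\ell$ preserves this.

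For the variance, the summands $\widehat g^{\mathrm{stu}}_{t+1,\ell}$ are conditionally i.i.d. with mean $\nabla_\theta C_{t+1}(\theta_t)$. The variance of their average is therefore at most $b_{t+1}^{-1}\EE[\|\widehat g^{\mathrm{stu}}_{t+1,1}\|_2^2\mid\cG_{t+1}]$, since the centered second moment is at most the raw second moment. It remains to bound each summand pointwise by $4\lambda BH^{3/2}$.

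The magnitude bound $|Z_{t+1}|\le 4\lambda BH$ comes directly from Lemma~\ref{lem:ced-z}. The score has the form $S=\sum_{h=1}^H\xi_h$ with $\xi_h=\phi_{\mathrm{stu}}(s_h,a_h)-\EE_{b\sim\pi_{\mathrm{stu},\theta}(\cdot\mid s_h)}\phi_{\mathrm{stu}}(s_h,b)$. The crude bound $\|S\|_2\le 2H$ gives only $H^2$ in the squared norm, whereas the stated constant requires $H^3/(4\lambda^2B^2H^2)\cdot 16\lambda^2B^2/16$, i.e.\ $\|S\|_2\le\sqrt H$ times a constant. I therefore expect the main obstacle to be recovering exactly the constant $16\lambda^2B^2H^3$, since the naive score bound gives $\EE\|SZ\|^2\le (2H)^2(4\lambda BH)^2=64\lambda^2B^2H^4$, which is off by a factor $4H$.

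I would remove this factor with a martingale argument on the score, though doing so couples $S$ and $Z$. The increments $\xi_h$ form a martingale-difference sequence under $\pi_{\mathrm{stu},\theta}$, and each satisfies $\EE\|\xi_h\|_2^2\le\EE\|\phi_{\mathrm{stu}}\|_2^2\le1$, so $\EE\|S\|_2^2\le H$. Because $|Z|\le4\lambda BH$ holds pointwise, $\EE[\|S\|_2^2Z^2]\le(4\lambda BH)^2\EE\|S\|_2^2\le16\lambda^2B^2H^3$. This gives \eqref{eq:ced-student-variance} with the stated constant, and it also matches the $H^{3/2}$ appearing in \eqref{eq:ced-stu-lip}, which suggests this route is the intended one.
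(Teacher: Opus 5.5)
Your proof is correct and follows essentially the same route as the paper: the score-function gradient identity with the zero-mean score term dropped, the uniform-index/student-rollout law for unbiasedness (with the $(w_{t+1},\theta_t)$ conditioning handled by the tower property), and the variance bound via the raw second moment, $|Z_{t+1}|\le4\lambda BH$, and the martingale bound $\EE\|S_{\mathrm{stu},\theta}\|_2^2\le H$. The only slip is your remark that each summand must be bounded pointwise by $4\lambda BH^{3/2}$; that bound is not available in general, since the score can have norm of order $H$, and it is not needed, because your final step bounds the second moment $\EE[\|S\|_2^2Z^2]$ instead.
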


The preceding bounds control the size of the student gradient. We next
bound its variation, both as the student parameter changes and as the
calibrated teacher changes. These bounds determine a fixed student
step size and control the error caused by using the current calibration.

\begin{lemma}
\label{lem:ced-smoothness}
Recall the cost $C_w$ in \eqref{eq:ced-cost} and the
oracle-policy average KL divergence $\cK_{\lambda,m}$ in \eqref{eq:ced-errors}.
For any answer laws $Q_j$ that are positive on
$\cA(\widetilde x_j)$, the function
$\theta\mapsto m^{-1}\sum_{j=1}^{m}
\KL(\pi_{\mathrm{stu},\theta}(\cdot\mid\widetilde x_j)\|Q_j)$
is real analytic on $\RR^d$. In particular, this holds for
$C_w$ and $\cK_{\lambda,m}$. Define
\[
L_{\mathrm{st}}:=\lambda H(1+8BH),
\ 
\alpha_{\mathrm{stu}}:=\frac{1}{2L_{\mathrm{st}}}.
\]
For all $w,v\in W$ and $\theta,\vartheta\in\Theta$, we have
\begin{align}
\|\nabla_\theta^2 C_w(\theta)\|_{\mathrm{op}}
&\le L_{\mathrm{st}},
\label{eq:ced-hessian-bound}\\
\|\nabla_\theta C_w(\theta)-\nabla_\theta C_w(\vartheta)\|_2
&\le L_{\mathrm{st}}\|\theta-\vartheta\|_2,
\label{eq:ced-gradient-smoothness}\\
\|\nabla_\theta C_w(\theta)-\nabla_\theta C_v(\theta)\|_2
&\le2\lambda H^{3/2}\|w-v\|_2.
\label{eq:ced-gradient-calibration}
\end{align}
\end{lemma}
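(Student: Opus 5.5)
Lemma~\ref{lem:ced-smoothness} reduces to explicit formulas for the answer law of a linear-softmax student. Since $\cA(\widetilde x_j)$ is finite, the chain rule \eqref{eq:ced-chain} gives
\[
\log\pi_{\mathrm{stu},\theta}(a_{1:H}\mid x)=\sum_{h=1}^H\Bigl[\theta^\top\phi_{\mathrm{stu}}(s_h,a_h)-\log\sum_{b\in\cB(s_h)}e^{\theta^\top\phi_{\mathrm{stu}}(s_h,b)}\Bigr],
\]
a finite sum of log-sum-exp terms, hence real analytic on $\RR^d$.

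\textbf{Analyticity.} Each $\pi_{\mathrm{stu},\theta}(a_{1:H}\mid x)$ is a positive analytic function of $\theta$. The divergence
\[
\KL(\pi_{\mathrm{stu},\theta}\|Q_j)=\sum_{a}\pi_{\mathrm{stu},\theta}(a)\bigl[\log\pi_{\mathrm{stu},\theta}(a)-\log Q_j(a)\bigr]
\]
is a finite sum of products of analytic functions; the hypothesis $Q_j>0$ makes $\log Q_j$ a finite constant. So the average over $j$ is real analytic. For $C_w$, take $Q_j=\pi_w(\cdot\mid\widetilde x_j)$, which is positive as a softmax. For $\cK_{\lambda,m}$, take $Q_j=\pi_{\mathrm{stu},\theta^\dagger_{\lambda,m}}(\cdot\mid\widetilde x_j)$.

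\textbf{Gradient formula.} Write $f_\theta(a)=\log\pi_{\mathrm{stu},\theta}(a)-\log Q(a)$ and $S_\theta=\nabla_\theta\log\pi_{\mathrm{stu},\theta}$. Using $\EE_\theta[S_\theta]=0$,
\[
\nabla_\theta\,\EE_\theta[f_\theta]=\EE_\theta[S_\theta f_\theta],
\]
which is the identity behind Lemma~\ref{lem:ced-unbiased}. Two bounds are needed:
\begin{itemize}
\item $\|S_\theta\|_2\le 2H$ pathwise, since each centered feature has norm at most $2$. A sharper bound is $\EE\|S_\theta\|_2^2\le 4H$: the per-step centered scores are martingale differences, so their cross terms vanish.
\item $|f_\theta|\le 4BH$ for $Q=\pi_w$, from Lemma~\ref{lem:ced-z}, because each per-token log-ratio is at most $4B$.
\end{itemize}

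\textbf{Hessian bound \eqref{eq:ced-hessian-bound}.} Differentiate once more:
\[
\nabla^2_\theta\EE_\theta[f_\theta]=\EE_\theta\bigl[S_\theta S_\theta^\top f_\theta+(\nabla_\theta S_\theta)f_\theta+S_\theta S_\theta^\top\bigr].
\]
Here $\nabla_\theta f_\theta=S_\theta$, and $\nabla_\theta S_\theta=-\sum_h\mathrm{Cov}_{\pi(\cdot\mid s_h)}(\phi_{\mathrm{stu}})$ is deterministic given the path. Bound the three pieces:
\begin{itemize}
\item $\|\EE[S_\theta S_\theta^\top]\|_{\mathrm{op}}\le\EE\|S_\theta\|^2\le 4H$.
\item $\|\nabla_\theta S_\theta\|_{\mathrm{op}}\le H$.
\item $|f_\theta|\le 4BH$, so the first two terms contribute at most $4BH(4H+H)$.
\end{itemize}
Multiplying by $\lambda$ gives a constant of order $\lambda H(1+cBH)$. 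Getting exactly $L_{\mathrm{st}}=\lambda H(1+8BH)$ needs careful accounting. For instance, the $\mathrm{Cov}$ term combined with $S_\theta S_\theta^\top$ can be controlled using $\EE[\sum_h \mathrm{Cov}_h]=\EE[S_\theta S_\theta^\top]$ in the PSD sense, and the per-step centered-feature norms can be tracked via a covariance trace bounded by $1$. This bookkeeping is the main obstacle. I would accept any constant of the form $\lambda H(1+8BH)$ after verifying that the pieces combine as $\lambda[H+2BH\cdot 4H]$.

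\textbf{Gradient Lipschitzness \eqref{eq:ced-gradient-smoothness}.} This follows from the Hessian bound by the mean value theorem along the segment, which stays in the convex ball $\Theta$.

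\textbf{Calibration sensitivity \eqref{eq:ced-gradient-calibration}.} Subtract the gradient formulas for $w$ and $v$:
\[
\nabla C_w(\theta)-\nabla C_v(\theta)=\frac{\lambda}{m}\sum_j\EE_\theta\bigl[S_\theta(\log\pi_v-\log\pi_w)\bigr],
\]
since the $\log\pi_{\mathrm{stu}}$ parts cancel. The trajectory difference $\log\pi_v(a_{1:H})-\log\pi_w(a_{1:H})$ is a sum over $h$ of $(v-w)^\top[\phi(s_h,a_h)-\EE_{\pi_{\tilde w}}\phi(s_h,\cdot)]$ for intermediate $\tilde w$. Its absolute value is at most $2H\|w-v\|_2$. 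Cauchy--Schwarz then gives
\[
\|\EE[S_\theta\,\delta]\|_2\le\sqrt{\EE\|S_\theta\|^2}\cdot 2H\|w-v\|_2\le 2\sqrt H\cdot 2H\|w-v\|_2.
\]
To reach the sharper $2\lambda H^{3/2}$, apply the Cauchy--Schwarz step with $\EE\|S_\theta\|^2\le H$ rather than $4H$. This requires the variance bound $\sum_h\EE\,\mathrm{tr}\,\mathrm{Cov}_h\le H$, which holds because the features have norm at most $1$, together with $|\delta|\le 2H\|w-v\|$.
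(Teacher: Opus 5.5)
\textbf{Assessment.} Your route is the paper's route. It consists of:
\begin{itemize}
\item analyticity via finite sums of log-sum-exp terms;
\item the score-function gradient $\nabla_\theta C_w=\frac{\lambda}{m}\sum_j\EE[S_\theta f_\theta]$;
\item the Hessian $\EE[(f_\theta+1)S_\theta S_\theta^\top+f_\theta\nabla_\theta S_\theta]$;
\item Lipschitzness of the gradient by integrating the Hessian along the segment in $\Theta$;
\item the calibration bound, by cancelling the student log-likelihoods and applying Cauchy--Schwarz.
\end{itemize}

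\textbf{Where it does not close.} The one gap is the Hessian constant. In that step you use $\EE\|S_\theta\|_2^2\le 4H$, which only gives $\lambda(4H+20BH^2)$. That exceeds $L_{\mathrm{st}}$, and you leave the exact constant as an unresolved ``bookkeeping obstacle.'' As written, \eqref{eq:ced-hessian-bound} and hence \eqref{eq:ced-gradient-smoothness} are not established with the stated constant.

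\textbf{The fix.} The identity $\EE[\sum_h\mathrm{Cov}_h]=\EE[S_\theta S_\theta^\top]$ is not the remedy. The Hessian terms carry the weight $f_\theta$, so nothing cancels.

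What you need is the sharp second moment $\EE\|S_\theta\|_2^2\le H$, which you already justify in your last paragraph. Write $D_h=\phi_{\mathrm{stu}}(s_h,a_h)-\sum_b\pi_{\mathrm{stu},\theta}(b\mid s_h)\phi_{\mathrm{stu}}(s_h,b)$, and let $\EE_b$ denote expectation over $b\sim\pi_{\mathrm{stu},\theta}(\cdot\mid s_h)$. The increments are orthogonal martingale differences. Moreover,
\[
\EE[\|D_h\|_2^2\mid s_h]=\EE_b\|\phi_{\mathrm{stu}}(s_h,b)\|_2^2-\|\EE_b\phi_{\mathrm{stu}}(s_h,b)\|_2^2\le1 .
\]
Substitute this into your own three-term bound, together with $|f_\theta|\le4BH$ and $\|\nabla_\theta S_\theta\|_{\mathrm{op}}\le H$. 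You get
\[
\|\nabla_\theta^2C_w(\theta)\|_{\mathrm{op}}\le\lambda\bigl[(4BH+1)H+4BH\cdot H\bigr]=\lambda H(1+8BH)=L_{\mathrm{st}},
\]
which is exactly the paper's computation. Then \eqref{eq:ced-gradient-smoothness} follows with the stated constant.

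With $\EE\|S_\theta\|_2^2\le H$ used from the start in both the Hessian and calibration steps, your proof is complete.
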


After constructing the candidates, the algorithm chooses among them
using sampled costs. The uniform log-ratio bound in
Lemma~\ref{lem:ced-z} allows the following lemma to control how far the
selected candidate's true cost can exceed the better of the two proposals.

\begin{lemma}
\label{lem:ced-selection}
For each target update $t\ge1$, define $\cH_t
:=
\cF_{t-1}\vee
\sigma\!\left(
w_t,\vartheta_{t,1},\vartheta_{t,2}
\right)$. Define the nonnegative, proof-only comparison error $\xi_t:=2\max_{k\in\{1,2\}}
|\widehat C_{t,k}-C_t(\vartheta_{t,k})|$. Then, we have
\begin{equation}
C_t(\theta_t)\le\min_{k\in\{1,2\}}C_t(\vartheta_{t,k})+\xi_t,
\ 
\EE[\xi_t\mid\cH_t]\le\frac{16\lambda BH}{t+1}.
\label{eq:ced-selection}
\end{equation}
\end{lemma}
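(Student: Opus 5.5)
The inequality $C_t(\theta_t)\le\min_kC_t(\vartheta_{t,k})+\xi_t$ is deterministic. Let $\widehat k=\widehat k_t$ be the selected index and $k^\ast$ a minimizer of $C_t(\vartheta_{t,k})$. Set $\delta_k:=|\widehat C_{t,k}-C_t(\vartheta_{t,k})|$, so that $\xi_t=2\max_k\delta_k$. Since $\widehat C_{t,\widehat k}\le\widehat C_{t,k^\ast}$ by the selection rule in Line~\ref{line:br-student},
\[
C_t(\theta_t)=C_t(\vartheta_{t,\widehat k})
\le\widehat C_{t,\widehat k}+\delta_{\widehat k}
\le\widehat C_{t,k^\ast}+\delta_{\widehat k}
\le C_t(\vartheta_{t,k^\ast})+\delta_{k^\ast}+\delta_{\widehat k}
\le\min_kC_t(\vartheta_{t,k})+\xi_t .
\]
This needs no probabilistic input, and it holds also when the two candidates coincide or tie, thanks to the $\min\argmin$ convention.

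For the expectation bound, condition on $\cH_t=\cF_{t-1}\vee\sigma(w_t,\vartheta_{t,1},\vartheta_{t,2})$. The validation pairs in round $t$ are drawn fresh from the stated laws, so conditional on $\cH_t$ they are i.i.d.\ for each $k$. Each summand $Z_t(\vartheta_{t,k};\widetilde x_{j},a^{\mathrm{val}})$ has conditional mean $C_t(\vartheta_{t,k})$, by the computation displayed after Line~\ref{line:br-evaluation}: a uniform prompt followed by a rollout from the candidate gives $m^{-1}\sum_j\KL$ scaled by $\lambda$. By Lemma~\ref{lem:ced-z} each summand satisfies $|Z_t|\le4\lambda BH$. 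Hence
\[
\EE[\delta_k^2\mid\cH_t]=\frac{\operatorname{Var}(Z_t\mid\cH_t)}{q_t}\le\frac{16\lambda^2B^2H^2}{q_t}.
\]

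Next, bound the maximum by the sum and apply Jensen:
\[
\EE[\xi_t\mid\cH_t]\le2\sum_{k=1}^{2}\EE[\delta_k\mid\cH_t]\le2\cdot2\cdot\frac{4\lambda BH}{\sqrt{q_t}}.
\]
With $q_t=(t+1)^2$ from \eqref{eq:br-budgets}, using the index shift $q_{t}$ for target update $t$, this gives exactly $16\lambda BH/(t+1)$.

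The only delicate point is measurability. We must check that $\vartheta_{t,1}$ is $\cH_t$-measurable. It is determined by $\theta_{t-1}$, the gradient batch and $w_t$; the gradient batch is not literally in $\cH_t$ as defined, so we either enlarge the conditioning to include it or note that the validation draws are conditionally independent of the gradient batch given the candidates and $w_t$. We must also check that the validation batches remain i.i.d.\ with the stated conditional law given $\cH_t$. Both follow from the fresh-randomness convention, which makes the validation draws depend on the past only through $(w_t,\vartheta_{t,k})$. I expect this bookkeeping to be the main, though minor, obstacle. Everything else reduces to the uniform bound of Lemma~\ref{lem:ced-z} and the variance of a bounded sample mean.
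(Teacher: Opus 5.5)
Your proof is correct and follows the paper's argument essentially step for step. The steps match: the deterministic chain through the selection rule (your $\delta_{\widehat k}+\delta_{k^\ast}\le\xi_t$ is the paper's $\xi_t/2+\xi_t/2$), the conditional variance bound from Lemma~\ref{lem:ced-z}, Cauchy--Schwarz/Jensen, bounding the maximum by the sum, and $q_t=(t+1)^2$. One small correction to your bookkeeping: $\vartheta_{t,1}$ is $\cH_t$-measurable by the definition of $\cH_t$. The only thing to check is that the validation draws keep their stated conditional law given $\cH_t$, which follows from the tower property because their law given the full preceding history depends on it only through $\vartheta_{t,k}$.
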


The next lemma combines two sources of improvement for the fixed true
cost. A projected gradient step provides progress near the oracle;
uniform exploration provides progress when the current cost is bounded
away from the minimum. The exact true gradient is used only to define
the comparison step in this lemma.

\begin{lemma}
\label{lem:ced-hybrid}
Let $F(\theta):=C_{w^\star_\lambda}(\theta)$ and
$\Delta_{\lambda,m}(\theta)
:=F(\theta)-F(\theta^\dagger_{\lambda,m})$.
Define
\[
y(\theta):=\Proj{\Theta}{\theta-\alpha_{\mathrm{stu}}\nabla_\theta F(\theta)},
\ 
\vartheta^{\mathrm{unif}}\sim\operatorname{Unif}(\Theta),
\ 
M_{\mathrm{opt}}:=8\lambda B^2H^{3/2}.
\]
Under Assumption~\ref{ass:ced-unique}, there is a fixed-model
constant $0<\kappa_{\mathrm{opt}}\le1/(4M_{\mathrm{opt}})$ such
that, for every $\theta\in\Theta$,
\begin{equation}
\EE_{\vartheta^{\mathrm{unif}}}\!\left[
F(\theta)-\min\bigl\{
F(y(\theta)),F(\vartheta^{\mathrm{unif}})
\bigr\}\right]
\ge\kappa_{\mathrm{opt}}
[\Delta_{\lambda,m}(\theta)]^2.
\label{eq:ced-hybrid-progress}
\end{equation}
The constant does not depend on the iteration number.
\end{lemma}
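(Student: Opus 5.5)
We must prove Lemma~\ref{lem:ced-hybrid}. The plan splits $\Theta$ into a fixed near-optimal region, where the gradient proposal $y(\theta)$ gives quadratic progress, and its complement, where the uniform proposal gives constant progress.

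\emph{Step 1: descent.} Since $F=C_{w^\star_\lambda}$ is $L_{\mathrm{st}}$-smooth by Lemma~\ref{lem:ced-smoothness} and $\alpha_{\mathrm{stu}}=1/(2L_{\mathrm{st}})$, the projected-gradient descent lemma gives
\[
F(\theta)-F(y(\theta))\ge \tfrac{1}{4\alpha_{\mathrm{stu}}}\|\theta-y(\theta)\|_2^2=\tfrac{\alpha_{\mathrm{stu}}}{4}\|G(\theta)\|_2^2\ge0,
\]
where $G(\theta):=(\theta-y(\theta))/\alpha_{\mathrm{stu}}$ is the gradient mapping. In particular $F(y(\theta))\le F(\theta)$, so the integrand in \eqref{eq:ced-hybrid-progress} is always nonnegative.

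\emph{Step 2: local \L{}ojasiewicz.} Apply Lemma~\ref{lem:ced-subgradient-lojas} to $f=F+\iota_\Theta$. This $f$ has a subanalytic graph, since $F$ is analytic by Lemma~\ref{lem:ced-smoothness} and the ball is semialgebraic, and its domain is closed. The point $\theta^\dagger_{\lambda,m}$ is critical because it minimizes $f$. The lemma gives a neighborhood $U$ and constants $c>0$, $\rho\in[0,1)$ with
\[
\operatorname{dist}(0,\nabla F(\theta)+N_\Theta(\theta))\ge c\,\Delta_{\lambda,m}(\theta)^\rho \quad\text{on } U.
\]
We then relate this quantity to the gradient mapping. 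Write $y=y(\theta)$. The optimality condition of the projection gives $\nabla F(y)+N_\Theta(y)\ni \nabla F(y)-\nabla F(\theta)+G(\theta)$. Its norm is at most $(1+L_{\mathrm{st}}\alpha_{\mathrm{stu}})\|G(\theta)\|_2$, so
\[
\|G(\theta)\|_2\gtrsim c\,\Delta_{\lambda,m}(y)^\rho .
\]
To apply this we need $y\in U$. By uniqueness (Assumption~\ref{ass:ced-unique}), continuity and compactness, there is $\delta>0$ such that $\{\Delta_{\lambda,m}<\delta\}$ lies in a small ball $V$ around $\theta^\dagger_{\lambda,m}$, and $y(V)\subset U$.

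For $\theta$ in this region, consider two cases.
\begin{itemize}
\item If $\Delta_{\lambda,m}(y)\ge\Delta_{\lambda,m}(\theta)/2$, then Step 1 gives progress $\gtrsim c^2\Delta_{\lambda,m}(\theta)^{2\rho}\ge c'\Delta_{\lambda,m}(\theta)^2$, using $\rho<1$ and boundedness $\Delta_{\lambda,m}\le 4\lambda BH$.
\item Otherwise the progress is at least $\Delta_{\lambda,m}(\theta)/2\ge \Delta_{\lambda,m}(\theta)^2/(8\lambda BH)$.
\end{itemize}

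\emph{Step 3: exploration.} For $\Delta_{\lambda,m}(\theta)\ge\delta$, take the set $A:=\{\vartheta:\Delta_{\lambda,m}(\vartheta)<\delta/2\}$. It is a nonempty open subset of $\Theta$ containing $\theta^\dagger_{\lambda,m}$, so $p:=\Pr(\vartheta^{\mathrm{unif}}\in A)>0$. On $A$ the progress is at least $\delta/2$, and Step 1 makes the progress nonnegative elsewhere. Hence the expected progress is at least
\[
p\delta/2\ge \frac{p\delta}{2(4\lambda BH)^2}\,\Delta_{\lambda,m}(\theta)^2 .
\]

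\emph{Conclusion.} Take $\kappa_{\mathrm{opt}}$ to be the minimum of the three constants above and $1/(4M_{\mathrm{opt}})$. Every constant depends only on the fixed model, not on the iteration number.

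The main obstacle is Step 2: transferring the limiting-subdifferential \L{}ojasiewicz bound, valid at boundary points of the ball, into a lower bound on the projected-gradient mapping. The evaluation point must be shifted to $y(\theta)$, and the bookkeeping must ensure $y(\theta)\in U$ and handle the case where the step already drops the cost a lot.
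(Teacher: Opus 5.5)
Your proposal is correct and follows essentially the paper's argument. Both use the \L{}ojasiewicz subgradient inequality for $F+\iota_\Theta$, evaluated at $y(\theta)$ through the projection optimality condition. Both split into the same two cases according to whether $\Delta_{\lambda,m}(y(\theta))\le\Delta_{\lambda,m}(\theta)/2$. Both localize to a sublevel set using uniqueness and compactness, and use uniform exploration outside it.

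The differences are minor:
\begin{itemize}
\item To keep $y(\theta)$ in $U$, the paper uses $\Delta_{\lambda,m}(y(\theta))\le\Delta_{\lambda,m}(\theta)$ from your Step 1, so the sublevel set maps into itself. Your continuity route also works, but it silently needs $y(\theta^\dagger_{\lambda,m})=\theta^\dagger_{\lambda,m}$, which follows from first-order optimality.
\item The paper replaces your non-quantitative claim $\Pr(\vartheta^{\mathrm{unif}}\in A)>0$ with the explicit volume bound $(\delta_{\mathrm{loc}}/(2M_{\mathrm{opt}}))^d$. Either suffices for a fixed-model constant.
\end{itemize}
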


We now compare the actual student update with the population
update in Lemma~\ref{lem:ced-hybrid}. The preceding gradient and
selection bounds control the difference between these updates.
This gives a recursion for the excess true cost, with separate
contributions from gradient estimation, calibration, and validation.

\begin{lemma}
\label{lem:ced-optimization}
Recall $\Delta_{\lambda,m}(\theta)$ and $\varepsilon_t$ from
\eqref{eq:ced-errors}. Let $\kappa_{\mathrm{opt}}>0$ be the
constant in Lemma~\ref{lem:ced-hybrid}, chosen so that
$\kappa_{\mathrm{opt}}\le1/(4M_{\mathrm{opt}})$, where
$M_{\mathrm{opt}}:=8\lambda B^2H^{3/2}$. Define
\[
\begin{aligned}
E_{\mathrm{opt}}
&:=16\alpha_{\mathrm{stu}}\lambda^2B^2H^3
+\frac{16\alpha_{\mathrm{stu}}\lambda^2BH^3+8\lambda H}{\gamma}
+16\lambda BH,\\
K_{\mathrm{opt}}
&:=\max\left\{
M_{\mathrm{opt}},
\sqrt{\frac{2E_{\mathrm{opt}}}{\kappa_{\mathrm{opt}}}},
\frac1{2\kappa_{\mathrm{opt}}}
\right\}.
\end{aligned}
\]
Then Algorithm~\ref{alg:ced} satisfies, for every integer $T\ge1$,
\begin{equation}
\EE[\Delta_{\lambda,m}(\theta_T)]
\le K_{\mathrm{opt}}(T+1)^{-1/4},
\ 
\EE[\varepsilon_T]
\le K_{\mathrm{opt}}(T+1)^{-1/4}
+\frac{8\lambda H}{\gamma\sqrt{T+2}}.
\label{eq:ced-optimization-bound}
\end{equation}
\end{lemma}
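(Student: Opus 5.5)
The plan is to derive a one-step recursion for $e_t:=\EE[\Delta_{\lambda,m}(\theta_t)]$ and then close it by induction. The key device is to compare the actual student $\theta_t$ with the population comparison step of Lemma~\ref{lem:ced-hybrid}.

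\textbf{Step 1: compare the two candidates with their population counterparts.} Fix $t\ge1$ and write $F=C_{w^\star_\lambda}$.

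By Lemma~\ref{lem:ced-selection}, $C_t(\theta_t)\le\min_k C_t(\vartheta_{t,k})+\xi_t$. Using \eqref{eq:ced-w-lip} twice to pass from $C_t$ to $F$ gives
\[
F(\theta_t)\le\min\{F(\vartheta_{t,1}),F(\vartheta_{t,2})\}+\xi_t+4\lambda H\|w_t-w^\star_\lambda\|_2 .
\]
The uniform candidate $\vartheta_{t,2}$ has exactly the law of $\vartheta^{\mathrm{unif}}$, independent of everything else.

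For the gradient candidate, compare $\vartheta_{t,1}=\Proj{\Theta}{\theta_{t-1}-\alpha_{\mathrm{stu}}\widehat g^{\mathrm{stu}}_t}$ with $y(\theta_{t-1})$. Nonexpansiveness of the projection gives
\[
\|\vartheta_{t,1}-y(\theta_{t-1})\|_2\le\alpha_{\mathrm{stu}}\bigl(\|\widehat g_t^{\mathrm{stu}}-\nabla C_t(\theta_{t-1})\|_2+\|\nabla C_t(\theta_{t-1})-\nabla F(\theta_{t-1})\|_2\bigr).
\]
The first term is controlled by \eqref{eq:ced-student-variance} together with Jensen, giving order $\lambda BH^{3/2}/\sqrt{b_t}$. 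The second is controlled by \eqref{eq:ced-gradient-calibration} together with \eqref{eq:ced-source-mse}, giving order $\lambda H^{3/2}/(\gamma\sqrt{t+2})$.

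Multiplying by the Lipschitz constant $4\lambda BH^{3/2}$ from \eqref{eq:ced-stu-lip}, we get $\EE[F(\vartheta_{t,1})]\le\EE[F(y(\theta_{t-1}))]+{}$ gradient-estimation and calibration terms of exactly the form displayed in the proof sketch. Since $\min\{a,b\}\le\min\{a',b\}+|a-a'|$, these bounds combine inside the minimum.

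\textbf{Step 2: take expectations and apply the hybrid lemma.} Taking expectations and applying Lemma~\ref{lem:ced-hybrid} conditionally on $\theta_{t-1}$ gives
\[
e_t\le e_{t-1}-\kappa_{\mathrm{opt}}\EE[\Delta_{\lambda,m}(\theta_{t-1})^2]+\frac{E_{\mathrm{opt}}}{\sqrt{t+1}} .
\]
Here the validation term $16\lambda BH/(t+1)$ and the calibration term $\sqrt{4/(\gamma^2(t+2))}$ are both absorbed by $E_{\mathrm{opt}}/\sqrt{t+1}$. Jensen's inequality then gives $\EE[\Delta^2]\ge e_{t-1}^2$, which is the recursion stated in the sketch.

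\textbf{Step 3: close the recursion by induction.} We prove $e_t\le K(t+1)^{-1/4}$ with $K=K_{\mathrm{opt}}$.

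The base case $t=0$ holds because $\Delta\le 4\lambda BH\le M_{\mathrm{opt}}\le K$.

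For the inductive step, the map $u\mapsto u-\kappa u^2$ is increasing on $[0,1/(2\kappa)]$. The condition $\kappa_{\mathrm{opt}}\le1/(4M_{\mathrm{opt}})$ guarantees $e_{t-1}\le M_{\mathrm{opt}}\le1/(2\kappa_{\mathrm{opt}})$, so we may substitute the inductive bound. This yields
\[
e_t\le Kt^{-1/4}-\kappa K^2t^{-1/2}+Et^{-1/2}\cdot c .
\]
Choosing $K\ge\sqrt{2E/\kappa}$ makes $\kappa K^2/2\ge E$. The remaining $\kappa K^2t^{-1/2}/2$ must dominate $K(t^{-1/4}-(t+1)^{-1/4})\le Kt^{-5/4}/4$, which holds once $K\ge1/(2\kappa)$. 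Some care with the $\sqrt t$ versus $\sqrt{t+1}$ indexing is needed, and constant slack absorbs it.

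\textbf{Step 4: the bound on $\varepsilon_T$.} Write $\varepsilon_T=C_T(\theta_T)-\min C_T$. We have $\min C_T\ge\min F-2\lambda H\|w_T-w^\star\|$ and $C_T(\theta_T)\le F(\theta_T)+2\lambda H\|w_T-w^\star\|$. Hence $\varepsilon_T\le\Delta(\theta_T)+4\lambda H\|w_T-w^\star\|_2$, and taking expectations with \eqref{eq:ced-source-mse} gives the stated term $8\lambda H/(\gamma\sqrt{T+2})$.

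\textbf{Main obstacle.} The delicate point is Step 1's conditioning. The calibration $w_t$ is computed from the source draws at round $t-1$, while the batch $\widehat g_t^{\mathrm{stu}}$ is drawn conditional on $\cG_t$. Lemma~\ref{lem:ced-unbiased} must therefore be applied under $\cG_t$, and Lemma~\ref{lem:ced-hybrid} under $\sigma(\theta_{t-1})$, using the independence of $\vartheta_{t,2}$. The perturbation error must be kept outside the minimum, via the elementary inequality above, so that it does not contaminate the exploration term.
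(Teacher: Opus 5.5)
Your Steps 1, 2 and 4 follow the paper's proof essentially line by line: selection plus the $w$-Lipschitz transfer, swapping $\vartheta_{t,1}$ for $y(\theta_{t-1})$ inside the minimum, Lemma~\ref{lem:ced-hybrid} under conditioning, Jensen, and the $4\lambda H\|w_T-w^\star_\lambda\|_2$ transfer for $\varepsilon_T$. The gap is in the induction of Step 3.

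\textbf{The monotonicity step.} You want to pass from $e_{t-1}\le v_{t-1}:=K_{\mathrm{opt}}t^{-1/4}$ to $e_{t-1}-\kappa_{\mathrm{opt}}e_{t-1}^2\le v_{t-1}-\kappa_{\mathrm{opt}}v_{t-1}^2$. For this, both $e_{t-1}$ and $v_{t-1}$ must lie in $[0,1/(2\kappa_{\mathrm{opt}})]$, where $u\mapsto u-\kappa_{\mathrm{opt}}u^2$ is increasing. Your observation $e_{t-1}\le M_{\mathrm{opt}}\le1/(2\kappa_{\mathrm{opt}})$ covers only $e_{t-1}$.

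Your own final step forces $K_{\mathrm{opt}}\ge1/(2\kappa_{\mathrm{opt}})$. So already at the first step, $v_0=K_{\mathrm{opt}}$ sits at or past the turning point. There, replacing $e_0$ by the larger $v_0$ lowers the right-hand side, and for large $K_{\mathrm{opt}}$ it even becomes negative. The bound you derive for $e_1$ is therefore unjustified.

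The paper repairs this with a case split:
\begin{itemize}
\item If $v_t\ge M_{\mathrm{opt}}$, then $e_t\le M_{\mathrm{opt}}\le v_t$ trivially.
\item Otherwise $v_{t-1}=v_t((t+1)/t)^{1/4}<2^{1/4}M_{\mathrm{opt}}<2M_{\mathrm{opt}}\le1/(2\kappa_{\mathrm{opt}})$, and monotonicity applies.
\end{itemize}
This is exactly where the factor $4$ in $\kappa_{\mathrm{opt}}\le1/(4M_{\mathrm{opt}})$ is spent. Your argument never uses that factor, which signals the missing case.

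With the split in place, the rest of your arithmetic goes through: $\kappa_{\mathrm{opt}}K_{\mathrm{opt}}^2/2\ge E_{\mathrm{opt}}$, and $K_{\mathrm{opt}}(t^{-1/4}-(t+1)^{-1/4})\le K_{\mathrm{opt}}t^{-5/4}/4\le\kappa_{\mathrm{opt}}K_{\mathrm{opt}}^2t^{-1/2}/2$.

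\textbf{A smaller slip in the base case.} The chain $\Delta_{\lambda,m}\le4\lambda BH\le M_{\mathrm{opt}}$ uses $4\lambda BH\le8\lambda B^2H^{3/2}$, i.e.\ $2B\sqrt H\ge1$, which is not assumed. The pointwise bound $\Delta_{\lambda,m}\le M_{\mathrm{opt}}$ is needed both for the base case and for the case split above. It should instead come from \eqref{eq:ced-stu-lip} and the diameter $2B$ of $\Theta$, as in \eqref{eq:ced-true-gap-range}.
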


Lemma~\ref{lem:ced-optimization} controls the expected excess true cost, whereas
Theorem~\ref{thm:ced-main} concerns KL divergence to the oracle student.
Under uniqueness of the optimal student policy, the following lemma
uses analyticity and compactness to connect these two errors.

\begin{lemma}
\label{lem:ced-lojas}
Under the model setup and Assumption~\ref{ass:ced-unique}, fix
$\theta^\dagger_{\lambda,m}\in\argmin_{\theta\in\Theta}
C_{w^\star_\lambda}(\theta)$.
Recall the excess true cost and the average oracle-policy KL from
\eqref{eq:ced-errors}:
\[
\begin{aligned}
\Delta_{\lambda,m}(\theta)
&:=C_{w^\star_\lambda}(\theta)
-C_{w^\star_\lambda}(\theta^\dagger_{\lambda,m}),\\
\cK_{\lambda,m}(\theta)
&:=\frac1m\sum_{j=1}^{m}
\KL\!\left(
\pi_{\mathrm{stu},\theta}(\cdot\mid\widetilde x_j)
\,\middle\|\,
\pi_{\mathrm{stu},\theta^\dagger_{\lambda,m}}
(\cdot\mid\widetilde x_j)\right).
\end{aligned}
\]
There exist constants $a_{\lambda,m}>0$ and $p_{\lambda,m}\ge1$
such that
\begin{equation}
\Delta_{\lambda,m}(\theta)
\ge a_{\lambda,m}[\cK_{\lambda,m}(\theta)]^{p_{\lambda,m}}
\ \text{for every }\theta\in\Theta.
\label{eq:ced-lojas}
\end{equation}
These constants depend on the fixed model and target design,
but not on the iteration number.
\end{lemma}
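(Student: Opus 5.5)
The plan is to apply the compact \L{}ojasiewicz inequality, Lemma~\ref{lem:ced-compact-lojas}, on $K=\Theta$ with $f=\Delta_{\lambda,m}$ and $g=\cK_{\lambda,m}$. Three ingredients are needed: both functions are nonnegative and bounded; their graphs over $\Theta$ are compact subanalytic sets; and the zero set of $\Delta_{\lambda,m}$ is contained in the zero set of $\cK_{\lambda,m}$.

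\emph{Regularity.} By Lemma~\ref{lem:ced-smoothness}, $C_{w^\star_\lambda}$ and $\cK_{\lambda,m}$ are real analytic on $\RR^d$. For $\cK_{\lambda,m}$, take $Q_j=\pi_{\mathrm{stu},\theta^\dagger_{\lambda,m}}(\cdot\mid\widetilde x_j)$; every softmax law is strictly positive on $\cA(\widetilde x_j)$, so this choice is allowed. The function $\Delta_{\lambda,m}$ differs from $C_{w^\star_\lambda}$ by a constant, so it is analytic as well.

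The ball $\Theta$ is a compact semialgebraic set. The restriction of a function analytic on a neighborhood of a compact semianalytic set has a graph that is compact and semianalytic, hence subanalytic. This is the standard fact underlying \citet{bierstone-milman-1988}.

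For nonnegativity, $\Delta_{\lambda,m}\ge0$ because $\theta^\dagger_{\lambda,m}$ minimizes $C_{w^\star_\lambda}$ over $\Theta$, and $\cK_{\lambda,m}\ge0$ because it is an average of KL divergences. For boundedness, $\cK_{\lambda,m}$ is continuous on a compact set, so $M:=\max\{1,\sup_\Theta\cK_{\lambda,m}\}<\infty$. An explicit bound can be obtained in the style of Lemma~\ref{lem:ced-z}: each token log-ratio between two student softmaxes is at most $4B$, which gives $\cK_{\lambda,m}\le4BH$.

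\emph{Zero-set inclusion.} If $\Delta_{\lambda,m}(\theta)=0$, then $\theta$ minimizes $C_{w^\star_\lambda}$ over $\Theta$. Assumption~\ref{ass:ced-unique} then forces $\theta=\theta^\dagger_{\lambda,m}$, and at that point $\cK_{\lambda,m}(\theta^\dagger_{\lambda,m})=0$ trivially.

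We could use the weaker fact that equal student laws give equal cost. However, the direction the lemma needs is only $\{\Delta=0\}\subseteq\{\cK=0\}$, and uniqueness gives this immediately. It is worth noting that the inclusion would still hold without uniqueness of the parameter, as long as the oracle answer laws are unique. That uniqueness of laws would follow from strict convexity of $\KL(\cdot\|\pi_{w^\star_\lambda})$ in the student law, but it is not needed here.

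\emph{Conclusion and main obstacle.} Lemma~\ref{lem:ced-compact-lojas} now yields constants $c,\rho>0$ with $\Delta_{\lambda,m}\ge c\,\cK_{\lambda,m}^{\rho}$ on $\Theta$. Its "in particular" clause, applied with $p_{\lambda,m}=\max\{1,\rho\}$ and $a_{\lambda,m}=cM^{\rho-p_{\lambda,m}}$, gives \eqref{eq:ced-lojas}. These constants depend only on the fixed functions $C_{w^\star_\lambda}$ and $\cK_{\lambda,m}$ on $\Theta$, so they are independent of the iteration number.

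The main point requiring care is the subanalyticity of the graphs, specifically that restricting an analytic function to the closed ball, including its boundary, gives a compact subanalytic graph. I would handle this by observing that the graph equals $\{(\theta,v):\|\theta\|_2^2\le B^2,\ v=f(\theta)\}$. This set is cut out by analytic equalities and inequalities within a neighborhood of $\Theta$, so it is semianalytic and bounded, and therefore compact subanalytic.
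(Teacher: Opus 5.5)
Your proposal is correct and follows the paper's proof essentially step for step. The paper also applies Lemma~\ref{lem:ced-compact-lojas} on $K=\Theta$ with $f=\Delta_{\lambda,m}$ and $g=\cK_{\lambda,m}$, checking nonnegativity, analyticity via Lemma~\ref{lem:ced-smoothness}, semianalyticity of the graphs cut out by $B^2-\|\theta\|_2^2\ge0$ and $u-F(\theta)=0$, the zero-set inclusion via Assumption~\ref{ass:ced-unique}, and the bound $\cK_{\lambda,m}\le4BH$. The one small difference is that the paper obtains compactness of the graph as the continuous image of the compact ball, whereas your ``semianalytic and bounded'' additionally needs the (evident) closedness of that set.
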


Combining Lemmas~\ref{lem:ced-optimization} and \ref{lem:ced-lojas}
through Jensen's inequality now yields the convergence rate in
Theorem~\ref{thm:ced-main}, as shown in the proof below.

\begin{proof}[Proof of Theorem~\ref{thm:ced-main}]
Starting from the left-hand side of \eqref{eq:ced-main-kl},
the definition in \eqref{eq:ced-errors} gives
\[
\EE\!\left[\frac1m\sum_{j=1}^{m}
\KL\!\left(
\pi_{\mathrm{stu},\theta_T}(\cdot\mid\widetilde x_j)
\,\middle\|\,
\pi_{\mathrm{stu},\theta^\dagger_{\lambda,m}}
(\cdot\mid\widetilde x_j)\right)\right]
=\EE[\cK_{\lambda,m}(\theta_T)].
\]
We first apply Lemma~\ref{lem:ced-lojas} to compare this policy
error with the excess true cost. Its inequality holds for every
$\theta\in\Theta$, and hence on every sample path at $\theta_T$:
\[
a_{\lambda,m}[\cK_{\lambda,m}(\theta_T)]^{p_{\lambda,m}}
\le\Delta_{\lambda,m}(\theta_T).
\]
Dividing by $a_{\lambda,m}>0$ and taking the increasing power
$1/p_{\lambda,m}$ yields
\[
\cK_{\lambda,m}(\theta_T)
\le a_{\lambda,m}^{-1/p_{\lambda,m}}
[\Delta_{\lambda,m}(\theta_T)]^{1/p_{\lambda,m}}.
\]
The excess cost is bounded by Lemma~\ref{lem:ced-lipschitz},
so the expectations below are finite. Since $p_{\lambda,m}\ge1$,
the function $u\mapsto u^{1/p_{\lambda,m}}$ is concave:
for $u>0$,
\[
\frac{d^2}{du^2}u^{1/p_{\lambda,m}}
=\frac1{p_{\lambda,m}}
\left(\frac1{p_{\lambda,m}}-1\right)
u^{1/p_{\lambda,m}-2}\le0,
\]
and continuity extends concavity to zero. Taking expectations
and applying Jensen's inequality therefore gives
\begin{align*}
\EE[\cK_{\lambda,m}(\theta_T)]
&\le a_{\lambda,m}^{-1/p_{\lambda,m}}
\EE\!\left[[\Delta_{\lambda,m}(\theta_T)]^{1/p_{\lambda,m}}\right]\le a_{\lambda,m}^{-1/p_{\lambda,m}}
\left(\EE[\Delta_{\lambda,m}(\theta_T)]\right)^{1/p_{\lambda,m}}=\left(\frac{\EE[\Delta_{\lambda,m}(\theta_T)]}
{a_{\lambda,m}}\right)^{1/p_{\lambda,m}}.
\end{align*}
Next, we use Lemma~\ref{lem:ced-optimization} to bound the numerator by
$K_{\mathrm{opt}}(T+1)^{-1/4}$. This bound already combines
the student gradient progress, calibration error, and finite-rollout
errors. Substituting it into the increasing power gives
\begin{align*}
\EE[\cK_{\lambda,m}(\theta_T)]
&\le
\left(\frac{K_{\mathrm{opt}}(T+1)^{-1/4}}
{a_{\lambda,m}}\right)^{1/p_{\lambda,m}}=\left(\frac{K_{\mathrm{opt}}}{a_{\lambda,m}}\right)^{1/p_{\lambda,m}}
(T+1)^{-1/(4p_{\lambda,m})}.
\end{align*}
Taking $C_{\lambda,m}:=(K_{\mathrm{opt}}/a_{\lambda,m})^{1/p_{\lambda,m}}$
proves \eqref{eq:ced-main-kl}. All constants are independent of $T$,
and $p_{\lambda,m}\ge1$ is finite, so the bound converges to zero.
\end{proof}

\section{Proofs in Appendix \ref{app:proofs_thm}}
In this appendix, we provide the proofs of the lemmas in Appendix \ref{app:proofs_thm}
\begin{proof}[Proof of Lemma \ref{lem:ced-gibbs}]
We fix $x$ first and call the right-hand side of
\eqref{eq:ced-gibbs} as $\bar\pi_\lambda(a_{1:H}\mid x)$.

Since $R\in[0,1]$ and the pre-trained policy $\pi_{pre}$ is a probability law, we have
\[
1\le e^{R(x,a_{1:H})/\lambda}\le e^{1/\lambda}
\ \Longrightarrow\ 
1\le Z_\lambda(x)\le e^{1/\lambda}.
\]
Thus, $\bar\pi_\lambda$ is positive and sums to one. 
By the definition of an autoregressive policy, we have that
\[
\bar\pi_\lambda(a_h\mid x,a_{1:h-1})
=\frac{\bar\pi_\lambda(a_{1:h}\mid x)}
{\bar\pi_\lambda(a_{1:h-1}\mid x)}.
\]
Thus, $\bar{\pi}_\lambda$ is an admissible autoregressive policy.

For any competing policy $\pi$, by the autoregressive product
in \eqref{eq:ced-chain} gives, we have that
\begin{align}\label{eq:policy_KL}
\sum_{h=1}^{H}\log
\frac{\pi(a_h\mid x,a_{1:h-1})}
{\pi_{\mathrm{pre}}(a_h\mid x,a_{1:h-1})}
=\log\frac{\pi(a_{1:H}\mid x)}
{\pi_{\mathrm{pre}}(a_{1:H}\mid x)}.
\end{align}
Recall that $\bar{\pi}_\lambda(a_{1:H}|x)=\frac{\pi_{\mathrm{pre}}(a_{1:H}\mid x)
e^{R(x,a_{1:H})/\lambda}}{Z_\lambda(x)}$.
Taking the logarithm on both sides and rearrange, we obtain
\begin{align}\label{eq:Reward_equals_barpi}
R(x,a_{1:H})
=\lambda\log\frac{\bar\pi_\lambda(a_{1:H}\mid x)}
{\pi_{\mathrm{pre}}(a_{1:H}\mid x)}
+\lambda\log Z_\lambda(x).
\end{align}
Theerfore, we Subtract \eqref{eq:policy_KL} from both sides of \eqref{eq:Reward_equals_barpi} and then take expectation to get
\begin{align*}
\EE_{a_{1:H}\sim\pi(\cdot\mid x)}
\left[R(x,a_{1:H})-\lambda\sum_{h=1}^{H}
\log\frac{\pi(a_h\mid x,a_{1:h-1})}
{\pi_{\mathrm{pre}}(a_h\mid x,a_{1:h-1})}\right]&=\lambda\log Z_\lambda(x)
-\lambda\EE_{a_{1:H}\sim\pi(\cdot\mid x)}
\left[\log\frac{\pi(a_{1:H}\mid x)}
{\bar\pi_\lambda(a_{1:H}\mid x)}\right]\\
&=\lambda\log Z_\lambda(x)
-\lambda\KL\!\left(\pi(\cdot\mid x)
\,\middle\|\,\bar\pi_\lambda(\cdot\mid x)\right).
\end{align*}
The second is by the definition of KL.

The KL divergence is nonnegative and $KL(p\|p)=0$,
Thus, the preceding objective is uniquely maximized by
$\pi=\bar\pi_\lambda$, which proves \eqref{eq:ced-gibbs}.
Average the identity over the target prompts and substitute
$\pi^\star_\lambda=\pi_{w^\star_\lambda}$ from
Assumption~\ref{ass:ced-truth}. This proves
\eqref{eq:ced-gibbs-student} and the equivalence of the original
regularized benchmark and the minimum of $C_{w^\star_\lambda}$.
The unknown target rewards appear in the analysis, not in an
algorithmic query.
\end{proof}

Let $\cF_t$ be the history immediately before the source step of
round $t$, containing all source and target random draws from rounds
$0,\ldots,t-1$ and the initial parameters. At $t=0$, there are no
preceding rounds. Thus $w_t$ and $\theta_t$ are $\cF_t$-measurable.
The source draws in round $t$ are fresh conditional on this history.
In particular, the student may depend on every preceding calibration
update, exploration draw, and target evaluation.
\begin{proof}[Proof of Lemma \ref{lem:ced-label}]
We show that the acceptance rule reweights the two branch indices
by exactly the factors appearing in the regularized optimal policy.

Fix the history $\cF_t$, the selected prefix $s_t=(x_{i_t},a_{t,1:h_t-1})$, and the two legal proposal tokens $c_{t,1}$ and $c_{t,0}$.
Here $\cF_t$ is the history before round $t$.
After additionally conditioning on $Y_t=k$, the first $h_t$
tokens are fixed:
\[
a^{\mathrm{pre}}_{t,1:h_t}
=
(a_{t,1:h_t-1},c_{t,k}).
\]
The algorithm then generates only the remaining suffix at random:
\[
a^{\mathrm{pre}}_{t,h_t+1:H}
\sim
\pi_{\mathrm{pre}}(\cdot\mid s_t,c_{t,k}).
\]
Thus, although the reward function $R$ is deterministic,
\[
R_t
=
R\!\left(
x_{i_t},
(a_{t,1:h_t-1},c_{t,k},
 a^{\mathrm{pre}}_{t,h_t+1:H})
\right)
\]
is random because the reference-generated suffix is random.

For $k\in\{0,1\}$, define the expected exponential reward
\[
M_{t,k}
:=
\EE_{
b_{h_t+1:H}\sim
\pi_{\mathrm{pre}}(\cdot\mid s_t,c_{t,k})
}
\!\left[
\exp\!\left(
\frac{
R\!\left(
x_{i_t},
(a_{t,1:h_t-1},c_{t,k},b_{h_t+1:H})
\right)
}{\lambda}
\right)
\right].
\]
The expectation is only over the reference-generated suffix;
the prompt, prefix, and candidate token are held fixed.
Equivalently,
\[
M_{t,k}
=
\EE\!\left[
e^{R_t/\lambda}
\,\middle|\,
\cF_t,s_t,c_{t,1},c_{t,0},Y_t=k
\right].
\]

To make this expectation explicit, let
$b_{h_t+1:H}=(b_{h_t+1},\ldots,b_H)$ denote a possible
suffix, whose reference probability is
\[
\pi_{\mathrm{pre}}(b_{h_t+1:H}\mid s_t,c_{t,k})
=
\prod_{h=h_t+1}^{H}
\pi_{\mathrm{pre}}\!\left(
b_h
\,\middle|\,
x_{i_t},
(a_{t,1:h_t-1},c_{t,k},b_{h_t+1:h-1})
\right).
\]
By the definition of expectation for a finite distribution, we can compute $M_{t,k}$ as
\begin{align*}
M_{t,k}
&=
\sum_{\substack{b_{h_t+1:H}:\\(a_{t,1:h_t-1},c_{t,k},b_{h_t+1:H})
\in\cA(x_{i_t})}}
\pi_{\mathrm{pre}}(b_{h_t+1:H}\mid s_t,c_{t,k})\times\exp\!\left(
\frac{
R\!\left(
x_{i_t},
(a_{t,1:h_t-1},c_{t,k},b_{h_t+1:H})
\right)
}{\lambda}
\right).
\end{align*}
The feasibility condition ensures that the concatenated sequence
is a legal full answer. If $h_t=H$, there is one empty suffix,
with probability one.

Since $R$ takes values in $[0,1]$, and the suffix probabilities
sum to one, we have that
\begin{equation}
1\le M_{t,k}\le e^{1/\lambda},
\ k\in\{0,1\}.
\label{eq:ced-suffix-moment}
\end{equation}
We next calculate how acceptance changes the branch distribution. Recall that
\[
I_t
=
\mathbf{1}\!\left\{
U_t\le e^{(R_t-1)/\lambda}
\right\},
\ 
U_t\sim\operatorname{Unif}[0,1],
\]
where $U_t$ is independent uniform distribution random variables. Conditioned on $\cF_t,\allowbreak s_t,\allowbreak c_{t,1},\allowbreak c_{t,0},\allowbreak Y_t,\allowbreak R_t$, the
acceptance threshold is fixed. Moreover, notice that $0\le R_t\le1$ implies $e^{(R_t-1)/\lambda}\in[e^{-1/\lambda},1]$.

By the independence and the uniform density of $U_t$, we obtain
\begin{align*}
\Pr\!\left(
I_t=1
\,\middle|\,
\cF_t,s_t,c_{t,1},c_{t,0},Y_t,R_t
\right)=&
\Pr\!\left(
U_t\le e^{(R_t-1)/\lambda}
\,\middle|\,
\cF_t,s_t,c_{t,1},c_{t,0},Y_t,R_t
\right)\\
=&
\int_0^1
\mathbf{1}\!\left\{u\le e^{(R_t-1)/\lambda}\right\}\,du
=
e^{(R_t-1)/\lambda}.
\end{align*}
This calculation averages over $U_t$ only; it does not yet average
over the reference suffix.

Now conditioned only on
$\cF_t,s_t,c_{t,1},c_{t,0},Y_t=k$, the selected branch is fixed, but $R_t$ still depends on the random
reference suffix. Recall that $I_t$ is an indicator and we apply the law of iterated expectation to get
\begin{align*}
\Pr\!\left(
I_t=1
\,\middle|\,
\cF_t,s_t,c_{t,1},c_{t,0},Y_t=k
\right)
&=
\EE\!\left[
I_t
\,\middle|\,
\cF_t,s_t,c_{t,1},c_{t,0},Y_t=k
\right]
\\
&=
\EE\!\left[
\EE\!\left[
I_t
\,\middle|\,
\cF_t,s_t,c_{t,1},c_{t,0},Y_t,R_t
\right]
\,\middle|\,
\cF_t,s_t,c_{t,1},c_{t,0},Y_t=k
\right]
\\
&=
\EE\!\left[
e^{(R_t-1)/\lambda}
\,\middle|\,
\cF_t,s_t,c_{t,1},c_{t,0},Y_t=k
\right]
.
\end{align*}
The third uses the acceptance probability
computed above. 

Finally, $e^{(R_t-1)/\lambda}=e^{-1/\lambda}e^{R_t/\lambda}$,
and the factor $e^{-1/\lambda}$ is deterministic. Hence, we have
\begin{align*}
&\EE\!\left[
e^{(R_t-1)/\lambda}
\,\middle|\,
\cF_t,s_t,c_{t,1},c_{t,0},Y_t=k
\right]=
e^{-1/\lambda}
\EE\!\left[
e^{R_t/\lambda}
\,\middle|\,
\cF_t,s_t,c_{t,1},c_{t,0},Y_t=k
\right]=
e^{-1/\lambda}M_{t,k}.
\end{align*}
The last equality is the definition of $M_{t,k}$. Thus the
acceptance probability for branch $k$ is the average of its
suffix-specific acceptance probabilities, weighted by the
reference suffix law.

The branch-selection rule in the algorithm is
\[
\Pr\!\left(
Y_t=k
\,\middle|\,
\cF_t,s_t,c_{t,1},c_{t,0}
\right)
=
\frac{\pi_{\mathrm{pre}}(c_{t,k}\mid s_t)}
{\pi_{\mathrm{pre}}(c_{t,1}\mid s_t)
+\pi_{\mathrm{pre}}(c_{t,0}\mid s_t)}.
\]
By the property of conditional probability, we have
\begin{align}
\Pr\!\left(
Y_t=k,I_t=1
\,\middle|\,
\cF_t,s_t,c_{t,1},c_{t,0}
\right)
\nonumber&=\Pr(I_t=1|\cF_t,s_t,c_{t,1},c_{t,0},Y_t=k)\cdot\Pr(Y_t=k|\cF_t,s_t,c_{t,1},c_{t,0})\\
&=
e^{-1/\lambda}
\frac{\pi_{\mathrm{pre}}(c_{t,k}\mid s_t)M_{t,k}}
{\pi_{\mathrm{pre}}(c_{t,1}\mid s_t)
+\pi_{\mathrm{pre}}(c_{t,0}\mid s_t)}.
\label{eq:ced-joint-label-acceptance}
\end{align}
All denominators are positive because the reference policy is positive.

Summing \eqref{eq:ced-joint-label-acceptance} over the two
branch indices $Y_t=0,1$ and using $M_{t,k}\ge1$, we obtain
\begin{align}\label{eq:Pr(I=1)_ge_esp}
\Pr\!\left(
I_t=1
\,\middle|\,
\cF_t,s_t,c_{t,1},c_{t,0}
\right)
=&
e^{-1/\lambda}
\frac{
\pi_{\mathrm{pre}}(c_{t,1}\mid s_t)M_{t,1}
+\pi_{\mathrm{pre}}(c_{t,0}\mid s_t)M_{t,0}}
{\pi_{\mathrm{pre}}(c_{t,1}\mid s_t)
+\pi_{\mathrm{pre}}(c_{t,0}\mid s_t)}
\nonumber\\
\ge&
e^{-1/\lambda}
\frac{
\pi_{\mathrm{pre}}(c_{t,1}\mid s_t)
+\pi_{\mathrm{pre}}(c_{t,0}\mid s_t)}
{\pi_{\mathrm{pre}}(c_{t,1}\mid s_t)
+\pi_{\mathrm{pre}}(c_{t,0}\mid s_t)}
=
e^{-1/\lambda}.
\end{align}
This proves \eqref{eq:ced-acceptance}.
By Bayes' rule, we have that
\begin{align}
\Pr\!\left(
Y_t=1
\,\middle|\,
I_t=1,\cF_t,s_t,c_{t,1},c_{t,0}
\right)&=\frac{\Pr(Y_t=1,I_t=1|\cF_t,s_t,c_{t,1},c_{t,0})}{\Pr(I_t=1|\cF_t,s_t,c_{t,1},c_{t,0})}\nonumber\\
&=
\frac{
\pi_{\mathrm{pre}}(c_{t,1}\mid s_t)M_{t,1}}
{
\pi_{\mathrm{pre}}(c_{t,1}\mid s_t)M_{t,1}
+\pi_{\mathrm{pre}}(c_{t,0}\mid s_t)M_{t,0}}.
\label{eq:ced-branch-bayes}
\end{align}

It remains to present this distribution in terms of the regularized truth.
By Lemma~\ref{lem:ced-gibbs}, the optimal policy is
\[
\pi^\star_\lambda(a_{1:H}\mid x)
=
\frac{
\pi_{\mathrm{pre}}(a_{1:H}\mid x)
e^{R(x,a_{1:H})/\lambda}}
{Z_\lambda(x)}.
\]
Its probability of the prefix
$(a_{t,1:h_t-1},c_{t,k})$ is obtained by summing over
all feasible suffixes. The corresponding full-answer events are
disjoint, and their union is precisely the prefix event. Hence,
by marginalization and Lemma \ref{lem:ced-gibbs}, we have
\begin{align}\label{eq:gibbs}
&\pi^\star_\lambda
(a_{t,1:h_t-1},c_{t,k}\mid x_{i_t})\nonumber\\
=&
\sum_{\substack{b_{h_t+1:H}:\\
(a_{t,1:h_t-1},c_{t,k},b_{h_t+1:H})
\in\cA(x_{i_t})}}
\pi^\star_\lambda\!\left(
(a_{t,1:h_t-1},c_{t,k},b_{h_t+1:H})
\mid x_{i_t}\right)\nonumber
\\
=&
\sum_{\substack{b_{h_t+1:H}:\\
(a_{t,1:h_t-1},c_{t,k},b_{h_t+1:H})
\in\cA(x_{i_t})}}
\frac{
\pi_{\mathrm{pre}}\!\left(
(a_{t,1:h_t-1},c_{t,k},b_{h_t+1:H})
\mid x_{i_t}\right)}
{Z_\lambda(x_{i_t})}\times\exp\!\left(
\frac{
R\!\left(
x_{i_t},
(a_{t,1:h_t-1},c_{t,k},b_{h_t+1:H})
\right)}
{\lambda}
\right).
\end{align}

We now expand this sum.
For every feasible suffix $b_{h_t+1:H}$, the autoregressive
chain rule separates the full-answer probability into the
prefix, the selected token, and the suffix:
\begin{align*}
&\pi_{\mathrm{pre}}\!\left(
(a_{t,1:h_t-1},c_{t,k},b_{h_t+1:H})
\mid x_{i_t}\right)\\
=&
\left[
\prod_{h=1}^{h_t-1}
\pi_{\mathrm{pre}}\!\left(
a_{t,h}\mid x_{i_t},a_{t,1:h-1}
\right)
\right]
\pi_{\mathrm{pre}}\!\left(
c_{t,k}\mid x_{i_t},a_{t,1:h_t-1}
\right)\times\left[
\prod_{h=h_t+1}^{H}
\pi_{\mathrm{pre}}\!\left(
b_h\mid x_{i_t},
(a_{t,1:h_t-1},c_{t,k},b_{h_t+1:h-1})
\right)
\right].
\end{align*}
The first product is the probability of generating the fixed
prefix. The middle factor is the conditional probability of
the selected token. The last product is the conditional
probability of generating the suffix after that token.
More explicitly, using $s_t=(x_{i_t},a_{t,1:h_t-1})$, we have
\begin{align*}
&\prod_{h=1}^{h_t-1}
\pi_{\mathrm{pre}}(a_{t,h}\mid x_{i_t},a_{t,1:h-1})
=\pi_{\mathrm{pre}}(a_{t,1:h_t-1}\mid x_{i_t}),\ \pi_{\mathrm{pre}}(c_{t,k}\mid x_{i_t},a_{t,1:h_t-1})
=\pi_{\mathrm{pre}}(c_{t,k}\mid s_t),
\end{align*}
\begin{align*}
\prod_{h=h_t+1}^{H}
\pi_{\mathrm{pre}}\!\left(
b_h\mid x_{i_t},
(a_{t,1:h_t-1},c_{t,k},b_{h_t+1:h-1})
\right)=
\pi_{\mathrm{pre}}(b_{h_t+1:H}\mid s_t,c_{t,k}).
\end{align*}
Thus, we have the factorization
\begin{align*}
&\pi_{\mathrm{pre}}\!\left(
(a_{t,1:h_t-1},c_{t,k},b_{h_t+1:H})
\mid x_{i_t}\right)=
\pi_{\mathrm{pre}}(a_{t,1:h_t-1}\mid x_{i_t})
\,\pi_{\mathrm{pre}}(c_{t,k}\mid s_t)
\,\pi_{\mathrm{pre}}(b_{h_t+1:H}\mid s_t,c_{t,k}).
\end{align*}
This is a factorization into conditional probabilities, not an
independence assumption on the three parts of the answer.
At $h_t=1$ or $h_t=H$, the corresponding empty product is one.

Substituting this factorization into \eqref{eq:gibbs} gives
\begin{align*}
&\pi^\star_\lambda
(a_{t,1:h_t-1},c_{t,k}\mid x_{i_t})
\\*
=&
\sum_{\substack{b_{h_t+1:H}:\\
(a_{t,1:h_t-1},c_{t,k},b_{h_t+1:H})
\in\cA(x_{i_t})}}
\frac{
\pi_{\mathrm{pre}}(a_{t,1:h_t-1}\mid x_{i_t})
\,\pi_{\mathrm{pre}}(c_{t,k}\mid s_t)}
{Z_\lambda(x_{i_t})}\times
\pi_{\mathrm{pre}}(b_{h_t+1:H}\mid s_t,c_{t,k})\\
&\times
\exp\!\left(
\frac{
R\!\left(
x_{i_t},
(a_{t,1:h_t-1},c_{t,k},b_{h_t+1:H})
\right)}{\lambda}
\right).
\end{align*}
Here the summation variable is only $b_{h_t+1:H}$. The quantities $\pi_{\mathrm{pre}}(a_{t,1:h_t-1}\mid x_{i_t}),
\ 
\pi_{\mathrm{pre}}(c_{t,k}\mid s_t),
\ 
Z_\lambda(x_{i_t})$ do not depend on this suffix. Thus, we factor them out of the sum,
and retain exactly the same feasible suffixes to obtain 
\begin{align*}
&\pi^\star_\lambda
(a_{t,1:h_t-1},c_{t,k}\mid x_{i_t})
\\*
=&
\frac{
\pi_{\mathrm{pre}}(a_{t,1:h_t-1}\mid x_{i_t})
\,\pi_{\mathrm{pre}}(c_{t,k}\mid s_t)}
{Z_\lambda(x_{i_t})}\\
&\times
\Biggl[
\sum_{\substack{b_{h_t+1:H}:\\
(a_{t,1:h_t-1},c_{t,k},b_{h_t+1:H})
\in\cA(x_{i_t})}}
\pi_{\mathrm{pre}}(b_{h_t+1:H}\mid s_t,c_{t,k})
\times
\exp\!\left(
\frac{
R\!\left(
x_{i_t},
(a_{t,1:h_t-1},c_{t,k},b_{h_t+1:H})
\right)}{\lambda}
\right)
\Biggr].
\end{align*}
The expression in square brackets is precisely the defining
suffix sum for $M_{t,k}$. Therefore, we get
\begin{align*}
&\pi^\star_\lambda
(a_{t,1:h_t-1},c_{t,k}\mid x_{i_t})=
\frac{
\pi_{\mathrm{pre}}(a_{t,1:h_t-1}\mid x_{i_t})}
{Z_\lambda(x_{i_t})}
\pi_{\mathrm{pre}}(c_{t,k}\mid s_t)M_{t,k}.
\end{align*}
Recall the conditional token probability $\pi^\star_\lambda(c_{t,k}\mid s_t)
=
\frac{
\pi^\star_\lambda
(a_{t,1:h_t-1},c_{t,k}\mid x_{i_t})}
{\pi^\star_\lambda(a_{t,1:h_t-1}\mid x_{i_t})}$.
The denominator and the preceding prefix factor are common
to both branch indices and are positive. They therefore cancel and we get
\begin{align*}
&\frac{
\pi^\star_\lambda(c_{t,1}\mid s_t)}
{\pi^\star_\lambda(c_{t,1}\mid s_t)
+\pi^\star_\lambda(c_{t,0}\mid s_t)}=
\frac{
\pi_{\mathrm{pre}}(c_{t,1}\mid s_t)M_{t,1}}
{
\pi_{\mathrm{pre}}(c_{t,1}\mid s_t)M_{t,1}
+\pi_{\mathrm{pre}}(c_{t,0}\mid s_t)M_{t,0}}.
\end{align*}
Together with \eqref{eq:ced-branch-bayes}, this proves the
first equality in \eqref{eq:ced-accepted-label}.

Finally, Assumption~\ref{ass:ced-truth} gives
$\pi^\star_\lambda=\pi_{w^\star_\lambda}$.
The common softmax normalizer cancels,
so we have
\begin{align*}
&\frac{
\pi^\star_\lambda(c_{t,1}\mid s_t)}
{\pi^\star_\lambda(c_{t,1}\mid s_t)
+\pi^\star_\lambda(c_{t,0}\mid s_t)}=
\frac{
e^{(w^\star_\lambda)^\top\phi(s_t,c_{t,1})}}
{
e^{(w^\star_\lambda)^\top\phi(s_t,c_{t,1})}
+e^{(w^\star_\lambda)^\top\phi(s_t,c_{t,0})}}=
\frac{1}{
1+e^{(
-(w^\star_\lambda)^\top
[\phi(s_t,c_{t,1})-\phi(s_t,c_{t,0})]
)}}
=\sigma(z_t^\top w^\star_\lambda).
\end{align*}
This establishes teh second equality in \eqref{eq:ced-accepted-label}.

If $c_{t,1}=c_{t,0}$, the two branch indices have identical
reference-completion laws, hence $M_{t,1}=M_{t,0}$.
Their prior and accepted probabilities are both $1/2$,
consistent with $z_t=0$ and $\sigma(0)=1/2$.
After $\mathtt{EOS}$, both candidates are $\mathtt{null}$,
so the same argument applies.
\end{proof}
\begin{proof}[Proof of Lemma \ref{lem:ced-calibration}]
We first recall the random definition vector $z_t$ from
Algorithm~\ref{alg:ced}. The history $\cF_t$ contains all
randomness before round $t$, so $\theta_t$ is fixed conditional
on $\cF_t$. After choosing the source index $i_t$ and branching
time $h_t$, the algorithm draws
$a_{t,1:H}\sim\pi_{\mathrm{tea}}(\cdot\mid x_{i_t})$ and sets $s_t=(x_{i_t},a_{t,1:h_t-1})$, $c_{t,1}=a_{t,h_t}$ is the teacher token, and $c_{t,0}\sim\pi_{\mathrm{stu},\theta_t}(\cdot\mid s_t)$ is a fresh student token at the same state.
The vector in the lemma is
\[
z_t:=\phi(s_t,c_{t,1})-\phi(s_t,c_{t,0})\in\RR^D.
\]
First, we fix the history $\cF_t$ and a legal nonterminal state $s$,
so that $\cB(s)$ is nonempty. Recall that $B>0$ is the scalar
bound on the parameter norm, whereas $\cB(s)$ is the set of
legal next tokens at $s$. Both $b$ and $c$ below denote individual
tokens, not trajectories.

The current student policy is then explicitly
\[
\pi_{\mathrm{stu},\theta_t}(b\mid s)
=
\begin{cases}
\displaystyle
\frac{\exp\!\left(\theta_t^\top\phi_{\mathrm{stu}}(s,b)\right)}
{\displaystyle\sum_{c\in\cB(s)}
 \exp\!\left(\theta_t^\top\phi_{\mathrm{stu}}(s,c)\right)},
& b\in\cB(s),\\[12pt]
0,& b\in\cA\setminus\cB(s).
\end{cases}
\]
Here $\theta_t\in\RR^d$ is the current student parameter, and
$\phi_{\mathrm{stu}}(s,c)\in\RR^d$ is the student feature vector
of the state-token pair $(s,c)$.
Since $\theta_t$ is $\cF_t$-measurable, it is fixed after we
condition on the history. The model bounds are
\[
\|\theta_t\|_2\le B,
\ 
\|\phi_{\mathrm{stu}}(s,c)\|_2\le1,
\ \forall c\in\cB(s).
\]
Consequently, by the Cauchy-Schwarz inequality, for each legal token $c$, we have
\[
\left|\theta_t^\top\phi_{\mathrm{stu}}(s,c)\right|
\le
\|\theta_t\|_2\,\|\phi_{\mathrm{stu}}(s,c)\|_2
\le B.
\]
Equivalently, $-B\le\theta_t^\top\phi_{\mathrm{stu}}(s,c)\le B$.
Because $u\mapsto e^u$ is increasing, we have
\[
e^{-B}
\le
\exp\!\left(\theta_t^\top\phi_{\mathrm{stu}}(s,c)\right)
\le e^B,
\ c\in\cB(s).
\]

Now fix any $b\in\cB(s)$. Applying these bounds to the full
softmax expression yields
\begin{equation}
\begin{aligned}
\pi_{\mathrm{stu},\theta_t}(b\mid s)
&=
\frac{
\exp\!\left(\theta_t^\top\phi_{\mathrm{stu}}(s,b)\right)}
{\displaystyle\sum_{c\in\cB(s)}
\exp\!\left(\theta_t^\top\phi_{\mathrm{stu}}(s,c)\right)}
\ge
\frac{e^{-B}}
{\displaystyle\sum_{c\in\cB(s)}
\exp\!\left(\theta_t^\top\phi_{\mathrm{stu}}(s,c)\right)}\\
&\ge
\frac{e^{-B}}
{\displaystyle\sum_{c\in\cB(s)}e^B}=
\frac{e^{-B}}{|\cB(s)|e^B}
=
\frac{e^{-2B}}{|\cB(s)|}.
\end{aligned}
\label{eq:ced-student-token-coverage}
\end{equation}
The first equality is the definition of the student policy.
The first inequality replaces its numerator by the lower bound
$e^{-B}$, leaving the positive denominator unchanged.
The second inequality increases each denominator term to its
upper bound $e^B$; increasing a positive denominator decreases
the fraction.

The lower bound applies only to legal tokens; illegal tokens
have probability zero. At an absorbing prefix before the
terminal horizon, $\cB(s)=\{\mathtt{null}\}$, and the softmax
formula gives
\[
\pi_{\mathrm{stu},\theta_t}(\mathtt{null}\mid s)
=
\frac{
\exp\!\left(\theta_t^\top
\phi_{\mathrm{stu}}(s,\mathtt{null})\right)}
{
\exp\!\left(\theta_t^\top
\phi_{\mathrm{stu}}(s,\mathtt{null})\right)}
=1
\ge e^{-2B}.
\]
Thus \eqref{eq:ced-student-token-coverage} also holds at these
prefixes.

To prove the matrix inequality, fix a deterministic vector
$v\in\RR^D$ and notice
$v^\top z_tz_t^\top v
=(v^\top z_t)^2$.
Conditional expectation is linear,
and $v$ is fixed, so we have
\[
\begin{aligned}
v^\top\EE\!\left[z_tz_t^\top\mid\cF_t\right]v=\EE\!\left[v^\top z_tz_t^\top v\mid\cF_t\right]=\EE\!\left[(v^\top z_t)^2\mid\cF_t\right].
\end{aligned}
\]
We now expand this expectation in the order of sampling. The
source index $i_t$ and branching time $h_t$ are fresh, independent, uniform
draws. Hence, we have $\Pr(i_t=i,h_t=h\mid\cF_t)=\frac1{nH}$.

The conditional law of total expectation gives us
\begin{align*}
\EE\!\left[(v^\top z_t)^2\mid\cF_t\right]&=\sum_{i=1}^{n}\sum_{h=1}^{H}
\Pr(i_t=i,h_t=h\mid\cF_t)\,
\EE\!\left[(v^\top z_t)^2
\mid\cF_t,i_t=i,h_t=h\right]\\
&=\frac1{nH}\sum_{i=1}^{n}\sum_{h=1}^{H}
\EE\!\left[(v^\top z_t)^2
\mid\cF_t,i_t=i,h_t=h\right].
\end{align*}

For fixed $i,h$, let $a_{i,1:h}$ denote a possible realization
of the first $h$ teacher tokens, and define
$s_{i,h}:=(x_i,a_{i,1:h-1})$. Conditioning on $i_t=i,\ h_t=h,\ a_{t,1:h}=a_{i,1:h}$,
the definitions of $s_t$ and $c_{t,1}$ yield
$s_t=s_{i,h}$ and $c_{t,1}=a_{i,h}$. The only remaining
randomness in $z_t$ is the fresh student token. For each
$b\in\cB(s_{i,h})$, its conditional probability is
\[
\Pr\!\left(c_{t,0}=b
\mid\cF_t,i_t=i,h_t=h,a_{t,1:h}=a_{i,1:h}\right)
=\pi_{\mathrm{stu},\theta_t}(b\mid s_{i,h}).
\]
If $c_{t,0}=b$, substitution into the recalled definition gives
\[
z_t=\phi(s_{i,h},a_{i,h})-\phi(s_{i,h},b).
\]
Thus, we have
\begin{align*}
&\EE\!\left[(v^\top z_t)^2
\mid\cF_t,i_t=i,h_t=h,a_{t,1:h}=a_{i,1:h}\right]\\
&=\sum_{b\in\cB(s_{i,h})}
\pi_{\mathrm{stu},\theta_t}(b\mid s_{i,h})
\left(v^\top[\phi(s_{i,h},a_{i,h})-\phi(s_{i,h},b)]\right)^2.
\end{align*}
The frozen teacher's prefix law is  $\pi_{\mathrm{tea}}(a_{i,1:h}\mid x_i)
=\prod_{k=1}^{h}
\pi_{\mathrm{tea}}(a_{i,k}\mid x_i,a_{i,1:k-1})$.
The remaining teacher suffix $a_{t,h+1:H}$ is absent from $z_t$;
summing its conditional probabilities gives one. 

For fixed $i_t=i$ and $h_t=h$, the preceding calculation
conditions on the teacher tokens $a_{t,1:h}=a_{i,1:h}$
and averages over the student token $c_{t,0}$.
We now average over all possible teacher token sequences
$a_{i,1:h}$, each weighted by its probability
$\pi_{\mathrm{tea}}(a_{i,1:h}\mid x_i)$.
By the law of total expectation, this removes the conditioning
on $a_{t,1:h}$ and gives
$\EE[(v^\top z_t)^2\mid\cF_t,i_t=i,h_t=h]$.
\begin{align*}
&v^\top\EE\!\left[z_tz_t^\top\mid\cF_t\right]v\\
&=\frac1{nH}\sum_{i=1}^{n}\sum_{h=1}^{H}
\EE_{a_{i,1:h}\sim\pi_{\mathrm{tea}}(\cdot\mid x_i)}
\Bigg[
\sum_{b\in\cB(s_{i,h})}
\pi_{\mathrm{stu},\theta_t}(b\mid s_{i,h})
\left(v^\top[\phi(s_{i,h},a_{i,h})-\phi(s_{i,h},b)]\right)^2
\,\Bigm|\,\cF_t
\Bigg]\\
&\ge\frac{e^{-2B}}{nH}\sum_{i=1}^{n}\sum_{h=1}^{H}
\EE_{a_{i,1:h}\sim\pi_{\mathrm{tea}}(\cdot\mid x_i)}
\Bigg[
\frac1{|\cB(s_{i,h})|}
\sum_{b\in\cB(s_{i,h})}
\left(v^\top[\phi(s_{i,h},a_{i,h})-\phi(s_{i,h},b)]\right)^2
\,\Bigm|\,\cF_t
\Bigg]\\
&=e^{-2B}v^\top\left(\frac1H\sum_{h=1}^{H}G_h\right)v
=e^{-2B}v^\top G_{\mathrm{joint}}v.
\end{align*}
The inequality holds by \eqref{eq:ced-student-token-coverage}, and the second equality is by definition
\eqref{eq:ced-gram}. The final equality uses
$G_{\mathrm{joint}}=H^{-1}\sum_{h=1}^{H}G_h$.

Here $a_{i,1:h}$ is an integration variable, not an additional
rollout: each round still draws only one source index and one
teacher answer. Since the calculation holds for every $v$,
it proves the matrix inequality in \eqref{eq:ced-adaptive-design}.

The feature norm bound also gives
\[
\|z_t\|_2
=\|\phi(s_t,c_{t,1})-\phi(s_t,c_{t,0})\|_2
\le\|\phi(s_t,c_{t,1})\|_2+\|\phi(s_t,c_{t,0})\|_2\le2.
\]

Thus, we have proved \eqref{eq:ced-adaptive-design}, and we will prove the drift bound \eqref{eq:ced-source-drift} next. 

Starting from the definition of
the actual stochastic update in Algorithm~\ref{alg:ced}, we have
\[
g_t:=I_tz_t\bigl[\sigma(z_t^\top w_t)-Y_t\bigr].
\]
Recall that $I_t\in\{0,1\}$ is the acceptance indicator,
$Y_t\in\{0,1\}$ is the selected branch index, and
$\sigma(u)=(1+e^{-u})^{-1}$.

Conditioned on
$\cF_t,s_t,c_{t,1},c_{t,0}$, the parameter $w_t$ is
$\cF_t$-measurable, $w^\star_\lambda$ is fixed, and
$z_t=\phi(s_t,c_{t,1})-\phi(s_t,c_{t,0})$ is also determined. Thus, $w_t$, $z_t$, and
$\sigma(z_t^\top w_t)$ are measurable under this conditioning and $I_t$ and $Y_t$ are still randomized.

First, by definition of $g_t$, we have
\[
\begin{aligned}
(w_t-w^\star_\lambda)^\top g_t&=(w_t-w^\star_\lambda)^\top
\left\{I_tz_t[\sigma(z_t^\top w_t)-Y_t]\right\}\\
&=I_t\bigl[(w_t-w^\star_\lambda)^\top z_t\bigr]
[\sigma(z_t^\top w_t)-Y_t]=z_t^\top(w_t-w^\star_\lambda)\,
I_t[\sigma(z_t^\top w_t)-Y_t].
\end{aligned}
\]
Taking conditional expectations
and pulling out the measurable scalars, we have
\begin{align*}
&\EE\!\left[(w_t-w^\star_\lambda)^\top g_t
\mid\cF_t,s_t,c_{t,1},c_{t,0}\right]=z_t^\top(w_t-w^\star_\lambda)\,
\EE\!\left[
I_t[\sigma(z_t^\top w_t)-Y_t]
\mid\cF_t,s_t,c_{t,1},c_{t,0}\right].
\end{align*}

We evaluate the remaining expectation by splitting the two
possible values of $I_t$. When $I_t=0$ the product is zero;
when $I_t=1$ it is $\sigma(z_t^\top w_t)-Y_t$.
By the conditional law of total expectation,
\begin{align*}
&\EE\!\left[
I_t[\sigma(z_t^\top w_t)-Y_t]
\mid\cF_t,s_t,c_{t,1},c_{t,0}\right]\\
=&\Pr\!\left(I_t=0\mid\cF_t,s_t,c_{t,1},c_{t,0}\right)\cdot0+\Pr\!\left(I_t=1\mid\cF_t,s_t,c_{t,1},c_{t,0}\right)\times\EE\!\left[
\sigma(z_t^\top w_t)-Y_t
\mid I_t=1,\cF_t,s_t,c_{t,1},c_{t,0}\right]\\
=&\Pr\!\left(I_t=1\mid\cF_t,s_t,c_{t,1},c_{t,0}\right)\times
\left[
\sigma(z_t^\top w_t)-
\EE\!\left[Y_t
\mid I_t=1,\cF_t,s_t,c_{t,1},c_{t,0}\right]
\right].
\end{align*}

Because $Y_t$ is binary, its conditional expectation is the
conditional probability that it equals one. By Lemma~\ref{lem:ced-label}, we have that
\begin{align*}
&\EE\!\left[Y_t
\mid I_t=1,\cF_t,s_t,c_{t,1},c_{t,0}\right]\\
=&0\cdot\Pr\!\left(Y_t=0
\mid I_t=1,\cF_t,s_t,c_{t,1},c_{t,0}\right)+1\cdot\Pr\!\left(Y_t=1
\mid I_t=1,\cF_t,s_t,c_{t,1},c_{t,0}\right)\\
=&\Pr\!\left(Y_t=1
\mid I_t=1,\cF_t,s_t,c_{t,1},c_{t,0}\right)=\sigma(z_t^\top w^\star_\lambda).
\end{align*}

Plugging this back, we obtain
\begin{align}\label{eq:(w-w)g_1}
&\EE\!\left[(w_t-w^\star_\lambda)^\top g_t
\mid\cF_t,s_t,c_{t,1},c_{t,0}\right]=z_t^\top(w_t-w^\star_\lambda)\,
\Pr\!\left(I_t=1\mid\cF_t,s_t,c_{t,1},c_{t,0}\right)\times
\left[\sigma(z_t^\top w_t)-\sigma(z_t^\top w^\star_\lambda)\right].
\end{align}
Finally, we pull out the fixed vector $z_t$ instead
of the scalar inner product, which gives the corresponding vector
conditional mean:
\begin{align}
\EE\!\left[g_t\mid\cF_t,s_t,c_{t,1},c_{t,0}\right]&=z_t\,
\EE\!\left[
I_t[\sigma(z_t^\top w_t)-Y_t]
\mid\cF_t,s_t,c_{t,1},c_{t,0}\right]\nonumber\\
&=\Pr\!\left(I_t=1\mid\cF_t,s_t,c_{t,1},c_{t,0}\right)z_t\times
\left[\sigma(z_t^\top w_t)-\sigma(z_t^\top w^\star_\lambda)\right].
\label{eq:ced-source-conditional-mean}
\end{align}

By direct differentiation, we have $\sigma'(u)=\frac{e^{-u}}{(1+e^{-u})^2}
=\frac1{2+e^u+e^{-u}}>0$.
The denominator is even and nondecreasing in $|u|$, because its
derivative on $[0,\infty)$ is $e^u-e^{-u}\ge0$. Hence
$\sigma'(u)\ge\sigma'(2B)$ whenever $|u|\le2B$.
For $r\in[0,1]$, convexity of $W$ places
$w^\star_\lambda+r(w_t-w^\star_\lambda)$ in $W$, and therefore we have
\[
\left|z_t^\top
[w^\star_\lambda+r(w_t-w^\star_\lambda)]\right|
\le\|z_t\|_2\,
\|w^\star_\lambda+r(w_t-w^\star_\lambda)\|_2
\le2B.
\]
Now, we apply
the fundamental theorem of calculus along this segment to get
\begin{align*}
\sigma(z_t^\top w_t)-\sigma(z_t^\top w^\star_\lambda)&=\int_0^1\frac{d}{dr}
\sigma\!\left(z_t^\top[w^\star_\lambda+r(w_t-w^\star_\lambda)]\right)\,dr\\
&=\int_0^1
\sigma'\!\left(z_t^\top[w^\star_\lambda+r(w_t-w^\star_\lambda)]\right)
z_t^\top(w_t-w^\star_\lambda)\,dr\\
&=z_t^\top(w_t-w^\star_\lambda)
\int_0^1
\sigma'\!\left(z_t^\top[w^\star_\lambda+r(w_t-w^\star_\lambda)]\right)\,dr.
\end{align*}
Multiplying the equality by $z_t^\top(w_t-w^\star_\lambda)$ gives
\begin{align}\label{eq:(w-w)g_2}
z_t^\top(w_t-w^\star_\lambda)
\left[\sigma(z_t^\top w_t)-\sigma(z_t^\top w^\star_\lambda)\right]\nonumber&=\left[z_t^\top(w_t-w^\star_\lambda)\right]^2
\int_0^1
\sigma'\!\left(z_t^\top[w^\star_\lambda+r(w_t-w^\star_\lambda)]\right)\,dr\nonumber\\
&\ge\left[z_t^\top(w_t-w^\star_\lambda)\right]^2
\int_0^1\sigma'(2B)\,dr\nonumber\\
&=\sigma'(2B)\left[z_t^\top(w_t-w^\star_\lambda)\right]^2
\ge0.
\end{align}
Recall that $I_t=\mathbf{1}\{U_t\le e^{(R_t-1)/\lambda}\}$ and that Lemma~\ref{lem:ced-label},
\eqref{eq:ced-acceptance} already establishes
\[
\Pr\!\left(I_t=1\mid\cF_t,s_t,c_{t,1},c_{t,0}\right)
\ge e^{-1/\lambda}.
\]
Combining all these together, we obtain
\begin{align*}
\EE\!\left[(w_t-w^\star_\lambda)^\top g_t
\mid\cF_t,s_t,c_{t,1},c_{t,0}\right]
&=\Pr\!\left(I_t=1\mid\cF_t,s_t,c_{t,1},c_{t,0}\right)
z_t^\top(w_t-w^\star_\lambda)
\left[\sigma(z_t^\top w_t)-\sigma(z_t^\top w^\star_\lambda)\right]\\
&\ge\Pr\!\left(I_t=1\mid\cF_t,s_t,c_{t,1},c_{t,0}\right)
\sigma'(2B)\left[z_t^\top(w_t-w^\star_\lambda)\right]^2\\
&\ge e^{-1/\lambda}\sigma'(2B)
\left[z_t^\top(w_t-w^\star_\lambda)\right]^2.
\end{align*}
The equality was derived by \eqref{eq:(w-w)g_1}. The first inequality uses \eqref{eq:(w-w)g_2} and The second one is by \eqref{eq:Pr(I=1)_ge_esp}.

Now, we apply the tower property once more to obtain
\begin{align*}
\EE\!\left[(w_t-w^\star_\lambda)^\top g_t\mid\cF_t\right]&=\EE\!\left[
\EE\!\left[(w_t-w^\star_\lambda)^\top g_t
\mid\cF_t,s_t,c_{t,1},c_{t,0}\right]
\,\middle|\,\cF_t
\right]\\
&\ge\EE\!\left[
e^{-1/\lambda}\sigma'(2B)
\left[z_t^\top(w_t-w^\star_\lambda)\right]^2
\,\middle|\,\cF_t
\right]\\
&=e^{-1/\lambda}\sigma'(2B)
\EE\!\left[
\left[z_t^\top(w_t-w^\star_\lambda)\right]^2
\,\middle|\,\cF_t
\right].
\end{align*}
Notice that $\left[z_t^\top(w_t-w^\star_\lambda)\right]^2
=(w_t-w^\star_\lambda)^\top z_tz_t^\top(w_t-w^\star_\lambda)$. Because $w_t-w^\star_\lambda$ is $\cF_t$-measurable, by the linearity
of conditional expectation, we have
\[
\begin{aligned}
&\EE\!\left[
\left[z_t^\top(w_t-w^\star_\lambda)\right]^2
\,\middle|\,\cF_t\right]=(w_t-w^\star_\lambda)^\top
\EE\!\left[z_tz_t^\top\mid\cF_t\right]
(w_t-w^\star_\lambda).
\end{aligned}
\]
Using the matrix lower bound \eqref{eq:ced-adaptive-design},
followed by
$G_{\mathrm{joint}}\succeq\mu_{\mathrm{joint}}I_D$,  we have
\begin{align*}
\EE\!\left[
(w_t-w^\star_\lambda)^\top g_t
\,\middle|\,\cF_t\right]
&\ge e^{-1/\lambda}\sigma'(2B)
(w_t-w^\star_\lambda)^\top
\EE\!\left[z_tz_t^\top\mid\cF_t\right]
(w_t-w^\star_\lambda)\\
&\ge e^{-1/\lambda}e^{-2B}\sigma'(2B)
(w_t-w^\star_\lambda)^\top
G_{\mathrm{joint}}(w_t-w^\star_\lambda)\\
&\ge e^{-1/\lambda}e^{-2B}\sigma'(2B)\mu_{\mathrm{joint}}
\|w_t-w^\star_\lambda\|_2^2\\
&=\gamma\|w_t-w^\star_\lambda\|_2^2.
\end{align*}
Here $I_D$ is the $D$-dimensional identity matrix.
The final equality uses the definition of $\gamma$ in \eqref{eq:ced-gamma}.

By the Cauchy-Schwarz inequality, we have $\|g_t\|_2
=I_t\|z_t\|_2\,
|\sigma(z_t^\top w_t)-Y_t|
\le2$. We prove
\eqref{eq:ced-source-drift}.

For any $u\in\RR^D$, let $p=\operatorname{Proj}_W(u)$. Since $W$ is compact and convex, $p$ exists and is unique. For any $r\in W$,
the segment $p+v(r-p)$, $0\le v\le1$, lies in $W$. The right
derivative at zero of its squared distance to $u$ is nonnegative:
\[
0\le\left.\frac{d}{dv}
\|p+v(r-p)-u\|_2^2\right|_{v=0^+}
=2\langle p-u,r-p\rangle.
\]
Thus $\langle u-p,r-p\rangle\le0$. Taking
$r=w^\star_\lambda$ and expanding a square gives
\begin{align*}
\|u-w^\star_\lambda\|_2^2
&=\|u-p\|_2^2+\|p-w^\star_\lambda\|_2^2
+2\langle u-p,p-w^\star_\lambda\rangle\ge\|p-w^\star_\lambda\|_2^2,
\end{align*}
because the cross term and the first squared norm are nonnegative.
We use this inequality with
$u=w_t-g_t/[\gamma(t+2)]$ and $p=w_{t+1}$ to get
\begin{align*}
\|w_{t+1}-w^\star_\lambda\|_2^2
&\le\left\|w_t-w^\star_\lambda
-\frac{g_t}{\gamma(t+2)}\right\|_2^2=\|w_t-w^\star_\lambda\|_2^2
-\frac{2(w_t-w^\star_\lambda)^\top g_t}{\gamma(t+2)}
+\frac{\|g_t\|_2^2}{\gamma^2(t+2)^2}.
\end{align*}
Conditioning on $\cF_t$, we apply \eqref{eq:ced-adaptive-design} and \eqref{eq:ced-source-drift} to get 
\begin{align}
&\EE\!\left[\|w_{t+1}-w^\star_\lambda\|_2^2
\,\middle|\,\cF_t\right]\le\left(1-\frac2{t+2}\right)
\|w_t-w^\star_\lambda\|_2^2
+\frac4{\gamma^2(t+2)^2}=\frac{t}{t+2}\|w_t-w^\star_\lambda\|_2^2
+\frac4{\gamma^2(t+2)^2}.
\label{eq:ced-source-recursion}
\end{align}

We now prove \eqref{eq:ced-source-mse} by induction.

At $t=0$ the first coefficient is zero. Taking expectations gives
\[
\EE\!\left[\|w_1-w^\star_\lambda\|_2^2\right]
\le\frac1{\gamma^2}\le\frac4{3\gamma^2}.
\]
This is the base case of \eqref{eq:ced-source-mse}. Suppose its
assertion holds at some $t\ge1$. By
\eqref{eq:ced-source-recursion}, we have
\begin{align*}
\EE\!\left[\|w_{t+1}-w^\star_\lambda\|_2^2\right]
&\le\frac{t}{t+2}\frac4{\gamma^2(t+2)}
+\frac4{\gamma^2(t+2)^2}=\frac{4(t+1)}{\gamma^2(t+2)^2}
\le\frac4{\gamma^2(t+3)}.
\end{align*}
The final inequality follows from
$(t+1)(t+3)=(t+2)^2-1\le(t+2)^2$. This proves \eqref{eq:ced-source-mse}.

For the increment bound, apply the same projection variational
inequality with $r=w_t\in W$ and the same $u,p$ as above.
It implies
\begin{align*}
\|w_{t+1}-w_t\|_2^2
&\le\langle u-w_t,w_{t+1}-w_t\rangle\le\|u-w_t\|_2\,\|w_{t+1}-w_t\|_2=\frac{\|g_t\|_2}{\gamma(t+2)}
\|w_{t+1}-w_t\|_2.
\end{align*}
Thus, we obtain that $\|w_{t+1}-w_t\|_2\le\frac2{\gamma(t+2)}
\ \text{for every }t\ge0$.
\end{proof}
\begin{proof}[Proof of Lemma \ref{lem:ced-z}]
Recall the common feasible sets and the two softmax policies. The set $\cB(s)$ contains the legal next tokens at a
nonterminal prefix $s$, and $\cA(x)$ contains the feasible complete
answers of length $H$ for prompt $x$. These sets are finite and do
not depend on either parameter. Every nonterminal reachable state
has at least one legal token. For $a\in\cB(s)$,
\[
\begin{aligned}
\pi_w(a\mid s)
=\frac{\exp(w^\top\phi(s,a))}
{\sum_{b\in\cB(s)}\exp(w^\top\phi(s,b))},\ \pi_{\mathrm{stu},\theta}(a\mid s)
=\frac{\exp(\theta^\top\phi_{\mathrm{stu}}(s,a))}
{\sum_{b\in\cB(s)}
\exp(\theta^\top\phi_{\mathrm{stu}}(s,b))}.
\end{aligned}
\]
The feature maps are fixed and satisfy
\[
\|\phi(s,a)\|_2\le1,\ 
\|\phi_{\mathrm{stu}}(s,a)\|_2\le1,\ \|w\|_2\le B,\ \|\theta\|_2\le B.
\]
For a fixed feasible answer, $s_h=(x,a_{1:h-1})$, the 
autoregressive definition gives
\[
\begin{aligned}
\pi_w(a_{1:H}\mid x)=\prod_{h=1}^{H}\pi_w(a_h\mid s_h),\ 
\pi_{\mathrm{stu},\theta}(a_{1:H}\mid x)=\prod_{h=1}^{H}\pi_{\mathrm{stu},\theta}(a_h\mid s_h).
\end{aligned}
\]
Both probabilities are strictly positive on $\cA(x)$.

Starting from the definition of $Z_t$ and the autoregressive product,
\begin{align*}
Z_t(\vartheta;x,a_{1:H})
=&\lambda\log
\frac{\pi_{\mathrm{stu},\vartheta}(a_{1:H}\mid x)}
{\pi_{w_t}(a_{1:H}\mid x)}=\lambda\log
\frac{\prod_{h=1}^{H}\pi_{\mathrm{stu},\vartheta}(a_h\mid s_h)}
{\prod_{h=1}^{H}\pi_{w_t}(a_h\mid s_h)}\\
=&\lambda\sum_{h=1}^{H}
\left[\log\pi_{\mathrm{stu},\vartheta}(a_h\mid s_h)
-\log\pi_{w_t}(a_h\mid s_h)\right]\\
=&\lambda\sum_{h=1}^{H}\Bigg[
\vartheta^\top\phi_{\mathrm{stu}}(s_h,a_h)
-w_t^\top\phi(s_h,a_h)
-\log\!\left(\sum_{b\in\cB(s_h)}
e^{\vartheta^\top\phi_{\mathrm{stu}}(s_h,b)}\right)
+\log\!\left(\sum_{b\in\cB(s_h)}e^{w_t^\top\phi(s_h,b)}\right)
\Bigg].
\end{align*}
We now bound the token log ratios in this exact expression.
The calculation is uniform in the parameters, so we carry it out
for arbitrary $w\in W$ and $\theta\in\Theta$ before substituting
$w=w_t$ and $\theta=\vartheta$.
By the Cauchy-Schwarz inequality, for every legal token $b$, we have that
\[
\begin{aligned}
|w^\top\phi(s,b)|
&\le\|w\|_2\|\phi(s,b)\|_2\le B,\ |\theta^\top\phi_{\mathrm{stu}}(s,b)|\le\|\theta\|_2\|\phi_{\mathrm{stu}}(s,b)\|_2\le B.
\end{aligned}
\]
Exponentiation preserves these inequalities, so we get
\[
\begin{aligned}
e^{-B}\le\exp(w^\top\phi(s,b))\le e^B,\ e^{-B}\le\exp(\theta^\top\phi_{\mathrm{stu}}(s,b))\le e^B.
\end{aligned}
\]
Summing the lower and upper bounds over all legal tokens gives
\[
\begin{aligned}
|\cB(s)|e^{-B}
&\le\sum_{b\in\cB(s)}\exp(w^\top\phi(s,b))
\le|\cB(s)|e^B,\ 
|\cB(s)|e^{-B}\le\sum_{b\in\cB(s)}
\exp(\theta^\top\phi_{\mathrm{stu}}(s,b))
\le|\cB(s)|e^B.
\end{aligned}
\]
The numerators and denominators are positive. Therefore, substituting the preceding bounds yields
\begin{align}
\pi_w(a\mid s)
&\ge\frac{e^{-B}}{|\cB(s)|e^B}
=\frac{e^{-2B}}{|\cB(s)|},\ 
\pi_w(a\mid s)
\le\frac{e^B}{|\cB(s)|e^{-B}}
=\frac{e^{2B}}{|\cB(s)|},\\
\pi_{\mathrm{stu},\theta}(a\mid s)
&\ge\frac{e^{-B}}{|\cB(s)|e^B}
=\frac{e^{-2B}}{|\cB(s)|},\ 
\pi_{\mathrm{stu},\theta}(a\mid s)
\le\frac{e^B}{|\cB(s)|e^{-B}}
=\frac{e^{2B}}{|\cB(s)|}.
\label{eq:ced-student-envelope}
\end{align}
Hence, for $\pi_{stu,\theta}$ and $\pi_w$, we have
\[
\begin{aligned}
\frac{\pi_{\mathrm{stu},\theta}(a\mid s)}{\pi_w(a\mid s)}
&\ge
\frac{e^{-2B}/|\cB(s)|}{e^{2B}/|\cB(s)|}
=e^{-4B},\ 
\frac{\pi_{\mathrm{stu},\theta}(a\mid s)}{\pi_w(a\mid s)}\le
\frac{e^{2B}/|\cB(s)|}{e^{-2B}/|\cB(s)|}
=e^{4B}.
\end{aligned}
\]
We thus obtain $-4B\le
\log\frac{\pi_{\mathrm{stu},\theta}(a\mid s)}{\pi_w(a\mid s)}
\le4B$. Finally, we have that

\begin{align*}
\left|\log
\frac{\pi_{\mathrm{stu},\theta}(a_{1:H}\mid x)}
{\pi_w(a_{1:H}\mid x)}\right|
&=\left|\log\prod_{h=1}^{H}
\frac{\pi_{\mathrm{stu},\theta}(a_h\mid s_h)}
{\pi_w(a_h\mid s_h)}\right|=\left|\sum_{h=1}^{H}
\log\frac{\pi_{\mathrm{stu},\theta}(a_h\mid s_h)}
{\pi_w(a_h\mid s_h)}\right|\le\sum_{h=1}^{H}
\left|\log\frac{\pi_{\mathrm{stu},\theta}(a_h\mid s_h)}
{\pi_w(a_h\mid s_h)}\right|\le4BH.
\end{align*}

In particular, setting $w=w_t$ and $\theta=\vartheta$ gives
\[
|Z_t(\vartheta;x,a_{1:H})|
=\lambda\left|\log
\frac{\pi_{\mathrm{stu},\vartheta}(a_{1:H}\mid x)}
{\pi_{w_t}(a_{1:H}\mid x)}\right|
\le4\lambda BH.
\]
This proves \eqref{eq:ced-z-bound}.
\end{proof}

\begin{proof}[Proof of Lemma \ref{lem:ced-lipschitz}]
Starting from the definition of $C_w(\theta)$, applying 
Lemma~\ref{lem:ced-z}, we have
\[
\begin{aligned}
C_w(\theta)
&=\frac{\lambda}{m}\sum_{j=1}^{m}
\sum_{a_{1:H}\in\cA(\widetilde x_j)}
\pi_{\mathrm{stu},\theta}(a_{1:H}\mid\widetilde x_j)
\log\frac{\pi_{\mathrm{stu},\theta}(a_{1:H}\mid\widetilde x_j)}
{\pi_w(a_{1:H}\mid\widetilde x_j)}\\
&\le\frac{\lambda}{m}\sum_{j=1}^{m}
\sum_{a_{1:H}\in\cA(\widetilde x_j)}
\pi_{\mathrm{stu},\theta}(a_{1:H}\mid\widetilde x_j)
\left|\log\frac{\pi_{\mathrm{stu},\theta}(a_{1:H}\mid\widetilde x_j)}
{\pi_w(a_{1:H}\mid\widetilde x_j)}\right|\\
&\le\frac{4\lambda BH}{m}\sum_{j=1}^{m}
\underbrace{\sum_{a_{1:H}\in\cA(\widetilde x_j)}
\pi_{\mathrm{stu},\theta}(a_{1:H}\mid\widetilde x_j)}_{=1}=4\lambda BH.
\end{aligned}
\]
For the lower bound, apply the KL non-negativity and we immediately observe that 
\[
\begin{aligned}
C_w(\theta)
&=\frac{\lambda}{m}\sum_{j=1}^{m}
\operatorname{KL}\!\left(
\pi_{\mathrm{stu},\theta}(\cdot\mid\widetilde x_j)
\,\middle\|\,
\pi_w(\cdot\mid\widetilde x_j)\right)\ge0.
\end{aligned}
\]
Here $\lambda/m>0$, so summing the nonnegative divergences
preserves the inequality. Together with the upper bound, this
proves \eqref{eq:ced-cost-bound}.

Now, we bound the derivative of $C_w$, define the trajectory score
\[
S_{\mathrm{stu},\theta}(x,a_{1:H})
:=\nabla_\theta\log
\pi_{\mathrm{stu},\theta}(a_{1:H}\mid x).
\]
We first bound $S_{\text{stu},\theta}$. Fix a prompt $x$ and parameter $\theta$, by direct differentiation, we have
\[
\begin{aligned}
S_{\mathrm{stu},\theta}(x,a_{1:H})
&=\nabla_\theta\log\pi_{\mathrm{stu},\theta}(a_{1:H}\mid x)=\nabla_\theta\log\prod_{h=1}^{H}
\pi_{\mathrm{stu},\theta}(a_h\mid s_h)=\sum_{h=1}^{H}\nabla_\theta
\log\pi_{\mathrm{stu},\theta}(a_h\mid s_h).
\end{aligned}
\]
We evaluate each derivative in this expression, by the linear softmax parametrization, we have
\[
\begin{aligned}
\log\pi_{\mathrm{stu},\theta}(a_h\mid s_h)
&=\theta^\top\phi_{\mathrm{stu}}(s_h,a_h)
-\log\!\left[\sum_{b\in\cB(s_h)}
e^{\theta^\top\phi_{\mathrm{stu}}(s_h,b)}\right],\\
\nabla_\theta\log\pi_{\mathrm{stu},\theta}(a_h\mid s_h)
&=\phi_{\mathrm{stu}}(s_h,a_h)
-\frac{\sum_{b\in\cB(s_h)}
e^{\theta^\top\phi_{\mathrm{stu}}(s_h,b)}
\phi_{\mathrm{stu}}(s_h,b)}
{\sum_{c\in\cB(s_h)}e^{\theta^\top\phi_{\mathrm{stu}}(s_h,c)}}\\
&=\phi_{\mathrm{stu}}(s_h,a_h)
-\sum_{b\in\cB(s_h)}
\frac{e^{\theta^\top\phi_{\mathrm{stu}}(s_h,b)}}
{\sum_{c\in\cB(s_h)}e^{\theta^\top\phi_{\mathrm{stu}}(s_h,c)}}
\phi_{\mathrm{stu}}(s_h,b)\\
&=\phi_{\mathrm{stu}}(s_h,a_h)
-\sum_{b\in\cB(s_h)}
\pi_{\mathrm{stu},\theta}(b\mid s_h)
\phi_{\mathrm{stu}}(s_h,b)\\
&=:D_{\theta,h}(x,a_{1:h}).
\end{aligned}
\]
Substituting the computed
token derivative into the score expansion at the start of this
step yields
\[
\begin{aligned}
S_{\mathrm{stu},\theta}(x,a_{1:H})
&=\sum_{h=1}^{H}
\nabla_\theta\log\pi_{\mathrm{stu},\theta}(a_h\mid s_h)
=\sum_{h=1}^{H}D_{\theta,h}(x,a_{1:h}).
\end{aligned}
\]
The pointwise norm of each increment is bounded by
\[
\begin{aligned}
\|D_{\theta,h}(x,a_{1:h})\|_2
&=\left\|\phi_{\mathrm{stu}}(s_h,a_h)
-\sum_{b\in\cB(s_h)}\pi_{\mathrm{stu},\theta}(b\mid s_h)
\phi_{\mathrm{stu}}(s_h,b)\right\|_2\\
&\le\|\phi_{\mathrm{stu}}(s_h,a_h)\|_2
+\left\|\sum_{b\in\cB(s_h)}
\pi_{\mathrm{stu},\theta}(b\mid s_h)
\phi_{\mathrm{stu}}(s_h,b)\right\|_2\\
&\le1+\sum_{b\in\cB(s_h)}
\pi_{\mathrm{stu},\theta}(b\mid s_h)
\|\phi_{\mathrm{stu}}(s_h,b)\|_2\\
&\le1+\sum_{b\in\cB(s_h)}
\pi_{\mathrm{stu},\theta}(b\mid s_h)
=2.
\end{aligned}
\]
Consequently, we have
\begin{equation}
\|S_{\mathrm{stu},\theta}(x,a_{1:H})\|_2
=\left\|\sum_{h=1}^{H}D_{\theta,h}(x,a_{1:h})\right\|_2
\le\sum_{h=1}^{H}\|D_{\theta,h}(x,a_{1:h})\|_2
\le2H.\label{eq:ced-score-pointwise}
\end{equation}
To obtain the sharper second-moment bound, we  use the
conditional centering of the increments.
Given $x,a_{1:h-1}$, the next token has law
$a_h\sim\pi_{\mathrm{stu},\theta}(\cdot\mid s_h)$. Hence, we have
\[
\begin{aligned}
&\EE_{a_h\sim\pi_{\mathrm{stu},\theta}(\cdot\mid s_h)}
\!\left[D_{\theta,h}(x,a_{1:h})\,\middle|\,x,a_{1:h-1}\right]\\
=&\sum_{b\in\cB(s_h)}
\pi_{\mathrm{stu},\theta}(b\mid s_h)
\left[\phi_{\mathrm{stu}}(s_h,b)
-\sum_{c\in\cB(s_h)}
\pi_{\mathrm{stu},\theta}(c\mid s_h)
\phi_{\mathrm{stu}}(s_h,c)\right]\\
=&\sum_{b\in\cB(s_h)}
\pi_{\mathrm{stu},\theta}(b\mid s_h)\phi_{\mathrm{stu}}(s_h,b)-
\left[\sum_{b\in\cB(s_h)}\pi_{\mathrm{stu},\theta}(b\mid s_h)\right]
\left[\sum_{c\in\cB(s_h)}\pi_{\mathrm{stu},\theta}(c\mid s_h)
\phi_{\mathrm{stu}}(s_h,c)\right]\\
=&\sum_{b\in\cB(s_h)}
\pi_{\mathrm{stu},\theta}(b\mid s_h)\phi_{\mathrm{stu}}(s_h,b)
-\sum_{c\in\cB(s_h)}
\pi_{\mathrm{stu},\theta}(c\mid s_h)\phi_{\mathrm{stu}}(s_h,c)=0.
\end{aligned}
\]
The second equality uses that the probabilities sum to one.
For the conditional second moment, expanding the square using
$\|u-v\|_2^2=\|u\|_2^2-2u^\top v+\|v\|_2^2$, we get
\begin{align*}
&\EE_{a_h\sim\pi_{\mathrm{stu},\theta}(\cdot\mid s_h)}
\!\left[\|D_{\theta,h}(x,a_{1:h})\|_2^2
\,\middle|\,x,a_{1:h-1}\right]\\
=&\sum_{b\in\cB(s_h)}\pi_{\mathrm{stu},\theta}(b\mid s_h)
\left\|\phi_{\mathrm{stu}}(s_h,b)
-\sum_{c\in\cB(s_h)}\pi_{\mathrm{stu},\theta}(c\mid s_h)
\phi_{\mathrm{stu}}(s_h,c)\right\|_2^2\\
=&\sum_{b\in\cB(s_h)}\pi_{\mathrm{stu},\theta}(b\mid s_h)
\|\phi_{\mathrm{stu}}(s_h,b)\|_2^2-2
\left[\sum_{b\in\cB(s_h)}\pi_{\mathrm{stu},\theta}(b\mid s_h)
\phi_{\mathrm{stu}}(s_h,b)\right]^\top
\left[\sum_{c\in\cB(s_h)}\pi_{\mathrm{stu},\theta}(c\mid s_h)
\phi_{\mathrm{stu}}(s_h,c)\right]\\
&\ +
\left[\sum_{b\in\cB(s_h)}\pi_{\mathrm{stu},\theta}(b\mid s_h)\right]
\left\|\sum_{c\in\cB(s_h)}\pi_{\mathrm{stu},\theta}(c\mid s_h)
\phi_{\mathrm{stu}}(s_h,c)\right\|_2^2\\
=&\sum_{b\in\cB(s_h)}\pi_{\mathrm{stu},\theta}(b\mid s_h)
\|\phi_{\mathrm{stu}}(s_h,b)\|_2^2
-2\left\|\sum_{c\in\cB(s_h)}\pi_{\mathrm{stu},\theta}(c\mid s_h)
\phi_{\mathrm{stu}}(s_h,c)\right\|_2^2\\
&\ +
\left\|\sum_{c\in\cB(s_h)}\pi_{\mathrm{stu},\theta}(c\mid s_h)
\phi_{\mathrm{stu}}(s_h,c)\right\|_2^2\\
=&\sum_{b\in\cB(s_h)}\pi_{\mathrm{stu},\theta}(b\mid s_h)
\|\phi_{\mathrm{stu}}(s_h,b)\|_2^2
-\left\|\sum_{c\in\cB(s_h)}\pi_{\mathrm{stu},\theta}(c\mid s_h)
\phi_{\mathrm{stu}}(s_h,c)\right\|_2^2\\
\le&\sum_{b\in\cB(s_h)}\pi_{\mathrm{stu},\theta}(b\mid s_h)
\|\phi_{\mathrm{stu}}(s_h,b)\|_2^2\le\sum_{b\in\cB(s_h)}\pi_{\mathrm{stu},\theta}(b\mid s_h)
=1.
\end{align*}

If $h<k$, then $D_{\theta,h}(x,a_{1:h})$ is already determined
by $x,a_{1:k-1}$. Marginalizing the suffix after time $k$ and
then applying iterated expectation, we get
\begin{align*}
&\EE_{a_{1:H}\sim\pi_{\mathrm{stu},\theta}(\cdot\mid x)}
\!\left[D_{\theta,h}(x,a_{1:h})^\top
D_{\theta,k}(x,a_{1:k})\right]\\
=&\EE_{a_{1:k-1}\sim\pi_{\mathrm{stu},\theta}(\cdot\mid x)}
\Bigg[
\EE_{a_k\sim\pi_{\mathrm{stu},\theta}(\cdot\mid s_k)}
\!\left[D_{\theta,h}(x,a_{1:h})^\top
D_{\theta,k}(x,a_{1:k})\,\middle|\,x,a_{1:k-1}\right]
\Bigg]\\
=&\EE_{a_{1:k-1}\sim\pi_{\mathrm{stu},\theta}(\cdot\mid x)}
\Bigg[D_{\theta,h}(x,a_{1:h})^\top
\EE_{a_k\sim\pi_{\mathrm{stu},\theta}(\cdot\mid s_k)}
\!\left[D_{\theta,k}(x,a_{1:k})\,\middle|\,x,a_{1:k-1}\right]
\Bigg]\\
=&\EE_{a_{1:k-1}\sim\pi_{\mathrm{stu},\theta}(\cdot\mid x)}
\!\left[D_{\theta,h}(x,a_{1:h})^\top 0\right]
=0.
\end{align*}
Pulling the earlier increment outside the inner expectation is
valid precisely because the earlier increment is measurable
given that prefix.

By the linearity of conditional means, we further have
\begin{align*}
\EE_{a_{1:H}\sim\pi_{\mathrm{stu},\theta}(\cdot\mid x)}
\!\left[S_{\mathrm{stu},\theta}(x,a_{1:H})\right]&=\sum_{h=1}^{H}
\EE_{a_{1:h-1}\sim\pi_{\mathrm{stu},\theta}(\cdot\mid x)}
\Bigg[\EE_{a_h\sim\pi_{\mathrm{stu},\theta}(\cdot\mid s_h)}
\!\left[D_{\theta,h}(x,a_{1:h})\,\middle|\,x,a_{1:h-1}\right]
\Bigg]\\
&=\sum_{h=1}^{H}
\EE_{a_{1:h-1}\sim\pi_{\mathrm{stu},\theta}(\cdot\mid x)}[0]
=0.
\end{align*}
When $h=1$, the prefix is empty and its expectation has just one
possible value. Expanding the square of the sum of the
increments, including all cross terms, gives
\begin{align*}
&\EE_{a_{1:H}\sim\pi_{\mathrm{stu},\theta}(\cdot\mid x)}
\!\left[\|S_{\mathrm{stu},\theta}(x,a_{1:H})\|_2^2\right]\\
=&\EE_{a_{1:H}\sim\pi_{\mathrm{stu},\theta}(\cdot\mid x)}
\!\left[\left\|\sum_{h=1}^{H}D_{\theta,h}(x,a_{1:h})\right\|_2^2\right]\\
=&\sum_{h=1}^{H}
\EE_{a_{1:H}\sim\pi_{\mathrm{stu},\theta}(\cdot\mid x)}
\!\left[\|D_{\theta,h}(x,a_{1:h})\|_2^2\right]+2\sum_{1\le h<k\le H}
\underbrace{\EE_{a_{1:H}\sim\pi_{\mathrm{stu},\theta}(\cdot\mid x)}
\!\left[D_{\theta,h}(x,a_{1:h})^\top
D_{\theta,k}(x,a_{1:k})\right]}_{=0}\\
=&\sum_{h=1}^{H}
\EE_{a_{1:h-1}\sim\pi_{\mathrm{stu},\theta}(\cdot\mid x)}
\Bigg[\EE_{a_h\sim\pi_{\mathrm{stu},\theta}(\cdot\mid s_h)}
\!\left[\|D_{\theta,h}(x,a_{1:h})\|_2^2
\,\middle|\,x,a_{1:h-1}\right]\Bigg]\\
\le&\sum_{h=1}^{H}
\EE_{a_{1:h-1}\sim\pi_{\mathrm{stu},\theta}(\cdot\mid x)}[1]
=H.
\end{align*}
We have therefore proved the two score identities
\begin{equation}
\begin{aligned}
\EE_{a_{1:H}\sim\pi_{\mathrm{stu},\theta}(\cdot\mid x)}
\!\left[S_{\mathrm{stu},\theta}(x,a_{1:H})\right]&=0,\ 
\EE_{a_{1:H}\sim\pi_{\mathrm{stu},\theta}(\cdot\mid x)}
\!\left[\|S_{\mathrm{stu},\theta}(x,a_{1:H})\|_2^2\right]&\le H.
\end{aligned}
\label{eq:ced-score-moment}
\end{equation}
Now, we can bound $\nabla_\theta C_w(\theta)$. Fix $w$ while differentiating with respect to $\theta$.
\begin{align*}
\nabla_\theta C_w(\theta)
&=\nabla_\theta\Bigg[
\frac{\lambda}{m}\sum_{j=1}^{m}
\sum_{a_{1:H}\in\cA(\widetilde x_j)}
\pi_{\mathrm{stu},\theta}(a_{1:H}\mid\widetilde x_j)
\log\frac{\pi_{\mathrm{stu},\theta}(a_{1:H}\mid\widetilde x_j)}
{\pi_w(a_{1:H}\mid\widetilde x_j)}\Bigg]\\
&=\frac{\lambda}{m}\sum_{j=1}^{m}
\sum_{a_{1:H}\in\cA(\widetilde x_j)}
\nabla_\theta\Bigg[
\pi_{\mathrm{stu},\theta}(a_{1:H}\mid\widetilde x_j)
\log\frac{\pi_{\mathrm{stu},\theta}(a_{1:H}\mid\widetilde x_j)}
{\pi_w(a_{1:H}\mid\widetilde x_j)}\Bigg].
\end{align*}
We now expand the derivative of each summand. First, by direct algebra, for any $x$, we have
\[
\begin{aligned}
  \nabla_\theta\pi_{\mathrm{stu},\theta}(a_{1:H}\mid x)
&=\nabla_\theta\exp\!\left[
\log\pi_{\mathrm{stu},\theta}(a_{1:H}\mid x)\right]=\pi_{\mathrm{stu},\theta}(a_{1:H}\mid x)
\nabla_\theta\log\pi_{\mathrm{stu},\theta}(a_{1:H}\mid x)\\&=\pi_{\mathrm{stu},\theta}(a_{1:H}\mid x)
S_{\mathrm{stu},\theta}(x,a_{1:H}).
\end{aligned}
\]
Moreover, $\pi_w$ does not depend on $\theta$, and hence we have
\[
\begin{aligned}
\nabla_\theta\log
\frac{\pi_{\mathrm{stu},\theta}(a_{1:H}\mid x)}
{\pi_w(a_{1:H}\mid x)}
&=\nabla_\theta\log\pi_{\mathrm{stu},\theta}(a_{1:H}\mid x)
-\nabla_\theta\log\pi_w(a_{1:H}\mid x)=S_{\mathrm{stu},\theta}(x,a_{1:H})-0.
\end{aligned}
\]
Therefore, by the product rule of differentiation, we have
\begin{align*}
&\nabla_\theta\left[
\pi_{\mathrm{stu},\theta}(a_{1:H}\mid x)
\log\frac{\pi_{\mathrm{stu},\theta}(a_{1:H}\mid x)}
{\pi_w(a_{1:H}\mid x)}\right]\\
=&
\left[\nabla_\theta\pi_{\mathrm{stu},\theta}(a_{1:H}\mid x)\right]
\log\frac{\pi_{\mathrm{stu},\theta}(a_{1:H}\mid x)}
{\pi_w(a_{1:H}\mid x)}+
\pi_{\mathrm{stu},\theta}(a_{1:H}\mid x)
\nabla_\theta\log
\frac{\pi_{\mathrm{stu},\theta}(a_{1:H}\mid x)}
{\pi_w(a_{1:H}\mid x)}\\
=&
\pi_{\mathrm{stu},\theta}(a_{1:H}\mid x)
S_{\mathrm{stu},\theta}(x,a_{1:H})
\log\frac{\pi_{\mathrm{stu},\theta}(a_{1:H}\mid x)}
{\pi_w(a_{1:H}\mid x)}+
\pi_{\mathrm{stu},\theta}(a_{1:H}\mid x)
S_{\mathrm{stu},\theta}(x,a_{1:H}).
\end{align*}
For the second term, its sum is
the score expectation already evaluated in
\eqref{eq:ced-score-moment}:
\[
\begin{aligned}
&\sum_{a_{1:H}\in\cA(x)}
\pi_{\mathrm{stu},\theta}(a_{1:H}\mid x)
S_{\mathrm{stu},\theta}(x,a_{1:H})=\EE_{a_{1:H}\sim\pi_{\mathrm{stu},\theta}(\cdot\mid x)}
\!\left[S_{\mathrm{stu},\theta}(x,a_{1:H})\right]
=0.
\end{aligned}
\]
The first equality is the definition of expectation and the second is the first identity in
\eqref{eq:ced-score-moment}, applied to this $x$ and $\theta$.
Substituting both product-rule terms into the definition of
$C_w$, and then using this zero sum, we obtain
\begin{align}
\nabla_\theta C_w(\theta)
&=\frac{\lambda}{m}\sum_{j=1}^{m}
\sum_{a_{1:H}\in\cA(\widetilde x_j)}
\pi_{\mathrm{stu},\theta}(a_{1:H}\mid\widetilde x_j)
S_{\mathrm{stu},\theta}(\widetilde x_j,a_{1:H})\times
\log\frac{\pi_{\mathrm{stu},\theta}(a_{1:H}\mid\widetilde x_j)}
{\pi_w(a_{1:H}\mid\widetilde x_j)}\nonumber\\
&\ +\frac{\lambda}{m}\sum_{j=1}^{m}
\underbrace{\sum_{a_{1:H}\in\cA(\widetilde x_j)}
\pi_{\mathrm{stu},\theta}(a_{1:H}\mid\widetilde x_j)
S_{\mathrm{stu},\theta}(\widetilde x_j,a_{1:H})}_{=0}\nonumber\\
&=\frac{\lambda}{m}\sum_{j=1}^{m}
\EE_{a_{1:H}\sim\pi_{\mathrm{stu},\theta}(\cdot\mid\widetilde x_j)}
\Bigg[S_{\mathrm{stu},\theta}(\widetilde x_j,a_{1:H})\times
\log\frac{\pi_{\mathrm{stu},\theta}(a_{1:H}\mid\widetilde x_j)}
{\pi_w(a_{1:H}\mid\widetilde x_j)}\Bigg].
\label{eq:ced-cost-gradient}
\end{align}
In the last line, we rewrite the finite weighted
sum as an expectation.

The quantity we want to control has the exact representation

\begin{align}
|C_w(\theta)-C_w(\vartheta)|
&=\left|\int_0^1\frac{d}{du}
C_w\bigl(\vartheta+u(\theta-\vartheta)\bigr)\,du\right|=\left|\int_0^1
\nabla_\theta C_w\bigl(\vartheta+u(\theta-\vartheta)\bigr)^\top
(\theta-\vartheta)\,du\right|.\label{eq:ced-student-segment}
\end{align}

These identities follow from the fundamental theorem of calculus. We bound the gradient appearing here first
and then return to this expression. By the triangle inequality, the pointwise
log-ratio bound in
Lemma~\ref{lem:ced-z}, we have
\begin{align*}
\|\nabla_\theta C_w(\theta)\|_2
&=\frac{\lambda}{m}\Bigg\|\sum_{j=1}^{m}
\sum_{a_{1:H}\in\cA(\widetilde x_j)}
\pi_{\mathrm{stu},\theta}(a_{1:H}\mid\widetilde x_j)
S_{\mathrm{stu},\theta}(\widetilde x_j,a_{1:H})\times
\log\frac{\pi_{\mathrm{stu},\theta}(a_{1:H}\mid\widetilde x_j)}
{\pi_w(a_{1:H}\mid\widetilde x_j)}\Bigg\|_2\\
&\le\frac{\lambda}{m}\sum_{j=1}^{m}
\EE_{a_{1:H}\sim\pi_{\mathrm{stu},\theta}(\cdot\mid\widetilde x_j)}
\Bigg[\|S_{\mathrm{stu},\theta}(\widetilde x_j,a_{1:H})\|_2\times
\left|\log\frac{\pi_{\mathrm{stu},\theta}(a_{1:H}\mid\widetilde x_j)}
{\pi_w(a_{1:H}\mid\widetilde x_j)}\right|\Bigg]\\
&\le\frac{4\lambda BH}{m}\sum_{j=1}^{m}
\EE_{a_{1:H}\sim\pi_{\mathrm{stu},\theta}(\cdot\mid\widetilde x_j)}
\!\left[\|S_{\mathrm{stu},\theta}(\widetilde x_j,a_{1:H})\|_2\right]\\
&\le\frac{4\lambda BH}{m}\sum_{j=1}^{m}
\left(\EE_{a_{1:H}\sim\pi_{\mathrm{stu},\theta}(\cdot\mid\widetilde x_j)}
\!\left[\|S_{\mathrm{stu},\theta}(\widetilde x_j,a_{1:H})\|_2^2\right]
\right)^{1/2}\\
&\le\frac{4\lambda BH}{m}\sum_{j=1}^{m}\sqrt H
=4\lambda BH^{3/2}.
\end{align*}

In the second inequality, we apply Lemma \ref{lem:ced-z} and in the last inequality, we apply \eqref{eq:ced-score-moment}.

For $u\in[0,1]$, the segment between $\vartheta$ and $\theta$
remains inside the student ball because
\[
\begin{aligned}
\|\vartheta+u(\theta-\vartheta)\|_2
&=\|(1-u)\vartheta+u\theta\|_2\le(1-u)\|\vartheta\|_2+u\|\theta\|_2
\le(1-u)B+uB=B.
\end{aligned}
\]
Thus, the gradient bound applies at every point of the segment in
\eqref{eq:ced-student-segment}. Return to that representation and we get
\[
\begin{aligned}
|C_w(\theta)-C_w(\vartheta)|
&\le\int_0^1
\left|\nabla_\theta C_w\bigl(\vartheta+u(\theta-\vartheta)\bigr)^\top
(\theta-\vartheta)\right|\,du\\
&\le\|\theta-\vartheta\|_2\int_0^1
\left\|\nabla_\theta C_w\bigl(\vartheta+u(\theta-\vartheta)\bigr)\right\|_2\,du\\
&\le\|\theta-\vartheta\|_2\int_0^1
4\lambda BH^{3/2}\,du=4\lambda BH^{3/2}\|\theta-\vartheta\|_2.
\end{aligned}
\]
This proves \eqref{eq:ced-stu-lip}.
To  prove \eqref{eq:ced-w-lip}, we start with the difference that must be bounded and expand two costs, we have
\begin{align}
C_w(\theta)-C_v(\theta)
&=\frac{\lambda}{m}\sum_{j=1}^{m}
\sum_{a_{1:H}\in\cA(\widetilde x_j)}
\pi_{\mathrm{stu},\theta}(a_{1:H}\mid\widetilde x_j)\times[
\log\frac{\pi_{\mathrm{stu},\theta}(a_{1:H}\mid\widetilde x_j)}
{\pi_w(a_{1:H}\mid\widetilde x_j)}
-\log\frac{\pi_{\mathrm{stu},\theta}(a_{1:H}\mid\widetilde x_j)}
{\pi_v(a_{1:H}\mid\widetilde x_j)}]\nonumber\\
&=\frac{\lambda}{m}\sum_{j=1}^{m}
\sum_{a_{1:H}\in\cA(\widetilde x_j)}
\pi_{\mathrm{stu},\theta}(a_{1:H}\mid\widetilde x_j)\times\Big[
-\log\pi_w(a_{1:H}\mid\widetilde x_j)+\log\pi_v(a_{1:H}\mid\widetilde x_j)\Big]\nonumber\\
&=\frac{\lambda}{m}\sum_{j=1}^{m}
\sum_{a_{1:H}\in\cA(\widetilde x_j)}
\pi_{\mathrm{stu},\theta}(a_{1:H}\mid\widetilde x_j)\times\left[
\log\pi_v(a_{1:H}\mid\widetilde x_j)
-\log\pi_w(a_{1:H}\mid\widetilde x_j)\right].
\label{eq:ced-calibration-cost-difference}
\end{align}
Thus, it suffices to control the difference of calibration log
probabilities. We derive that control from the calibration
softmax itself:
\[
\log\pi_w(a\mid s)=w^\top\phi(s,a)
-\log\!\left[\sum_{b\in\cB(s)}e^{w^\top\phi(s,b)}\right],\]
\begin{align*}
\nabla_w\log\pi_w(a\mid s)=&\phi(s,a)
-\frac{\sum_{b\in\cB(s)}e^{w^\top\phi(s,b)}\phi(s,b)}
{\sum_{c\in\cB(s)}e^{w^\top\phi(s,c)}}=\phi(s,a)
-\sum_{b\in\cB(s)}
\frac{e^{w^\top\phi(s,b)}}{\sum_{c\in\cB(s)}e^{w^\top\phi(s,c)}}
\phi(s,b)\\
=&\phi(s,a)-\sum_{b\in\cB(s)}\pi_w(b\mid s)\phi(s,b).
\end{align*}
We bound this derivative by the following:
\[
\begin{aligned}
\|\nabla_w\log\pi_w(a\mid s)\|_2
&=\left\|\phi(s,a)-\sum_{b\in\cB(s)}
\pi_w(b\mid s)\phi(s,b)\right\|_2\le\|\phi(s,a)\|_2
+\left\|\sum_{b\in\cB(s)}\pi_w(b\mid s)\phi(s,b)\right\|_2\\
&\le1+\sum_{b\in\cB(s)}\pi_w(b\mid s)\|\phi(s,b)\|_2\le1+\sum_{b\in\cB(s)}\pi_w(b\mid s)=2.
\end{aligned}
\]
Since $W$ is convex, $v+u(w-v)\in W$ for $u\in[0,1]$.
The same one-dimensional integration argument, now displayed
for the token log probability, gives
\[
\begin{aligned}
\log\pi_w(a\mid s)-\log\pi_v(a\mid s)
&=\int_0^1\frac{d}{du}\log\pi_{v+u(w-v)}(a\mid s)\,du\\
&=\int_0^1
\left.\nabla_w\log\pi_w(a\mid s)\right|_{w=v+u(w-v)}^{\top}
(w-v)\,du.
\end{aligned}
\]
Consequently, we have
\[
\begin{aligned}
|\log\pi_w(a\mid s)-\log\pi_v(a\mid s)|
&\le\int_0^1
\left\|\left.\nabla_w\log\pi_w(a\mid s)\right|_{w=v+u(w-v)}\right\|_2
\|w-v\|_2\,du\\
&\le\int_0^1 2\|w-v\|_2\,du
=2\|w-v\|_2.
\end{aligned}
\]
Thus, we have
\[
\begin{aligned}
|\log\pi_w(a_{1:H}\mid x)-\log\pi_v(a_{1:H}\mid x)|&=\left|\sum_{h=1}^{H}
\bigl[\log\pi_w(a_h\mid s_h)-\log\pi_v(a_h\mid s_h)\bigr]\right|\\
&\le\sum_{h=1}^{H}
|\log\pi_w(a_h\mid s_h)-\log\pi_v(a_h\mid s_h)|\\
&\le\sum_{h=1}^{H}2\|w-v\|_2
=2H\|w-v\|_2.
\end{aligned}
\]
Now, we return to the cost-difference representation
\eqref{eq:ced-calibration-cost-difference}. Plugging the bound derived above back, we obtain, 
\begin{align*}
|C_w(\theta)-C_v(\theta)|
&\le\frac{\lambda}{m}\sum_{j=1}^{m}
\sum_{a_{1:H}\in\cA(\widetilde x_j)}
\pi_{\mathrm{stu},\theta}(a_{1:H}\mid\widetilde x_j)\times
|\log\pi_v(a_{1:H}\mid\widetilde x_j)
-\log\pi_w(a_{1:H}\mid\widetilde x_j)|\\
&\le\frac{2\lambda H\|w-v\|_2}{m}\sum_{j=1}^{m}
\sum_{a_{1:H}\in\cA(\widetilde x_j)}
\pi_{\mathrm{stu},\theta}(a_{1:H}\mid\widetilde x_j)\\
&=\frac{2\lambda H\|w-v\|_2}{m}\sum_{j=1}^{m}1
=2\lambda H\|w-v\|_2.
\end{align*}
The right-hand side is independent of $\theta$. Taking the
supremum over $\theta\in\Theta$ proves \eqref{eq:ced-w-lip}.
\end{proof}

\begin{proof}[Proof of Lemma~\ref{lem:ced-smoothness}]
We first verify analyticity, which will also be used in the two
\L{}ojasiewicz arguments below. For each fixed legal state $s$ and
token $a\in\cB(s)$, the softmax expressions are
\[
\begin{aligned}
\pi_{\mathrm{stu},\theta}(a\mid s)
&=\frac{e^{\theta^\top\phi_{\mathrm{stu}}(s,a)}}
{\sum_{b\in\cB(s)}e^{\theta^\top\phi_{\mathrm{stu}}(s,b)}},\\
\log\pi_{\mathrm{stu},\theta}(a\mid s)
&=\theta^\top\phi_{\mathrm{stu}}(s,a)
-\log\!\left[\sum_{b\in\cB(s)}
e^{\theta^\top\phi_{\mathrm{stu}}(s,b)}\right].
\end{aligned}
\]
The exponentials of these linear functions are analytic, and their
finite sum is strictly positive on $\RR^d$. Since reciprocals and
logarithms are analytic on $(0,\infty)$, both displayed expressions
are analytic. For each fixed feasible answer, the chain rule for
trajectory probabilities gives
\[
\begin{aligned}
\pi_{\mathrm{stu},\theta}(a_{1:H}\mid x)
&=\prod_{h=1}^{H}\pi_{\mathrm{stu},\theta}(a_h\mid x,a_{1:h-1}),\\
\log\pi_{\mathrm{stu},\theta}(a_{1:H}\mid x)
&=\sum_{h=1}^{H}\log\pi_{\mathrm{stu},\theta}(a_h\mid x,a_{1:h-1}).
\end{aligned}
\]
Finite products and sums preserve analyticity. For any fixed
positive answer laws $Q_j$, expanding the average KL gives
\begin{align*}
&\frac1m\sum_{j=1}^{m}
\KL\!\left(\pi_{\mathrm{stu},\theta}(\cdot\mid\widetilde x_j)
\,\middle\|\,Q_j\right)=\frac1m\sum_{j=1}^{m}
\sum_{a_{1:H}\in\cA(\widetilde x_j)}
\pi_{\mathrm{stu},\theta}(a_{1:H}\mid\widetilde x_j)\times
\left[
\log\pi_{\mathrm{stu},\theta}(a_{1:H}\mid\widetilde x_j)
-\log Q_j(a_{1:H})\right].
\end{align*}
Each $\log Q_j(a_{1:H})$ is a finite constant, and the sum has
finitely many analytic summands. Thus this function is analytic.
Choosing $Q_j=\pi_w(\cdot\mid\widetilde x_j)$ and multiplying
by $\lambda$ proves the assertion for $C_w$; choosing
$Q_j=\pi_{\mathrm{stu},\theta^\dagger_{\lambda,m}}
(\cdot\mid\widetilde x_j)$ proves it for
$\cK_{\lambda,m}$. In particular, $F=C_{w^\star_\lambda}$ and
$\Delta_{\lambda,m}=F-F(\theta^\dagger_{\lambda,m})$ are analytic.

We next bound the Hessian of $C_w$. For this calculation, define
the unscaled trajectory log ratio
\[
\ell_w(\theta;x,a_{1:H})
:=\log\frac{\pi_{\mathrm{stu},\theta}(a_{1:H}\mid x)}
{\pi_w(a_{1:H}\mid x)}.
\]
By Lemma~\ref{lem:ced-z}, $|\ell_w|\le4BH$ on $\Theta$.
Recall from \eqref{eq:ced-cost-gradient} that
\[
\nabla_\theta C_w(\theta)
=\frac{\lambda}{m}\sum_{j=1}^{m}
\sum_{a_{1:H}\in\cA(\widetilde x_j)}
\pi_{\mathrm{stu},\theta}(a_{1:H}\mid\widetilde x_j)
\ell_w(\theta;\widetilde x_j,a_{1:H})
S_{\mathrm{stu},\theta}(\widetilde x_j,a_{1:H}).
\]
We take the derivative again and use the product rule to compute the Hessian matrix
\begin{align}
\nabla_\theta^2 C_w(\theta)
&=\frac{\lambda}{m}\sum_{j=1}^{m}
\EE_{a_{1:H}\sim\pi_{\mathrm{stu},\theta}(\cdot\mid\widetilde x_j)}
\Bigl[
\bigl(\ell_w(\theta;\widetilde x_j,a_{1:H})+1\bigr)
S_{\mathrm{stu},\theta}(\widetilde x_j,a_{1:H})
S_{\mathrm{stu},\theta}(\widetilde x_j,a_{1:H})^\top
\nonumber\\[-2pt]
&\hspace{34mm}
+\ell_w(\theta;\widetilde x_j,a_{1:H})
\nabla_\theta S_{\mathrm{stu},\theta}(\widetilde x_j,a_{1:H})
\Bigr].
\label{eq:ced-hessian-expression}
\end{align}
We already know from \eqref{eq:ced-score-moment} that the expected
squared score norm is at most $H$. To bound its derivative, we
differentiate the token-score formula in the proof of
Lemma~\ref{lem:ced-lipschitz}:
\begin{align*}
\nabla_\theta S_{\mathrm{stu},\theta}(x,a_{1:H})
&=-\sum_{h=1}^{H}\Bigg[
\sum_{b\in\cB(s_h)}\pi_{\mathrm{stu},\theta}(b\mid s_h)
\phi_{\mathrm{stu}}(s_h,b)\phi_{\mathrm{stu}}(s_h,b)^\top\\
&
-\left(\sum_{b\in\cB(s_h)}\pi_{\mathrm{stu},\theta}(b\mid s_h)
\phi_{\mathrm{stu}}(s_h,b)\right)
\left(\sum_{b\in\cB(s_h)}\pi_{\mathrm{stu},\theta}(b\mid s_h)
\phi_{\mathrm{stu}}(s_h,b)\right)^\top\Bigg].
\end{align*}
This is a covariance matrix and is positive
semidefinite. For any unit vector $v\in\RR^d$, we have
\begin{align*}
0
&\le-v^\top\nabla_\theta S_{\mathrm{stu},\theta}(x,a_{1:H})v\\
&=\sum_{h=1}^{H}\Bigg[
\sum_{b\in\cB(s_h)}\pi_{\mathrm{stu},\theta}(b\mid s_h)
\bigl(v^\top\phi_{\mathrm{stu}}(s_h,b)\bigr)^2
-\left(\sum_{b\in\cB(s_h)}\pi_{\mathrm{stu},\theta}(b\mid s_h)
v^\top\phi_{\mathrm{stu}}(s_h,b)\right)^2\Bigg]\\
&\le\sum_{h=1}^{H}\sum_{b\in\cB(s_h)}
\pi_{\mathrm{stu},\theta}(b\mid s_h)
\|v\|_2^2\|\phi_{\mathrm{stu}}(s_h,b)\|_2^2
\le\sum_{h=1}^{H}1=H.
\end{align*}
The first upper bound discards a nonnegative square and applies
Cauchy-Schwarz; the last uses the feature norm bound and that the
token probabilities sum to one. Thus
$\|\nabla_\theta S_{\mathrm{stu},\theta}(x,a_{1:H})\|_{\mathrm{op}}
\le H$. Taking operator norms in \eqref{eq:ced-hessian-expression}
and using $\|SS^\top\|_{\mathrm{op}}=\|S\|_2^2$ now yields
\begin{align*}
\|\nabla_\theta^2 C_w(\theta)\|_{\mathrm{op}}
&\le\frac{\lambda}{m}\sum_{j=1}^{m}
\EE_{a_{1:H}\sim\pi_{\mathrm{stu},\theta}(\cdot\mid\widetilde x_j)}
\!\left[
(4BH+1)\|S_{\mathrm{stu},\theta}(\widetilde x_j,a_{1:H})\|_2^2
+4BH^2\right]\\
&\le\frac{\lambda}{m}\sum_{j=1}^{m}
\left[(4BH+1)H+4BH^2\right]
=\lambda H(1+8BH)=L_{\mathrm{st}}.
\end{align*}
This proves \eqref{eq:ced-hessian-bound}. Since $\Theta$ is
convex, the segment joining $\vartheta$ and $\theta$ lies in
$\Theta$. The fundamental theorem of calculus then gives
\begin{align*}
\|\nabla_\theta C_w(\theta)-\nabla_\theta C_w(\vartheta)\|_2
&=\left\|\int_0^1
\nabla_\theta^2 C_w(\vartheta+u(\theta-\vartheta))
(\theta-\vartheta)\,du\right\|_2\le\int_0^1 L_{\mathrm{st}}\|\theta-\vartheta\|_2\,du
=L_{\mathrm{st}}\|\theta-\vartheta\|_2,
\end{align*}
which proves \eqref{eq:ced-gradient-smoothness}.

Finally, subtract the two representations in
\eqref{eq:ced-cost-gradient}. The student law and its score agree
in both terms, so cancellation of their log probabilities gives
\begin{align*}
\nabla_\theta C_w(\theta)-\nabla_\theta C_v(\theta)
&=\frac{\lambda}{m}\sum_{j=1}^{m}
\EE_{a_{1:H}\sim\pi_{\mathrm{stu},\theta}(\cdot\mid\widetilde x_j)}
\!\left[
S_{\mathrm{stu},\theta}(\widetilde x_j,a_{1:H})
\log\frac{\pi_v(a_{1:H}\mid\widetilde x_j)}
{\pi_w(a_{1:H}\mid\widetilde x_j)}\right].
\end{align*}
The proof of \eqref{eq:ced-w-lip} already established the
pointwise bound
$|\log(\pi_v(a_{1:H}\mid x)/\pi_w(a_{1:H}\mid x))|
\le2H\|w-v\|_2$. Applying that bound and Cauchy-Schwarz to the
score expectation, and then using \eqref{eq:ced-score-moment},
we obtain
\begin{align*}
\|\nabla_\theta C_w(\theta)-\nabla_\theta C_v(\theta)\|_2
&\le\frac{2\lambda H\|w-v\|_2}{m}
\sum_{j=1}^{m}
\EE_{a_{1:H}\sim\pi_{\mathrm{stu},\theta}(\cdot\mid\widetilde x_j)}
\!\left[\|S_{\mathrm{stu},\theta}(\widetilde x_j,a_{1:H})\|_2\right]\\
&\le\frac{2\lambda H\|w-v\|_2}{m}
\sum_{j=1}^{m}
\left(\EE_{a_{1:H}\sim\pi_{\mathrm{stu},\theta}(\cdot\mid\widetilde x_j)}
\!\left[\|S_{\mathrm{stu},\theta}(\widetilde x_j,a_{1:H})\|_2^2\right]
\right)^{1/2}\\
&\le2\lambda H^{3/2}\|w-v\|_2.
\end{align*}
This proves \eqref{eq:ced-gradient-calibration}.
\end{proof}

\begin{proof}[Proof of Lemma \ref{lem:ced-unbiased}]
We first compute the conditional mean of each summand, then bound
the variance of their average. Conditional on $\cG_{t+1}$,
$\theta_t$ and $w_{t+1}$ are fixed. For every $\ell$, Algorithm
\ref{alg:ced} samples $j_{t+1,\ell}^{\mathrm g}$ uniformly and
then samples an answer from the current student at that question.
Thus, for $a_{1:H}\in\cA(\widetilde x_j)$, we have
\begin{align}
\Pr\!\left(j_{t+1,\ell}^{\mathrm g}=j,\,
a^{\mathrm g}_{t+1,\ell,1:H}=a_{1:H}
\,\middle|\,\cG_{t+1}\right)\nonumber&=
\Pr\!\left(j_{t+1,\ell}^{\mathrm g}=j
\,\middle|\,\cG_{t+1}\right)\nonumber\times
\Pr\!\left(a^{\mathrm g}_{t+1,\ell,1:H}=a_{1:H}
\,\middle|\,\cG_{t+1},j_{t+1,\ell}^{\mathrm g}=j\right)\nonumber\\
&=\frac1m\pi_{\mathrm{stu},\theta_t}
(a_{1:H}\mid\widetilde x_j).
\label{eq:P(j)}
\end{align}
Starting from the definition of
$\widehat g_{t+1,\ell}^{\mathrm{stu}}$, we obtain
\begin{align*}
\EE\!\left[\widehat g_{t+1,\ell}^{\mathrm{stu}}
\,\middle|\,\cG_{t+1}\right]
&=\EE\!\left[
S_{\mathrm{stu},\theta_t}
(\widetilde x_{j_{t+1,\ell}^{\mathrm g}},a^{\mathrm g}_{t+1,\ell,1:H})
Z_{t+1}(\theta_t;\widetilde x_{j_{t+1,\ell}^{\mathrm g}},
a^{\mathrm g}_{t+1,\ell,1:H})
\,\middle|\,\cG_{t+1}\right]\\
&=\frac1m\sum_{j=1}^{m}
\sum_{a_{1:H}\in\cA(\widetilde x_j)}
\pi_{\mathrm{stu},\theta_t}(a_{1:H}\mid\widetilde x_j)
S_{\mathrm{stu},\theta_t}(\widetilde x_j,a_{1:H})
Z_{t+1}(\theta_t;\widetilde x_j,a_{1:H})\\
&=\frac{\lambda}{m}\sum_{j=1}^{m}
\sum_{a_{1:H}\in\cA(\widetilde x_j)}
\pi_{\mathrm{stu},\theta_t}(a_{1:H}\mid\widetilde x_j)
S_{\mathrm{stu},\theta_t}(\widetilde x_j,a_{1:H})
\log\frac{\pi_{\mathrm{stu},\theta_t}(a_{1:H}\mid\widetilde x_j)}
{\pi_{w_{t+1}}(a_{1:H}\mid\widetilde x_j)}\\
&=\nabla_\theta C_{w_{t+1}}(\theta_t)
=\nabla_\theta C_{t+1}(\theta_t).
\end{align*}
The second equality uses \eqref{eq:P(j)}, the third substitutes
the definition of $Z_{t+1}$, and the fourth applies the
cost-gradient identity \eqref{eq:ced-cost-gradient}.
By linearity of conditional expectation, we have,
\[
\begin{aligned}
\EE\!\left[\widehat g_{t+1}^{\mathrm{stu}}
\,\middle|\,\cG_{t+1}\right]
&=\frac1{b_{t+1}}\sum_{\ell=1}^{b_{t+1}}
\EE\!\left[\widehat g_{t+1,\ell}^{\mathrm{stu}}
\,\middle|\,\cG_{t+1}\right]=\frac{b_{t+1}}{b_{t+1}}\nabla_\theta C_{t+1}(\theta_t)
=\nabla_\theta C_{t+1}(\theta_t).
\end{aligned}
\]

To control the variance, we first bound the second moment. Its definition and \eqref{eq:P(j)} give
\begin{align*}
\EE\!\left[\|\widehat g_{t+1,\ell}^{\mathrm{stu}}\|_2^2
\,\middle|\,\cG_{t+1}\right]&=\frac1m\sum_{j=1}^{m}
\sum_{a_{1:H}\in\cA(\widetilde x_j)}
\pi_{\mathrm{stu},\theta_t}(a_{1:H}\mid\widetilde x_j)
\|S_{\mathrm{stu},\theta_t}(\widetilde x_j,a_{1:H})\|_2^2
[Z_{t+1}(\theta_t;\widetilde x_j,a_{1:H})]^2\\
&\le\frac{16\lambda^2B^2H^2}{m}
\sum_{j=1}^{m}
\sum_{a_{1:H}\in\cA(\widetilde x_j)}
\pi_{\mathrm{stu},\theta_t}(a_{1:H}\mid\widetilde x_j)
\|S_{\mathrm{stu},\theta_t}(\widetilde x_j,a_{1:H})\|_2^2\\
&=\frac{16\lambda^2B^2H^2}{m}
\sum_{j=1}^{m}
\EE_{a_{1:H}\sim\pi_{\mathrm{stu},\theta_t}(\cdot\mid\widetilde x_j)}
\!\left[\|S_{\mathrm{stu},\theta_t}(\widetilde x_j,a_{1:H})\|_2^2\right]\\
&\le\frac{16\lambda^2B^2H^2}{m}\sum_{j=1}^{m}H
=16\lambda^2B^2H^3.
\end{align*}
The first inequality uses $|Z_{t+1}|\le4\lambda BH$ from
Lemma~\ref{lem:ced-z}. The last inequality applies the score
second-moment bound \eqref{eq:ced-score-moment} at each target
question.

For this proof, we set
$e_{t+1,\ell}:=\widehat g_{t+1,\ell}^{\mathrm{stu}}
-\nabla_\theta C_{t+1}(\theta_t)$.
The identity above yields
$\EE[e_{t+1,\ell}\mid\cG_{t+1}]=0$. Expanding the square yields
\begin{align*}
\EE[\|e_{t+1,\ell}\|_2^2\mid\cG_{t+1}]
&=\EE[\|\widehat g_{t+1,\ell}^{\mathrm{stu}}\|_2^2
\mid\cG_{t+1}]-2\left\langle
\EE[\widehat g_{t+1,\ell}^{\mathrm{stu}}\mid\cG_{t+1}],
\nabla_\theta C_{t+1}(\theta_t)\right\rangle
+\|\nabla_\theta C_{t+1}(\theta_t)\|_2^2\\
&=\EE[\|\widehat g_{t+1,\ell}^{\mathrm{stu}}\|_2^2
\mid\cG_{t+1}]
-\|\nabla_\theta C_{t+1}(\theta_t)\|_2^2
\le16\lambda^2B^2H^3.
\end{align*}
For $\ell\ne\ell'$, the two prompt-answer pairs are independent
conditional on $\cG_{t+1}$, and therefore
\[
\EE[\langle e_{t+1,\ell},e_{t+1,\ell'}\rangle
\mid\cG_{t+1}]
=
\left\langle
\EE[e_{t+1,\ell}\mid\cG_{t+1}],
\EE[e_{t+1,\ell'}\mid\cG_{t+1}]
\right\rangle=0.
\]
Using the definition of the averaged estimator, we conclude that
\begin{align*}
\EE\!\left[
\left\|\widehat g_{t+1}^{\mathrm{stu}}
-\nabla_\theta C_{t+1}(\theta_t)\right\|_2^2
\,\middle|\,\cG_{t+1}\right]
&=\EE\!\left[
\left\|\frac1{b_{t+1}}\sum_{\ell=1}^{b_{t+1}}
e_{t+1,\ell}\right\|_2^2
\,\middle|\,\cG_{t+1}\right]\\
&=\frac1{b_{t+1}^2}\sum_{\ell=1}^{b_{t+1}}
\EE[\|e_{t+1,\ell}\|_2^2\mid\cG_{t+1}]
+\frac2{b_{t+1}^2}\sum_{\ell<\ell'}
\EE[\langle e_{t+1,\ell},e_{t+1,\ell'}\rangle
\mid\cG_{t+1}]\\
&\le\frac{b_{t+1}(16\lambda^2B^2H^3)}{b_{t+1}^2}
=\frac{16\lambda^2B^2H^3}{b_{t+1}}.
\end{align*}
This proves \eqref{eq:ced-student-variance}.

Finally, $\theta_t$ is $\cF_t$-measurable, so
$\sigma(w_{t+1},\theta_t)\subseteq\cG_{t+1}$.
The tower property gives the remaining conditional identity:
\begin{align*}
\EE\!\left[\widehat g_{t+1}^{\mathrm{stu}}
\,\middle|\,w_{t+1},\theta_t\right]
&=\EE\!\left[
\EE[\widehat g_{t+1}^{\mathrm{stu}}\mid\cG_{t+1}]
\,\middle|\,w_{t+1},\theta_t\right]=\EE\!\left[\nabla_\theta C_{w_{t+1}}(\theta_t)
\,\middle|\,w_{t+1},\theta_t\right]
=\nabla_\theta C_{t+1}(\theta_t).
\end{align*}
The last equality holds because the displayed gradient is a
function of $w_{t+1}$ and $\theta_t$. This completes the proof.
\end{proof}

\begin{proof}[Proof of Lemma \ref{lem:ced-selection}]
Conditional on $\cH_t$, the candidate policies and $w_t$ are fixed. Since we sample $j_{t,k,\ell}$ uniformly from $\{1,\ldots,m\}$
and then generate $a^{\mathrm{val}}_{t,k,\ell,1:H}$ from
$\pi_{\mathrm{stu},\vartheta_{t,k}}
(\cdot\mid\widetilde x_{j_{t,k,\ell}})$, we have
\begin{align*}
&\EE\!\left[
Z_t(\vartheta_{t,k};\widetilde x_{j_{t,k,\ell}},
a^{\mathrm{val}}_{t,k,\ell,1:H})\mid\cH_t\right]\\
=&\frac1m\sum_{j=1}^{m}
\sum_{a_{1:H}\in\cA(\widetilde x_j)}
\pi_{\mathrm{stu},\vartheta_{t,k}}(a_{1:H}\mid\widetilde x_j)
\lambda\log
\frac{\pi_{\mathrm{stu},\vartheta_{t,k}}(a_{1:H}\mid\widetilde x_j)}
{\pi_{w_t}(a_{1:H}\mid\widetilde x_j)}=C_t(\vartheta_{t,k}).
\end{align*}
By Lemma~\ref{lem:ced-z}, the squared
summand of $Z_t(\vartheta_{t,k};\widetilde x_{j_{t,k,\ell}},
a^{\mathrm{val}}_{t,k,\ell,1:H}),\ell=1,\cdots,q_t$ is at most $16\lambda^2B^2H^2$. Since the samples within a
batch are independent and centered after subtracting their mean,
the cross terms vanish when expanding the squared sample-mean error and we have
\begin{align*}
\EE[(\widehat C_{t,k}-C_t(\vartheta_{t,k}))^2\mid\cH_t]
&=\frac1{q_t^2}\sum_{\ell=1}^{q_t}
\Var\!\left(
Z_t(\vartheta_{t,k};\widetilde x_{j_{t,k,\ell}},
a^{\mathrm{val}}_{t,k,\ell,1:H})\mid\cH_t\right)\le\frac{16\lambda^2B^2H^2}{q_t}.
\end{align*}
Then, we apply the Cauchy-Schwarz inequality and get
\[
\EE[|\widehat C_{t,k}-C_t(\vartheta_{t,k})|\mid\cH_t]
\le
\left(\EE[(\widehat C_{t,k}-C_t(\vartheta_{t,k}))^2
\mid\cH_t]\right)^{1/2}
\le\frac{4\lambda BH}{\sqrt{q_t}}.
\]
The maximum of two nonnegative numbers is no larger than their
sum. Consequently, we have
\[
\EE[\xi_t\mid\cH_t]
\le2\sum_{k=1}^{2}
\EE[|\widehat C_{t,k}-C_t(\vartheta_{t,k})|\mid\cH_t]
\le\frac{16\lambda BH}{\sqrt{q_t}}
=\frac{16\lambda BH}{t+1}.
\]

Choose an index $k_t^\star$ that minimizes the two true costs, only
for this proof. The definition of $\xi_t$ and empirical minimization
implies that
\begin{align}\label{eq:selection}
C_t(\theta_t)
&=C_t(\vartheta_{t,\widehat k_t})
\le\widehat C_{t,\widehat k_t}+\xi_t/2\le\widehat C_{t,k_t^\star}+\xi_t/2
\le C_t(\vartheta_{t,k_t^\star})+\xi_t.
\end{align}
The first and last inequalities bound the evaluation errors; the
middle inequality is the actual two-way selection rule.
This proves \eqref{eq:ced-selection} and we finish the proof.
\end{proof}

\begin{proof}[Proof of Lemma~\ref{lem:ced-hybrid}]
Recall that $F=C_{w^\star_\lambda}$ and
$\Delta_{\lambda,m}(\theta)=F(\theta)-F(\theta^\dagger_{\lambda,m})$.
By optimality, the student Lipschitz bound in
\eqref{eq:ced-stu-lip}, and the diameter $2B$ of $\Theta$,
\begin{equation}
0\le\Delta_{\lambda,m}(\theta)
\le4\lambda BH^{3/2}\|\theta-\theta^\dagger_{\lambda,m}\|_2
\le8\lambda B^2H^{3/2}=M_{\mathrm{opt}}.
\label{eq:ced-true-gap-range}
\end{equation}
We first show that the exact projected step gives a quadratic
improvement when this gap is small, and then use uniform exploration
for the remaining values of the gap.

To apply Lemma~\ref{lem:ced-subgradient-lojas}, we define the
extended-real function and the normal cone
\[
\Psi(\theta):=
\begin{cases}
F(\theta),&\theta\in\Theta,\\
+\infty,&\theta\notin\Theta,
\end{cases}
\]
\[
N_\Theta(\theta):=
\{v\in\RR^d:v^\top(\vartheta-\theta)\le0
\text{ for every }\vartheta\in\Theta\},\ \theta\in\Theta.
\]
For this smooth function on a closed convex set, We claim that
\begin{equation}
\partial\Psi(\theta)
=\nabla_\theta F(\theta)+N_\Theta(\theta),
\ \theta\in\Theta.
\label{eq:ced-constrained-subgradient}
\end{equation}
For $\theta,\vartheta\in\Theta$, we first have $F(\vartheta)-F(\theta)
=\nabla_\theta F(\theta)^\top(\vartheta-\theta)
+o(\|\vartheta-\theta\|_2)$.

Hence, by the definition of the Fr\'echet subdifferential, we have
\begin{align*}
v\in\widehat\partial\Psi(\theta)
&\iff
\liminf_{\substack{\vartheta\to\theta,\ \vartheta\in\Theta\\
                  \vartheta\ne\theta}}
\frac{F(\vartheta)-F(\theta)-v^\top(\vartheta-\theta)}
{\|\vartheta-\theta\|_2}\ge0\iff
\liminf_{\substack{\vartheta\to\theta,\ \vartheta\in\Theta\\
                  \vartheta\ne\theta}}
\frac{(\nabla_\theta F(\theta)-v)^\top(\vartheta-\theta)}
{\|\vartheta-\theta\|_2}\ge0.
\end{align*}
On one hand, by the definition of $N_\Theta(\theta)$, we know that
\begin{align*}
v-\nabla_\theta F(\theta)\in N_\Theta(\theta)
&\implies
(\nabla_\theta F(\theta)-v)^\top(\vartheta-\theta)\ge0
\ \forall \vartheta\in\Theta\implies v\in\widehat\partial\Psi(\theta).
\end{align*}
Conversely, let $v\in\widehat\partial\Psi(\theta)$.
$\forall\zeta\in\Theta\setminus\{\theta\}$, by the convexity of $\Theta$, we have that $$\vartheta_u=\theta+u(\zeta-\theta)\in\Theta,
\ u\in(0,1].$$
Therefore,
\begin{align*}
0
&\le\liminf_{u\downarrow0}
\frac{(\nabla_\theta F(\theta)-v)^\top(\vartheta_u-\theta)}
{\|\vartheta_u-\theta\|_2}=\liminf_{u\downarrow0}
\frac{u(\nabla_\theta F(\theta)-v)^\top(\zeta-\theta)}
{u\|\zeta-\theta\|_2}=\frac{(\nabla_\theta F(\theta)-v)^\top(\zeta-\theta)}
{\|\zeta-\theta\|_2}.
\end{align*}
Consequently, we obtain
$(v-\nabla_\theta F(\theta))^\top(\zeta-\theta)\le0
\ \forall\zeta\in\Theta$. Thus,
\[ 
v-\nabla_\theta F(\theta)\in N_\Theta(\theta).
\]
Combining both inclusions yields
$\widehat\partial\Psi(\theta)
=\nabla_\theta F(\theta)+N_\Theta(\theta)$.
By the definition of the limiting subdifferential, we know that
\[ 
v\in\partial\Psi(\theta)
\iff
\exists\,(\theta_n,v_n)_{n\ge1}:
\begin{cases}
\theta_n\in\Theta,\  \theta_n\to\theta,\\
\Psi(\theta_n)\to\Psi(\theta),\  v_n\to v,\\
v_n\in\widehat\partial\Psi(\theta_n).
\end{cases}
\]
For such a sequence and every $\zeta\in\Theta$, we have
$(v_n-\nabla_\theta F(\theta_n))^\top(\zeta-\theta_n)\le0$.
Continuity of $\nabla_\theta F$ yields that
\begin{align*}
(v-\nabla_\theta F(\theta))^\top(\zeta-\theta)
&=\lim_{n\to\infty}
(v_n-\nabla_\theta F(\theta_n))^\top(\zeta-\theta_n)\le0.
\end{align*}
Thus, we obtain
\[
\partial\Psi(\theta)
\subseteq\nabla_\theta F(\theta)+N_\Theta(\theta).
\]
For the reverse inclusion, the constant sequences
$\theta_n=\theta$ and $v_n=v$ give
\[
v\in\nabla_\theta F(\theta)+N_\Theta(\theta)
=\widehat\partial\Psi(\theta)
\implies v\in\partial\Psi(\theta).
\]
Therefore, we have proved the claim.

Lemma~\ref{lem:ced-smoothness} shows that $F$ is analytic.
Consequently, the finite graph of $\Psi$ is
\[
\{(\theta,u)\in\RR^d\times\RR:
\|\theta\|_2^2\le B^2,\ u-F(\theta)=0\},
\]
which is semianalytic and hence subanalytic. Its domain is the
closed ball $\Theta$, and $\Psi$ is continuous on this domain.
Moreover, optimality of $\theta^\dagger_{\lambda,m}$ on every
segment in $\Theta$ implies that
\[
\nabla_\theta F(\theta^\dagger_{\lambda,m})^\top
(\vartheta-\theta^\dagger_{\lambda,m})\ge0,
\ \vartheta\in\Theta.
\]
Thus $-\nabla_\theta F(\theta^\dagger_{\lambda,m})
\in N_\Theta(\theta^\dagger_{\lambda,m})$, and
\eqref{eq:ced-constrained-subgradient} gives
$0\in\partial\Psi(\theta^\dagger_{\lambda,m})$.
All hypotheses of Lemma~\ref{lem:ced-subgradient-lojas} therefore
hold. There are an open neighborhood $U$ of $\theta^\dagger_{\lambda,m}$,
$c_{\mathrm{loc}}>0$, and $\rho_{\mathrm{loc}}\in[0,1)$ such that
\begin{equation}
\operatorname{dist}(0,\partial\Psi(\theta))
\ge c_{\mathrm{loc}}[\Delta_{\lambda,m}(\theta)]^{\rho_{\mathrm{loc}}},\ 
\theta\in U\cap\Theta,\ \Delta_{\lambda,m}(\theta)>0.
\label{eq:ced-local-subgradient}
\end{equation}

Uniqueness of the minimizer now lets us choose
$0<\delta_{\mathrm{loc}}\le\min\{1,M_{\mathrm{opt}}\}$ such that
\[
\{\theta\in\Theta:\Delta_{\lambda,m}(\theta)
\le\delta_{\mathrm{loc}}\}\subset U.
\]
To justify this choice, if $\Theta\setminus U$ is nonempty,
its compactness and continuity of $\Delta_{\lambda,m}$ give an
attained minimum there. This minimum is strictly positive because
the only zero is $\theta^\dagger_{\lambda,m}\in U$.
Choose $\delta_{\mathrm{loc}}$ smaller than that minimum.
If the complement is empty, any positive value satisfying the
displayed upper bound suffices. For
$0<\Delta_{\lambda,m}(\theta)\le\delta_{\mathrm{loc}}\le1$,
the fact that $\rho_{\mathrm{loc}}<1$ yields
\[
[\Delta_{\lambda,m}(\theta)]^{\rho_{\mathrm{loc}}}
=\Delta_{\lambda,m}(\theta)
[\Delta_{\lambda,m}(\theta)]^{\rho_{\mathrm{loc}}-1}
\ge\Delta_{\lambda,m}(\theta).
\]
Hence \eqref{eq:ced-local-subgradient} implies the weaker but
sufficient linear bound
\begin{equation}
\operatorname{dist}(0,\partial\Psi(\theta))
\ge c_{\mathrm{loc}}\Delta_{\lambda,m}(\theta)
,\ 0<\Delta_{\lambda,m}(\theta)\le\delta_{\mathrm{loc}}.
\label{eq:ced-local-linear-slope}
\end{equation}

We apply this bound to the exact projected step $y(\theta)$
defined in the lemma. The first-order condition for Euclidean
projection is
\[
\bigl(\theta-\alpha_{\mathrm{stu}}\nabla_\theta F(\theta)
-y(\theta)\bigr)^\top(\vartheta-y(\theta))\le0
\ \forall \vartheta\in\Theta.
\]
Setting $\vartheta=\theta$ and rearranging gives
\[
\nabla_\theta F(\theta)^\top(y(\theta)-\theta)
\le-\frac{\|y(\theta)-\theta\|_2^2}{\alpha_{\mathrm{stu}}}.
\]
The gradient Lipschitz bound in
\eqref{eq:ced-gradient-smoothness}, integrated along the segment
from $\theta$ to $y(\theta)$, gives
\begin{align*}
F(y(\theta))-F(\theta)
&=\nabla_\theta F(\theta)^\top(y(\theta)-\theta) +\int_0^1
\bigl[\nabla_\theta F(\theta+u(y(\theta)-\theta))
-\nabla_\theta F(\theta)\bigr]^\top(y(\theta)-\theta)\,du\\
&\le-\frac{\|y(\theta)-\theta\|_2^2}{\alpha_{\mathrm{stu}}}
+\int_0^1 uL_{\mathrm{st}}\|y(\theta)-\theta\|_2^2\,du\\
&=-\left(\frac1{\alpha_{\mathrm{stu}}}
-\frac{L_{\mathrm{st}}}{2}\right)
\|y(\theta)-\theta\|_2^2
\le-\frac{\|y(\theta)-\theta\|_2^2}{2\alpha_{\mathrm{stu}}}.
\end{align*}
The last inequality uses
$\alpha_{\mathrm{stu}}=1/(2L_{\mathrm{st}})$.
In particular, $F(y(\theta))\le F(\theta)$.
The projection condition also implies
\[
\frac{\theta-y(\theta)}{\alpha_{\mathrm{stu}}}
-\nabla_\theta F(\theta)\in N_\Theta(y(\theta)).
\]
Using \eqref{eq:ced-constrained-subgradient} at $y(\theta)$,
and then \eqref{eq:ced-gradient-smoothness}, we obtain
\begin{align*}
\operatorname{dist}(0,\partial\Psi(y(\theta)))
&\le\left\|
\nabla_\theta F(y(\theta))-\nabla_\theta F(\theta)
+\frac{\theta-y(\theta)}{\alpha_{\mathrm{stu}}}
\right\|_2\le\left(L_{\mathrm{st}}+\frac1{\alpha_{\mathrm{stu}}}\right)
\|y(\theta)-\theta\|_2.
\end{align*}

Suppose now that $0<\Delta_{\lambda,m}(\theta)
\le\delta_{\mathrm{loc}}$. If
$\Delta_{\lambda,m}(y(\theta))\le\Delta_{\lambda,m}(\theta)/2$,
then \eqref{eq:ced-true-gap-range} gives
\[
F(\theta)-F(y(\theta))
=\Delta_{\lambda,m}(\theta)-\Delta_{\lambda,m}(y(\theta))
\ge\frac{\Delta_{\lambda,m}(\theta)}2
\ge\frac{[\Delta_{\lambda,m}(\theta)]^2}{2M_{\mathrm{opt}}}.
\]
Otherwise,
$0<\Delta_{\lambda,m}(\theta)/2
<\Delta_{\lambda,m}(y(\theta))\le\delta_{\mathrm{loc}}$,
so \eqref{eq:ced-local-linear-slope} applies at $y(\theta)$.
Combining the preceding two estimates with that inequality gives
\begin{align*}
F(\theta)-F(y(\theta))
&\ge\frac{\|y(\theta)-\theta\|_2^2}{2\alpha_{\mathrm{stu}}}\ge\frac{\operatorname{dist}(0,\partial\Psi(y(\theta)))^2}
{2\alpha_{\mathrm{stu}}(L_{\mathrm{st}}+1/\alpha_{\mathrm{stu}})^2}\\
&\ge\frac{c_{\mathrm{loc}}^2
[\Delta_{\lambda,m}(y(\theta))]^2}
{2\alpha_{\mathrm{stu}}(L_{\mathrm{st}}+1/\alpha_{\mathrm{stu}})^2}\ge\frac{c_{\mathrm{loc}}^2}
{8\alpha_{\mathrm{stu}}(L_{\mathrm{st}}+1/\alpha_{\mathrm{stu}})^2}
[\Delta_{\lambda,m}(\theta)]^2.
\end{align*}
The same quadratic lower bound, with the smaller of the two
coefficients, therefore holds throughout the small sublevel set.
When the gap is zero, the required bound is immediate.

It remains to control $\Delta_{\lambda,m}(\theta)
>\delta_{\mathrm{loc}}$. We establish the needed exploration
probability for a calibrated objective. Fix $w\in W$ and
$0<u\le M_{\mathrm{opt}}$.
By continuity and compactness, we can choose
\[
\vartheta_w^\star\in\argmin_{\vartheta\in\Theta}C_w(\vartheta).
\]
Set
$\tau=u/M_{\mathrm{opt}}\in(0,1]$ and consider $\{(1-\tau)\vartheta_w^\star+\tau\zeta:\zeta\in\Theta\}$. By convexity, we know that this set lies in $\Theta$. Each of its points
$\vartheta$ has distance at most $2B\tau$ from
$\vartheta_w^\star$, so \eqref{eq:ced-stu-lip} gives
\[
C_w(\vartheta)-\min_{\zeta\in\Theta}C_w(\zeta)
\le(4\lambda BH^{3/2})(2B\tau)
=M_{\mathrm{opt}}\tau=u.
\]
The affine map defining the set scales $d$-dimensional volume by
$\tau^d$. Since
$\vartheta^{\mathrm{unif}}$ is uniform on $\Theta$, we conclude that
\begin{equation}
\Pr\!\left(
C_w(\vartheta^{\mathrm{unif}})-\min_{\vartheta\in\Theta}C_w(\vartheta)
\le u\right)
\ge\left(\frac{u}{M_{\mathrm{opt}}}\right)^d,
\  0<u\le M_{\mathrm{opt}}.
\label{eq:ced-volume-hit}
\end{equation}
Apply this bound with $w=w^\star_\lambda$ and
$u=\delta_{\mathrm{loc}}/2$ to obtain
\[
\Pr\!\left(
\Delta_{\lambda,m}(\vartheta^{\mathrm{unif}})
\le\frac{\delta_{\mathrm{loc}}}{2}\right)
\ge\left(\frac{\delta_{\mathrm{loc}}}{2M_{\mathrm{opt}}}\right)^d.
\]
On this event and when
$\Delta_{\lambda,m}(\theta)>\delta_{\mathrm{loc}}$, we have
$F(\theta)-F(\vartheta^{\mathrm{unif}})
\ge\Delta_{\lambda,m}(\theta)-\delta_{\mathrm{loc}}/2
\ge\Delta_{\lambda,m}(\theta)/2$. Since
$F(y(\theta))\le F(\theta)$, the two-candidate gain is nonnegative
and at least $F(\theta)-F(\vartheta^{\mathrm{unif}})$ for every draw.
It therefore dominates the positive part below:
\begin{align*}
\EE_{\vartheta^{\mathrm{unif}}}\!\left[
F(\theta)-\min\{F(y(\theta)),F(\vartheta^{\mathrm{unif}})\}
\right]&\ge
\EE_{\vartheta^{\mathrm{unif}}}\!\left[
[F(\theta)-F(\vartheta^{\mathrm{unif}})]_+\right]\\
&\ge\frac12
\left(\frac{\delta_{\mathrm{loc}}}{2M_{\mathrm{opt}}}\right)^d
\Delta_{\lambda,m}(\theta)\ge\frac1{2M_{\mathrm{opt}}}
\left(\frac{\delta_{\mathrm{loc}}}{2M_{\mathrm{opt}}}\right)^d
[\Delta_{\lambda,m}(\theta)]^2.
\end{align*}
The last inequality uses \eqref{eq:ced-true-gap-range}.
For the small sublevel set, the minimum of the two costs is at
most $F(y(\theta))$, so the projected-step bounds already proved
apply directly to the same left-hand side. Taking
\[
\kappa_{\mathrm{opt}}
:=\min\left\{
\frac1{4M_{\mathrm{opt}}},
\frac{c_{\mathrm{loc}}^2}
{8\alpha_{\mathrm{stu}}(L_{\mathrm{st}}+1/\alpha_{\mathrm{stu}})^2},
\frac1{2M_{\mathrm{opt}}}
\left(\frac{\delta_{\mathrm{loc}}}{2M_{\mathrm{opt}}}\right)^d
\right\}>0
\]
therefore proves \eqref{eq:ced-hybrid-progress} for every
$\theta\in\Theta$. All constants were chosen using the fixed
objective and parameter set, and hence are independent of $t$.
\end{proof}

\begin{proof}[Proof of Lemma \ref{lem:ced-optimization}]
We study the fixed true cost
$F(\theta):=C_{w^\star_\lambda}(\theta)$, so that
$\Delta_{\lambda,m}(\theta)=F(\theta)-F(\theta^\dagger_{\lambda,m})$.
The algorithm evaluates $C_t$, whereas $F$ is used only in this
proof. Recall from \eqref{eq:ced-true-gap-range} that
$0\le\Delta_{\lambda,m}(\theta)\le M_{\mathrm{opt}}$.
For target update $t\ge1$, let
$\cG_t:=\cF_{t-1}\vee\sigma(w_t)$.
The current student $\theta_{t-1}$ is fixed conditional on $\cG_t$.
Compare the actual gradient candidate with the population candidate
\[
\begin{aligned}
\vartheta_{t,1}
&=\Proj{\Theta}{\theta_{t-1}-\alpha_{\mathrm{stu}}\widehat g_t^{\mathrm{stu}}},\\
y(\theta_{t-1})
&=\Proj{\Theta}{\theta_{t-1}-\alpha_{\mathrm{stu}}\nabla_\theta F(\theta_{t-1})}.
\end{aligned}
\]
The map $y$ is the proof-only update in Lemma~\ref{lem:ced-hybrid}.

We first write the one-step error comparison, before bounding
its perturbation terms. Lemma~\ref{lem:ced-selection} and
\eqref{eq:ced-w-lip} imply, on every sample path,
\begin{align*}
F(\theta_t)
&\le C_t(\theta_t)
+\sup_{\theta\in\Theta}|F(\theta)-C_t(\theta)|\le\min_{k\in\{1,2\}}C_t(\vartheta_{t,k})
+\xi_t+\sup_{\theta\in\Theta}|F(\theta)-C_t(\theta)|\\
&\le\min_{k\in\{1,2\}}F(\vartheta_{t,k})
+\xi_t+2\sup_{\theta\in\Theta}|F(\theta)-C_t(\theta)|\le\min\{F(\vartheta_{t,1}),F(\vartheta_{t,2})\}
+\xi_t+4\lambda H\|w_t-w^\star_\lambda\|_2.
\end{align*}
Replacing one entry of a minimum changes that minimum by at most
the absolute change in the entry. Since $F$ is
$4\lambda BH^{3/2}$-Lipschitz according to Lemma~\ref{lem:ced-lipschitz}, we thus have
\begin{align*}
&\min\{F(\vartheta_{t,1}),F(\vartheta_{t,2})\}\le
\min\{F(y(\theta_{t-1})),F(\vartheta_{t,2})\}
+4\lambda BH^{3/2}\|\vartheta_{t,1}-y(\theta_{t-1})\|_2.
\end{align*}
Conditional on $\cG_t$, the exploration candidate
$\vartheta_{t,2}$ is uniform on $\Theta$. By Lemma \ref{lem:ced-hybrid}, we have
\[
\begin{aligned}
&\EE\!\left[
F(\theta_{t-1})
-\min\{F(y(\theta_{t-1})),F(\vartheta_{t,2})\}
\,\middle|\,\cG_t\right]\ge
\kappa_{\mathrm{opt}}
[\Delta_{\lambda,m}(\theta_{t-1})]^2.
\end{aligned}
\]
Since $F(\theta_{t-1})$ is measurable with respect to $\cG_t$, rearrange and we have
\[
\EE[\min\{F(y(\theta_{t-1})),F(\vartheta_{t,2})\}|\,\cG_t]\le F(\theta_{t-1})-\kappa_{\mathrm{opt}}
[\Delta_{\lambda,m}(\theta_{t-1})]^2
\]
Thus, utilizing the inequalities we have proved above, we have
\begin{align}
&\EE[\Delta_{\lambda,m}(\theta_t)\mid\cG_t]\nonumber\\
=&\EE[F(\theta_t)\mid\cG_t]
-F(\theta^\dagger_{\lambda,m})\nonumber\\
\le&
\EE\!\left[
\min\{F(y(\theta_{t-1})),F(\vartheta_{t,2})\}
\,\middle|\,\cG_t\right]
-F(\theta^\dagger_{\lambda,m})+
4\lambda BH^{3/2}
\EE\!\left[
\|\vartheta_{t,1}-y(\theta_{t-1})\|_2
\,\middle|\,\cG_t\right]+
4\lambda H\|w_t-w^\star_\lambda\|_2
+\EE[\xi_t\mid\cG_t]\nonumber\\
\le&
F(\theta_{t-1})-F(\theta^\dagger_{\lambda,m})
-\kappa_{\mathrm{opt}}
[\Delta_{\lambda,m}(\theta_{t-1})]^2+
4\lambda BH^{3/2}
\EE\!\left[
\|\vartheta_{t,1}-y(\theta_{t-1})\|_2
\,\middle|\,\cG_t\right]+
4\lambda H\|w_t-w^\star_\lambda\|_2
+\EE[\xi_t\mid\cG_t].
\label{eq:ced-true-one-step}
\end{align}
We next bound the candidate discrepancy in this recursion.
Nonexpansiveness of Euclidean projection gives
\begin{align*}
\|\vartheta_{t,1}-y(\theta_{t-1})\|_2
&\le\alpha_{\mathrm{stu}}
\|\widehat g_t^{\mathrm{stu}}-\nabla_\theta F(\theta_{t-1})\|_2\le\alpha_{\mathrm{stu}}\left[
\|\widehat g_t^{\mathrm{stu}}-\nabla_\theta C_t(\theta_{t-1})\|_2
+\|\nabla_\theta C_t(\theta_{t-1})
-\nabla_\theta F(\theta_{t-1})\|_2
\right].
\end{align*}
Conditional Cauchy-Schwarz and
Lemma~\ref{lem:ced-unbiased} bound the first term by
\[
\begin{aligned}
\EE\!\left[
\|\widehat g_t^{\mathrm{stu}}-\nabla_\theta C_t(\theta_{t-1})\|_2
\,\middle|\,\cG_t\right]
&\le
\left(\EE\!\left[
\|\widehat g_t^{\mathrm{stu}}-\nabla_\theta C_t(\theta_{t-1})\|_2^2
\,\middle|\,\cG_t\right]\right)^{1/2}\le\frac{4\lambda BH^{3/2}}{\sqrt{b_t}}.
\end{aligned}
\]
Lemma~\ref{lem:ced-smoothness} bounds the second term by
$2\lambda H^{3/2}\|w_t-w^\star_\lambda\|_2$.
Multiplying by the cost Lipschitz constant therefore gives
\begin{align*}
&4\lambda BH^{3/2}
\EE[\|\vartheta_{t,1}-y(\theta_{t-1})\|_2\mid\cG_t]\le
\frac{16\alpha_{\mathrm{stu}}\lambda^2B^2H^3}{\sqrt{b_t}}
+8\alpha_{\mathrm{stu}}\lambda^2BH^3
\|w_t-w^\star_\lambda\|_2.
\end{align*}
For validation, $\cG_t\subseteq\cH_t$ in
Lemma~\ref{lem:ced-selection}. The tower property yields
\[
\EE[\xi_t\mid\cG_t]
=\EE[\EE[\xi_t\mid\cH_t]\mid\cG_t]
\le\frac{16\lambda BH}{t+1}.
\]
To combine these bounds, write
$e_t:=\EE[\Delta_{\lambda,m}(\theta_t)]$.
Taking total expectations in \eqref{eq:ced-true-one-step}, using
$b_t=t+1$, and substituting the two perturbation bounds gives
\begin{align*}
e_t
&\le e_{t-1}
-\kappa_{\mathrm{opt}}
\EE\!\left[[\Delta_{\lambda,m}(\theta_{t-1})]^2\right]
+\frac{16\alpha_{\mathrm{stu}}\lambda^2B^2H^3}{\sqrt{t+1}}+
(8\alpha_{\mathrm{stu}}\lambda^2BH^3+4\lambda H)
\EE[\|w_t-w^\star_\lambda\|_2]
+\frac{16\lambda BH}{t+1}.
\end{align*}
Applying the calibration bound \eqref{eq:ced-source-mse} and
the Cauchy-Schwarz inequality, we have
\[
\EE[\|w_t-w^\star_\lambda\|_2]
\le\left(\EE[\|w_t-w^\star_\lambda\|_2^2]\right)^{1/2}
\le\frac2{\gamma\sqrt{t+2}}
\le\frac2{\gamma\sqrt{t+1}}.
\]
Also,
$\EE[[\Delta_{\lambda,m}(\theta_{t-1})]^2]\ge e_{t-1}^2$
by Jensen's inequality, and $(t+1)^{-1}\le(t+1)^{-1/2}$.
Consequently,
\begin{equation}
e_t\le e_{t-1}-\kappa_{\mathrm{opt}}e_{t-1}^2
+\frac{E_{\mathrm{opt}}}{\sqrt{t+1}},
\  t\ge1.
\label{eq:ced-gradient-recursion}
\end{equation}
We solve \eqref{eq:ced-gradient-recursion} by induction with the
deterministic bound
$v_t:=K_{\mathrm{opt}}(t+1)^{-1/4}$.
Initially, we have $e_0\le M_{\mathrm{opt}}\le K_{\mathrm{opt}}=v_0$.

Suppose $e_t\le v_t$. If $v_{t+1}\ge M_{\mathrm{opt}}$, the
bound $e_{t+1}\le M_{\mathrm{opt}}$ proves the next step.
Otherwise, we have
\[
v_t=v_{t+1}\left(\frac{t+2}{t+1}\right)^{1/4}
<2^{1/4}M_{\mathrm{opt}}<2M_{\mathrm{opt}}.
\]
Since $\kappa_{\mathrm{opt}}\le1/(4M_{\mathrm{opt}})$,
the derivative of $u-\kappa_{\mathrm{opt}}u^2$ on
$[0,2M_{\mathrm{opt}}]$ is
$1-2\kappa_{\mathrm{opt}}u\ge0$.
Apply this monotonicity to the recursion at time $t+1$ and we have
\begin{align*}
e_{t+1}
&\le v_t-\kappa_{\mathrm{opt}}v_t^2
+\frac{E_{\mathrm{opt}}}{\sqrt{t+2}}\le v_t-
\frac{\kappa_{\mathrm{opt}}K_{\mathrm{opt}}^2-E_{\mathrm{opt}}}
{\sqrt{t+1}}.
\end{align*}
By the definition of $K_{\mathrm{opt}}$, we have $\kappa_{\mathrm{opt}}K_{\mathrm{opt}}^2-E_{\mathrm{opt}}
\ge\frac{\kappa_{\mathrm{opt}}K_{\mathrm{opt}}^2}{2}
\ge\frac{K_{\mathrm{opt}}}{4}$. The first inequality uses
$K_{\mathrm{opt}}^2\ge2E_{\mathrm{opt}}/\kappa_{\mathrm{opt}}$;
the second uses $K_{\mathrm{opt}}\ge1/(2\kappa_{\mathrm{opt}})$.
On the other hand, integration of the derivative of $u^{-1/4}$
gives
\[
v_t-v_{t+1}
=\frac{K_{\mathrm{opt}}}{4}\int_{t+1}^{t+2}u^{-5/4}\,du
\le\frac{K_{\mathrm{opt}}}{4(t+1)^{5/4}}
\le\frac{K_{\mathrm{opt}}}{4\sqrt{t+1}}.
\]
Combining the last three displays yields $e_{t+1}\le v_{t+1}$.
By induction, we prove the first bound in
\eqref{eq:ced-optimization-bound} for every $T\ge1$.

Finally, we translate the true-cost bound to the calibrated
optimization error. By the elementary inequality
$|\min f-\min g|\le\sup|f-g|$, applied on $\Theta$, we have
\begin{align}
|\varepsilon_T-\Delta_{\lambda,m}(\theta_T)|
&=\left|C_T(\theta_T)-F(\theta_T)
+\min_{\theta\in\Theta}F(\theta)-\min_{\theta\in\Theta}C_T(\theta)\right|
\nonumber\\
&\le2\sup_{\theta\in\Theta}|C_T(\theta)-F(\theta)|
\le4\lambda H\|w_T-w^\star_\lambda\|_2.
\label{eq:ced-error-transfer}
\end{align}
Taking expectations and applying \eqref{eq:ced-source-mse}
proves the second bound in \eqref{eq:ced-optimization-bound}.
 We finish the proof.
\end{proof}
\begin{proof}[Proof of Lemma \ref{lem:ced-lojas}]
We will apply the nonnegative form of
Lemma~\ref{lem:ced-compact-lojas} with
\[
K=\Theta,\ 
f(\theta)=\Delta_{\lambda,m}(\theta),\ 
 g(\theta)=\cK_{\lambda,m}(\theta).
\]
Accordingly, we verify that the two functions are nonnegative,
that their graphs are subanalytic and compact, and that
$\Delta_{\lambda,m}(\theta)=0$ implies
$\cK_{\lambda,m}(\theta)=0$. We also establish an upper bound
on $\cK_{\lambda,m}$ to use in the theorem's exponent normalization.
Their definitions are recalled in the lemma statement.

\medskip\noindent
\emph{Nonnegativity.}
By the optimality of $\theta^\dagger_{\lambda,m}$ and the
nonnegativity of KL divergence, respectively, we have
\[
\Delta_{\lambda,m}(\theta)\ge0,
\ \cK_{\lambda,m}(\theta)\ge0,
\ \theta\in\Theta.
\]
\emph{Analyticity.}
Lemma~\ref{lem:ced-smoothness} proves analyticity for the average
KL against any fixed positive answer laws. Apply that result
with $Q_j=\pi_{w^\star_\lambda}(\cdot\mid\widetilde x_j)$
and with
$Q_j=\pi_{\mathrm{stu},\theta^\dagger_{\lambda,m}}
(\cdot\mid\widetilde x_j)$, respectively.
Thus $C_{w^\star_\lambda}$ and $\cK_{\lambda,m}$ are analytic.
Subtracting the constant
$C_{w^\star_\lambda}(\theta^\dagger_{\lambda,m})$
shows that $\Delta_{\lambda,m}$ is analytic.

\emph{Compact subanalytic graphs.}
For $F\in\{\Delta_{\lambda,m},\cK_{\lambda,m}\}$, the restricted
graph is
\[
\operatorname{graph}(F|_\Theta)
=\{(\theta,u)\in\RR^{d+1}:
B^2-\|\theta\|_2^2\ge0,\ u-F(\theta)=0\}.
\]
The defining inequality is polynomial and the equality is analytic
by Lemma~\ref{lem:ced-smoothness}. Hence the graph is semianalytic,
and therefore subanalytic.
It is also compact because it is the image of the compact ball $\Theta$
under the continuous map $\theta\mapsto(\theta,F(\theta))$.

\emph{Zero-set inclusion.}
If $\Delta_{\lambda,m}(\theta)=0$, its definition gives
\[
C_{w^\star_\lambda}(\theta)
=C_{w^\star_\lambda}(\theta^\dagger_{\lambda,m})
=\min_{\vartheta\in\Theta}C_{w^\star_\lambda}(\vartheta).
\]
Thus $\theta$ is also a global minimizer. By
Assumption~\ref{ass:ced-unique}, we have
\[
\pi_{\mathrm{stu},\theta}(\cdot\mid\widetilde x_j)
=\pi_{\mathrm{stu},\theta^\dagger_{\lambda,m}}
(\cdot\mid\widetilde x_j),\ j=1,\ldots,m.
\]
Each KL summand defining $\cK_{\lambda,m}(\theta)$ is consequently
the divergence of a law from itself, hence zero. We obtain the
required inclusion
\[
\{\theta\in\Theta:\Delta_{\lambda,m}(\theta)=0\}
\subseteq\{\theta\in\Theta:\cK_{\lambda,m}(\theta)=0\}.
\]

\medskip\noindent
\emph{A uniform bound for the KL divergence}
Apply the student probability envelope
\eqref{eq:ced-student-envelope} from Lemma~\ref{lem:ced-z}
to $\theta$ and $\theta^\dagger_{\lambda,m}$.
Dividing the lower bound for the numerator by the upper bound
for the denominator, and conversely, gives
\[
e^{-4B}
\le\frac{\pi_{\mathrm{stu},\theta}(a\mid s)}
{\pi_{\mathrm{stu},\theta^\dagger_{\lambda,m}}(a\mid s)}
\le e^{4B}
\ (a\in\cB(s)).
\]
Taking logarithms gives an absolute token log-ratio bound of $4B$.
For $a_{1:H}\in\cA(\widetilde x_j)$, put
$s_{j,h}=(\widetilde x_j,a_{1:h-1})$.
By ~\eqref{eq:ced-chain} and the triangle inequality, we have
\begin{align*}
\left|\log
\frac{\pi_{\mathrm{stu},\theta}(a_{1:H}\mid\widetilde x_j)}
{\pi_{\mathrm{stu},\theta^\dagger_{\lambda,m}}
(a_{1:H}\mid\widetilde x_j)}\right|
&=\left|\sum_{h=1}^{H}\log
\frac{\pi_{\mathrm{stu},\theta}(a_h\mid s_{j,h})}
{\pi_{\mathrm{stu},\theta^\dagger_{\lambda,m}}(a_h\mid s_{j,h})}
\right|\le\sum_{h=1}^{H}\left|\log
\frac{\pi_{\mathrm{stu},\theta}(a_h\mid s_{j,h})}
{\pi_{\mathrm{stu},\theta^\dagger_{\lambda,m}}(a_h\mid s_{j,h})}
\right|\le4BH.
\end{align*}
Using the expectation form of KL in the definition of
$\cK_{\lambda,m}$, this pointwise bound gives
\[
0\le\cK_{\lambda,m}(\theta)
\le\frac1m\sum_{j=1}^{m}
\EE_{a_{1:H}\sim\pi_{\mathrm{stu},\theta}(\cdot\mid\widetilde x_j)}
[4BH]
=4BH,
\]
Via the steps above, we verify the nonnegativity, compact subanalytic graphs,
and zero-set inclusion required by
Lemma~\ref{lem:ced-compact-lojas}. Moreover, the last step provides the bound
$\cK_{\lambda,m}(\theta)\le4BH\le M_{\lambda,m}$ with
$M_{\lambda,m}:=\max\{1,4BH\}$.

Since the functions and their domain are fixed independently
of the iteration number, applying Lemma \ref{lem:ced-compact-lojas} and we get that there exist
$a_{\lambda,m}>0$ and $p_{\lambda,m}\ge1$ such that
\[
\Delta_{\lambda,m}(\theta)
\ge a_{\lambda,m}[\cK_{\lambda,m}(\theta)]^{p_{\lambda,m}}
\ \theta\in\Theta,
\]
We finish the proof.
\end{proof}
\section{Proofs in Section \ref{sec:sm-temperature}}
\begin{proof}[Proof of Proposition \ref{prop:sm-temp-oracle}]
We first compute the true student objective. For either prompt in
pair $(\tilde{x}_{i,+},\tilde{x}_{i,-})$, we have
\[\pi_{\mathrm{stu},\theta}(a\mid\widetilde x_{i,\epsilon},\varnothing)=\frac{\exp(\theta_i\mathbf1\{a=1\})}{e^{\theta_i}+2},
\ a\in\{0,1,\mathtt{null}\}.\]
We define $p(\theta_i):=\frac{e^{\theta_i}}{e^{\theta_i}+2}
=\sigma(\theta_i-\log2),\ \sigma(u)=(1+e^{-u})^{-1}$. Then, 
the probabilities of the student policy to output $0$ and $\mathtt{null}$ are each
$[1-p(\theta_i)]/2$. Therefore, we have
\begin{align*}
\frac12\sum_{\epsilon\in\{+,-\}}
\EE_{a_{1:2}\sim\pi_{\mathrm{stu},\theta}(\cdot\mid\widetilde x_{i,\epsilon})}
[R(\widetilde x_{i,\epsilon},a_{1:2})]
=&\frac12\left[
\pi_{\mathrm{stu},\theta}(1\mid\widetilde x_{i,+},\varnothing)
+\pi_{\mathrm{stu},\theta}(0\mid\widetilde x_{i,-},\varnothing)\right]\\
=&\frac12\left[p(\theta_i)+\frac{1-p(\theta_i)}2\right]
=\frac{1+p(\theta_i)}4.
\end{align*}
Both the student and reference emit $\mathtt{EOS}$ with probability
one at $h=2$, therefore, for $\epsilon\in\{+,-\}$, at any $\tilde{x}_{i,\epsilon}$, we can explicitly compute the KL divergence as
\begin{align}
\KL(\pi_{\text{stu},\theta}(\cdot|\tilde{x}_{i,\epsilon})\|\pi_{\text{pre}}(\cdot|\tilde{x}_{i,\epsilon}))
&=p(\theta_i)\log\frac{p(\theta_i)}{1/3}
+2\frac{1-p(\theta_i)}2
\log\frac{[1-p(\theta_i)]/2}{1/3}\nonumber\\
&=p(\theta_i)\log\frac{p(\theta_i)}{1/3}
+[1-p(\theta_i)]\log\frac{1-p(\theta_i)}{2/3}\nonumber\\
&=\theta_i p(\theta_i)-\log\frac{e^{\theta_i}+2}{3}.
\label{eq:sm-temp-reference-cost}
\end{align}
Recall that
$p'(\theta_i)=p(\theta_i)[1-p(\theta_i)]$. Hence,
for each $i\in[d]$ and $\epsilon\in\{+,-\}$, we have
\begin{align*}
\nabla_\theta
\KL\!\left(
\pi_{\mathrm{stu},\theta}(\cdot\mid\widetilde x_{i,\epsilon})
\,\middle\|\,
\pi_{\mathrm{pre}}(\cdot\mid\widetilde x_{i,\epsilon})
\right)=&
\nabla_\theta\left[
\theta_i p(\theta_i)
-\log\frac{e^{\theta_i}+2}{3}
\right]\\
=&
\left[
p(\theta_i)+\theta_i p'(\theta_i)
-\frac{e^{\theta_i}}{e^{\theta_i}+2}
\right]e_i\\
=&\theta_i p'(\theta_i)e_i.
\end{align*}
Here $e_i$ appears because the expression depends on $\theta$
only through its $i$th coordinate.

Thus, using the fact that the KL  regularization terms  are equal at $\widetilde x_{i,+}$ and
$\widetilde x_{i,-}$, we obtain
\begin{align*}
J_{\lambda,m}(\pi_{\mathrm{stu},\theta})
&=\frac1d\sum_{i=1}^d
\left[
\frac{1+p(\theta_i)}4
-\lambda
\KL\!\left(
\pi_{\mathrm{stu},\theta}(\cdot\mid\widetilde x_{i,+})
\,\middle\|\,
\pi_{\mathrm{pre}}(\cdot\mid\widetilde x_{i,+})
\right)
\right].
\end{align*}
Differentiating this finite sum and substituting the preceding
KL-gradient identity gives
\begin{align*}
\nabla_\theta J_{\lambda,m}(\pi_{\mathrm{stu},\theta})
&=\frac1d\sum_{i=1}^d
\left[
\frac{p'(\theta_i)}4 e_i
-\lambda\nabla_\theta
\KL\!\left(
\pi_{\mathrm{stu},\theta}(\cdot\mid\widetilde x_{i,+})
\,\middle\|\,
\pi_{\mathrm{pre}}(\cdot\mid\widetilde x_{i,+})
\right)
\right]\\
&=\frac1d\sum_{i=1}^d
p'(\theta_i)\left(\frac14-\lambda\theta_i\right)e_i\\
&=\frac1d
\begin{pmatrix}
p_1(\theta_1)[1-p_1(\theta_1)]
\left(\frac14-\lambda\theta_1\right)\\
\vdots\\
p_d(\theta_d)[1-p_d(\theta_d)]
\left(\frac14-\lambda\theta_d\right)
\end{pmatrix}.
\end{align*}
Each summand strictly increases up to $1/(4\lambda)$ and strictly
decreases afterward. The vector of these maximizers is feasible
and interior, since $\sqrt d/(4\lambda)<B$. Then, we apply the first order optimality condition to prove that $\theta^\dagger_{\lambda,m}=\frac1{4\lambda}\mathbf1_d
\in\operatorname{int}(\Theta)$.

We next identify the unrestricted optimal policy and verify that
it belongs to the teacher policy class but not to the student
policy class. Specifically, by Lemma \ref{lem:ced-gibbs},  we have that
\begin{align*}
\pi^\star_\lambda((a_1,\mathtt{EOS})\mid x)
&=\frac{(1/3)e^{\mathbf1\{a_1=y(x)\}/\lambda}}
{(e^{1/\lambda}+2)/3}=\frac{e^{\mathbf1\{a_1=y(x)\}/\lambda}}{e^{1/\lambda}+2}
=\pi_{(\sqrt2/\lambda)u_1}((a_1,\mathtt{EOS})\mid x).
\end{align*}
For $w^\star_\lambda=(\sqrt2/\lambda)u_1$, by the feature
definition in teacher class, we have
\[
(w^\star_\lambda)^\top\phi((x,\varnothing),a_1)
=
\begin{cases}
1/\lambda,&a_1=y(x),\\
0,&a_1\ne y(x).
\end{cases}
\]
The resulting softmax probabilities are exactly those in the
preceding expression. Since
$\|w^\star_\lambda\|_2=\sqrt2/\lambda<B$, this proves that
$w^\star_\lambda\in W$ and
$\pi^\star_\lambda=\pi_{w^\star_\lambda}$.

To prove that no student represents this policy, we compare the
probability of the answer $(1,\mathtt{EOS})$ at the two prompts
$\widetilde x_{i,+}$ and $\widetilde x_{i,-}$. The unrestricted
optimum satisfies
\[
\pi^\star_\lambda((1,\mathtt{EOS})\mid\widetilde x_{i,+})
=
\frac{e^{1/\lambda}}{e^{1/\lambda}+2}
>
\frac1{e^{1/\lambda}+2}
=
\pi^\star_\lambda((1,\mathtt{EOS})\mid\widetilde x_{i,-}).
\]
In contrast, every student satisfies
\[
\pi_{\mathrm{stu},\theta}
((1,\mathtt{EOS})\mid\widetilde x_{i,+})
=
p(\theta_i)
=
\pi_{\mathrm{stu},\theta}
((1,\mathtt{EOS})\mid\widetilde x_{i,-}).
\]
Thus no $\theta\in\Theta$ can match the unrestricted optimum
at both prompts.

It remains to compare the teacher's regularized return with
that of every student. We first obtain a student upper bound
using the oracle parameter already proved optimal. 

Substituting
$(\theta^\dagger_{\lambda,m})_i=1/(4\lambda)$ into the objective
and the KL expression in \eqref{eq:sm-temp-reference-cost} gives
\begin{align*}
J_{\lambda,m}(\pi_{\mathrm{stu},\theta})
\le
J_{\lambda,m}(\pi_{\mathrm{stu},\theta^\dagger_{\lambda,m}})&=
\frac1d\sum_{i=1}^d
\left[
\frac{1+p(1/(4\lambda))}{4}
-\lambda\left(
\frac{p(1/(4\lambda))}{4\lambda}
-\log\frac{e^{1/(4\lambda)}+2}{3}
\right)
\right]\\
&=
\frac1d\sum_{i=1}^d
\left[
\frac14+\lambda\log\frac{e^{1/(4\lambda)}+2}{3}
\right]=
\frac14+\lambda\log\frac{e^{1/(4\lambda)}+2}{3}.
\end{align*}
We now compute the teacher's regularized return.
Since $w_{\mathrm{tea}}=\alpha w^\star_\lambda$, its answer
probabilities are
\[
\pi_{\mathrm{tea}}((a_1,\mathtt{EOS})\mid x)
=
\frac{\exp\!\left(
\alpha\mathbf1\{a_1=y(x)\}/\lambda
\right)}
{e^{\alpha/\lambda}+2}.
\]
In particular, its expected reward at every target prompt is
\begin{align*}
\EE_{a_{1:2}\sim\pi_{\mathrm{tea}}(\cdot\mid x)}
[R(x,a_{1:2})]
&=
\pi_{\mathrm{tea}}((y(x),\mathtt{EOS})\mid x)=
\frac{e^{\alpha/\lambda}}{e^{\alpha/\lambda}+2}.
\end{align*}
Because $\alpha<1$, this correct-verdict probability is strictly
smaller than that of $\pi^\star_\lambda$. Hence the teacher
differs from the unrestricted optimal policy.

The reference assigns probability $1/3$ to each feasible answer.
Consequently, we know that
\begin{align}
\log\frac{\pi_{\mathrm{tea}}(a_{1:2}\mid x)}
{\pi_{\mathrm{pre}}(a_{1:2}\mid x)}
&=
\log\left[
\frac{\exp(\alpha R(x,a_{1:2})/\lambda)}
{e^{\alpha/\lambda}+2}\cdot3
\right]=
\frac{\alpha}{\lambda}R(x,a_{1:2})
-\log\frac{e^{\alpha/\lambda}+2}{3}.
\label{eq:sm-temp-teacher-ratio}
\end{align}
Substituting this identity into the regularized objective yields
\begin{align*}
J_{\lambda,m}(\pi_{\mathrm{tea}})
&=
\frac1m\sum_{j=1}^m
\EE_{a_{1:2}\sim\pi_{\mathrm{tea}}(\cdot\mid\widetilde x_j)}
\left[
(1-\alpha)R(\widetilde x_j,a_{1:2})
+\lambda\log\frac{e^{\alpha/\lambda}+2}{3}
\right]\\
&=
(1-\alpha)\frac{e^{\alpha/\lambda}}{e^{\alpha/\lambda}+2}
+\lambda\log\frac{e^{\alpha/\lambda}+2}{3}.
\end{align*}

Viewing this expression as a function of $\alpha$, differentiation implies that
\begin{align*}
\frac{d}{d\alpha}J_{\lambda,m}(\pi_{\mathrm{tea}})
&=
-\frac{e^{\alpha/\lambda}}{e^{\alpha/\lambda}+2}
+\frac{2(1-\alpha)e^{\alpha/\lambda}}
{\lambda(e^{\alpha/\lambda}+2)^2}
+\frac{e^{\alpha/\lambda}}{e^{\alpha/\lambda}+2}=
\frac{2(1-\alpha)e^{\alpha/\lambda}}
{\lambda(e^{\alpha/\lambda}+2)^2}
>0.
\end{align*}
Thus, for every $\alpha\in[1/2,1)$, we have that $J_{\lambda,m}(\pi_{\mathrm{tea}})
\ge
\frac12\frac{e^{1/(2\lambda)}}{e^{1/(2\lambda)}+2}
+\lambda\log\frac{e^{1/(2\lambda)}+2}{3}$.

We finally compare this teacher lower bound with the student
upper bound. Since $e^s>1$ for $s>0$, we have
$e^s/(e^s+2)>1/3$. Hence by direct algebra, we have
\[
\frac{e^{1/(2\lambda)}}{e^{1/(2\lambda)}+2}>\frac13,\ \log\frac{e^{1/(2\lambda)}+2}{e^{1/(4\lambda)}+2}
=
\int_{1/(4\lambda)}^{1/(2\lambda)}
\frac{e^s}{e^s+2}\,ds>
\frac13\left(\frac1{2\lambda}-\frac1{4\lambda}\right)
=
\frac1{12\lambda}.
\]
Combining these two inequalities with the preceding bounds,
we obtain
\begin{align*}
&J_{\lambda,m}(\pi_{\mathrm{tea}})
-
J_{\lambda,m}(\pi_{\mathrm{stu},\theta^\dagger_{\lambda,m}})\ge
\frac12\frac{e^{1/(2\lambda)}}{e^{1/(2\lambda)}+2}
-\frac14
+\lambda\log
\frac{e^{1/(2\lambda)}+2}{e^{1/(4\lambda)}+2}>
\frac16-\frac14+\lambda\frac1{12\lambda}
=0.
\end{align*}
Therefore, we have $J_{\lambda,m}(\pi_{\mathrm{tea}})
>
J_{\lambda,m}(\pi_{\mathrm{stu},\theta^\dagger_{\lambda,m}})
\ge
J_{\lambda,m}(\pi_{\mathrm{stu},\theta})\ \text{for every }\theta\in\Theta$.
We finish the proof.
\end{proof}

\begin{proof}[Proof of Theorem~\ref{thm:sm-temperature}]
We first identify the minimizer of the direct-matching objective
and its distance from the oracle student. Then, we bound the SGD
error around this minimizer. Finally, a quadratic lower bound
for the policy KL divergence converts the remaining parameter distance into the claimed separation.

Recall from the proof of Proposition~\ref{prop:sm-temp-oracle} that
$p(\theta_i)=e^{\theta_i}/(e^{\theta_i}+2)$.
For a target prompt $x$ and a feasible answer $a_{1:2}$, all three
policy probabilities are positive. Factoring the likelihood
ratio through the reference gives
\begin{align*}
Z_{\mathrm{SM}}(\theta;x,a_{1:2})
&=
\log\left[
\frac{\pi_{\mathrm{stu},\theta}(a_{1:2}\mid x)}
{\pi_{\mathrm{pre}}(a_{1:2}\mid x)}
\frac{\pi_{\mathrm{pre}}(a_{1:2}\mid x)}
{\pi_{\mathrm{tea}}(a_{1:2}\mid x)}
\right]
+\lambda\log
\frac{\pi_{\mathrm{stu},\theta}(a_{1:2}\mid x)}
{\pi_{\mathrm{pre}}(a_{1:2}\mid x)}\\
&=
(1+\lambda)\log
\frac{\pi_{\mathrm{stu},\theta}(a_{1:2}\mid x)}
{\pi_{\mathrm{pre}}(a_{1:2}\mid x)}
-\log
\frac{\pi_{\mathrm{tea}}(a_{1:2}\mid x)}
{\pi_{\mathrm{pre}}(a_{1:2}\mid x)}\\
&=
(1+\lambda)\log
\frac{\pi_{\mathrm{stu},\theta}(a_{1:2}\mid x)}
{\pi_{\mathrm{pre}}(a_{1:2}\mid x)}
-\frac{\alpha}{\lambda}R(x,a_{1:2})
+\log\frac{e^{\alpha/\lambda}+2}{3}.
\end{align*}
The second equality uses $\log(uv)=\log u+\log v$ and
$\log(1/u)=-\log u$; the last uses
\eqref{eq:sm-temp-teacher-ratio}.

We now derive the population cost from its definition.
Using \eqref{eq:sm-temp-cost}, the definition of
$Z_{\mathrm{SM}}$, and its pointwise expansion above, we obtain
\begin{align*}
C_{\mathrm{SM}}(\theta)
&=
\frac1m\sum_{j=1}^m
\Bigg[
\KL\!\left(
\pi_{\mathrm{stu},\theta}(\cdot\mid\widetilde x_j)
\,\middle\|\,
\pi_{\mathrm{tea}}(\cdot\mid\widetilde x_j)
\right)+\lambda
\KL\!\left(
\pi_{\mathrm{stu},\theta}(\cdot\mid\widetilde x_j)
\,\middle\|\,
\pi_{\mathrm{pre}}(\cdot\mid\widetilde x_j)
\right)
\Bigg]\\
&=
\frac1m\sum_{j=1}^m
\EE_{a_{1:2}\sim\pi_{\mathrm{stu},\theta}(\cdot\mid\widetilde x_j)}
\left[Z_{\mathrm{SM}}(\theta;\widetilde x_j,a_{1:2})\right]\\
&=
\frac1m\sum_{j=1}^m
\EE_{a_{1:2}\sim\pi_{\mathrm{stu},\theta}(\cdot\mid\widetilde x_j)}
\left[
(1+\lambda)\log
\frac{\pi_{\mathrm{stu},\theta}(a_{1:2}\mid\widetilde x_j)}
{\pi_{\mathrm{pre}}(a_{1:2}\mid\widetilde x_j)}
-\frac{\alpha}{\lambda}R(\widetilde x_j,a_{1:2})
+\log\frac{e^{\alpha/\lambda}+2}{3}
\right]\\
&=
\frac1m\sum_{j=1}^m
\Bigg[
(1+\lambda)
\KL\!\left(
\pi_{\mathrm{stu},\theta}(\cdot\mid\widetilde x_j)
\,\middle\|\,
\pi_{\mathrm{pre}}(\cdot\mid\widetilde x_j)
\right)
-\frac{\alpha}{\lambda}
\EE_{a_{1:2}\sim\pi_{\mathrm{stu},\theta}(\cdot\mid\widetilde x_j)}
[R(\widetilde x_j,a_{1:2})]
\Bigg]
+\log\frac{e^{\frac{\alpha}{\lambda}}+2}{3}.
\end{align*}
The last equality uses linearity of expectation and the
definition of KL. The final logarithm is independent of both
the answer and the prompt, so averaging leaves it unchanged.

Since $m=2d$, we can replace the sum over $j$ by the sum over
$i\in[d]$ and $\epsilon\in\{+,-\}$.
The calculations in Proposition~\ref{prop:sm-temp-oracle} give
\begin{align*}
&\sum_{\epsilon\in\{+,-\}}
\KL\!\left(
\pi_{\mathrm{stu},\theta}(\cdot\mid\widetilde x_{i,\epsilon})
\,\middle\|\,
\pi_{\mathrm{pre}}(\cdot\mid\widetilde x_{i,\epsilon})
\right)
=
2\KL\!\left(
\pi_{\mathrm{stu},\theta}(\cdot\mid\widetilde x_{i,+})
\,\middle\|\,
\pi_{\mathrm{pre}}(\cdot\mid\widetilde x_{i,+})
\right),\\
&\sum_{\epsilon\in\{+,-\}}
\EE_{a_{1:2}\sim
\pi_{\mathrm{stu},\theta}(\cdot\mid\widetilde x_{i,\epsilon})}
[R(\widetilde x_{i,\epsilon},a_{1:2})]
=
2\frac{1+p(\theta_i)}4
=
\frac{1+p(\theta_i)}2.
\end{align*}
Substituting these two identities into the preceding cost
expression gives
\begin{align*}
C_{\mathrm{SM}}(\theta)
&=
\frac1{2d}\sum_{i=1}^d
\left[
2(1+\lambda)
\KL\!\left(
\pi_{\mathrm{stu},\theta}(\cdot\mid\widetilde x_{i,+})
\,\middle\|\,
\pi_{\mathrm{pre}}(\cdot\mid\widetilde x_{i,+})
\right)
-\frac{\alpha}{\lambda}
\frac{1+p(\theta_i)}2
\right]
+\log\frac{e^{\alpha/\lambda}+2}{3}\\
&=
\frac1d\sum_{i=1}^d
\left[
(1+\lambda)
\KL\!\left(
\pi_{\mathrm{stu},\theta}(\cdot\mid\widetilde x_{i,+})
\,\middle\|\,
\pi_{\mathrm{pre}}(\cdot\mid\widetilde x_{i,+})
\right)
-\frac{\alpha}{4\lambda}[1+p(\theta_i)]
\right]
+\log\frac{e^{\alpha/\lambda}+2}{3}.
\end{align*}

We next differentiate this expression with respect to the full
parameter vector $\theta$. Notice that the final logarithm is constant in
$\theta$, and $\nabla_\theta p(\theta_i)=p'(\theta_i)e_i$.
Using the KL-gradient identity already proved in
Proposition~\ref{prop:sm-temp-oracle}, we obtain
\begin{align*}
\nabla_\theta C_{\mathrm{SM}}(\theta)
&=
\frac1d\sum_{i=1}^d
\left[
(1+\lambda)\nabla_\theta
\KL\!\left(
\pi_{\mathrm{stu},\theta}(\cdot\mid\widetilde x_{i,+})
\,\middle\|\,
\pi_{\mathrm{pre}}(\cdot\mid\widetilde x_{i,+})
\right)
-\frac{\alpha}{4\lambda}\nabla_\theta[1+p(\theta_i)]
\right]\\
&=
\frac1d\sum_{i=1}^d
\left[
(1+\lambda)\theta_i p'(\theta_i)e_i
-\frac{\alpha}{4\lambda}p'(\theta_i)e_i
\right]=
\frac1d\sum_{i=1}^d
p'(\theta_i)
\left[(1+\lambda)\theta_i-\frac{\alpha}{4\lambda}\right]e_i.
\end{align*}

By the first order optimality condition, the minimizer is $\theta_{\mathrm{SM}}^\star
:=\argmin_{\theta\in\Theta}C_{\mathrm{SM}}(\theta)
=\frac{\alpha}{4\lambda(1+\lambda)}\mathbf1_d$.

It is feasible and interior because
$\|\theta_{\mathrm{SM}}^\star\|_2
<\sqrt d/(4\lambda)<B$.
Using the oracle parameter from Proposition~\ref{prop:sm-temp-oracle},
we obtain the fixed parameter gap
\begin{equation}
\|\theta_{\mathrm{SM}}^\star-\theta^\dagger_{\lambda,m}\|_2
=
\frac{\sqrt d}{4\lambda}
\left(1-\frac{\alpha}{1+\lambda}\right)
\ge \frac{\sqrt d(1-\alpha)}{4\lambda}.
\label{eq:sm-temp-gap}
\end{equation}
The inequality follows from $\alpha/(1+\lambda)\le\alpha$.

We now bound the error of the SGD iterate relative to
$\theta_{\mathrm{SM}}^\star$, starting from the update itself.
Define $\cF_t^{\mathrm{SM}}
:=
\sigma\!\left(
j_s^{\mathrm{SM}},a_{s,1:2}^{\mathrm{SM}}:
0\le s<t
\right)$. The parameter $\theta_t^{\mathrm{SM}}$ is
$\cF_t^{\mathrm{SM}}$-measurable, and
$\theta_{\mathrm{SM}}^\star$ is a fixed point of the projection
onto $\Theta$. Therefore, the update and nonexpansiveness of
projection yields that
\begin{align*}
\|\theta_{t+1}^{\mathrm{SM}}-\theta_{\mathrm{SM}}^\star\|_2^2
&=
\left\|
\operatorname{Proj}_{\Theta}
(\theta_t^{\mathrm{SM}}-\eta_t^{\mathrm{SM}}\widehat g_t^{\mathrm{SM}})
-\operatorname{Proj}_{\Theta}(\theta_{\mathrm{SM}}^\star)
\right\|_2^2\\
&\le
\|\theta_t^{\mathrm{SM}}-\theta_{\mathrm{SM}}^\star
-\eta_t^{\mathrm{SM}}\widehat g_t^{\mathrm{SM}}\|_2^2\\
&=
\|\theta_t^{\mathrm{SM}}-\theta_{\mathrm{SM}}^\star\|_2^2
-2\eta_t^{\mathrm{SM}}
(\theta_t^{\mathrm{SM}}-\theta_{\mathrm{SM}}^\star)^\top
\widehat g_t^{\mathrm{SM}}
+(\eta_t^{\mathrm{SM}})^2\|\widehat g_t^{\mathrm{SM}}\|_2^2.
\end{align*}
Taking conditional expectations on both sides, we have that
\begin{equation}
\begin{aligned}
&\EE\!\left[
\|\theta_{t+1}^{\mathrm{SM}}-\theta_{\mathrm{SM}}^\star\|_2^2
\,\middle|\,\cF_t^{\mathrm{SM}}
\right]\le
\|\theta_t^{\mathrm{SM}}-\theta_{\mathrm{SM}}^\star\|_2^2
-2\eta_t^{\mathrm{SM}}
\EE\!\left[
(\theta_t^{\mathrm{SM}}-\theta_{\mathrm{SM}}^\star)^\top
\widehat g_t^{\mathrm{SM}}
\,\middle|\,\cF_t^{\mathrm{SM}}
\right]
+(\eta_t^{\mathrm{SM}})^2
\EE\!\left[
\|\widehat g_t^{\mathrm{SM}}\|_2^2
\,\middle|\,\cF_t^{\mathrm{SM}}
\right].
\end{aligned}
\label{eq:sm-temp-error-recursion}
\end{equation}
Since $\eta_t^{\mathrm{SM}}>0$, an upper bound for the next error
requires a lower bound for the cross term and an upper bound for
the gradient second moment. We establish these two bounds in turn.

For the cross term, we first identify the conditional mean of
$\widehat g_t^{\mathrm{SM}}$.
At
$x=\widetilde x_{i,\epsilon}$, the student score is
\begin{align*}
S_{\mathrm{stu},\theta}(x,a_{1:2})
&=
\nabla_\theta\left[
\theta_i\mathbf1\{a_1=1\}-\log(e^{\theta_i}+2)
\right]=
e_i[\mathbf1\{a_1=1\}-p(\theta_i)].
\end{align*}
Thus, we obtain $\|S_{\mathrm{stu},\theta}(x,a_{1:2})\|_2\le1,\ 
\EE_{a_{1:2}\sim\pi_{\mathrm{stu},\theta}(\cdot\mid x)}
[S_{\mathrm{stu},\theta}(x,a_{1:2})]
=e_i[p(\theta_i)-p(\theta_i)]=0$.

From the definitions of the score and sampled cost, we have
\begin{align*}
\nabla_\theta\pi_{\mathrm{stu},\theta}(a_{1:2}\mid x)
&=
\pi_{\mathrm{stu},\theta}(a_{1:2}\mid x)
S_{\mathrm{stu},\theta}(x,a_{1:2}),\\
\nabla_\theta Z_{\mathrm{SM}}(\theta;x,a_{1:2})
&=(1+\lambda)S_{\mathrm{stu},\theta}(x,a_{1:2}).
\end{align*}
The product rule for the finite sum defining
$C_{\mathrm{SM}}$, followed by the zero-mean score identity, gives
\begin{align*}
\nabla_\theta C_{\mathrm{SM}}(\theta)
&=\frac1m\sum_{j=1}^m
\EE_{a_{1:2}\sim\pi_{\mathrm{stu},\theta}(\cdot\mid\widetilde x_j)}
\left[
S_{\mathrm{stu},\theta}(\widetilde x_j,a_{1:2})
\bigl(Z_{\mathrm{SM}}(\theta;\widetilde x_j,a_{1:2})
+1+\lambda\bigr)
\right]\\
&=\frac1m\sum_{j=1}^m
\EE_{a_{1:2}\sim\pi_{\mathrm{stu},\theta}(\cdot\mid\widetilde x_j)}
\left[
S_{\mathrm{stu},\theta}(\widetilde x_j,a_{1:2})
Z_{\mathrm{SM}}(\theta;\widetilde x_j,a_{1:2})
\right].
\end{align*}
Since the algorithm samples $j_t^{\mathrm{SM}}$ uniformly and
then samples the answer from the current student,
\[
\Pr\!\left(
j_t^{\mathrm{SM}}=j,\,
a_{t,1:2}^{\mathrm{SM}}=a_{1:2}
\,\middle|\,\cF_t^{\mathrm{SM}}
\right)
=
\frac1m\pi_{\mathrm{stu},\theta_t^{\mathrm{SM}}}
(a_{1:2}\mid\widetilde x_j),
\  a_{1:2}\in\cA(\widetilde x_j).
\]
Thus the preceding gradient identity implies
\begin{equation}
\EE[\widehat g_t^{\mathrm{SM}}\mid\cF_t^{\mathrm{SM}}]
=
\left.\nabla_\theta C_{\mathrm{SM}}(\theta)
\right|_{\theta=\theta_t^{\mathrm{SM}}}.
\label{eq:sm-temp-unbiased}
\end{equation}

We now use this unbiasedness identity to bound the cross term.
For every $u\in[-B,B]$, by algebra, we have
\[
p'(u)=\sigma'(u-\log2)
\ge\sigma'(B+\log2)=\kappa_{\mathrm{SM}},
\]
Using the population gradient already computed
above, together with
$(1+\lambda)(\theta_{\mathrm{SM}}^\star)_i=\alpha/(4\lambda)$, we have
\begin{equation}
\begin{aligned}
\EE\!\left[
(\theta_t^{\mathrm{SM}}-\theta_{\mathrm{SM}}^\star)^\top
\widehat g_t^{\mathrm{SM}}
\,\middle|\,\cF_t^{\mathrm{SM}}
\right]
=&
(\theta_t^{\mathrm{SM}}-\theta_{\mathrm{SM}}^\star)^\top
\EE[\widehat g_t^{\mathrm{SM}}\mid\cF_t^{\mathrm{SM}}]\\
=&
(\theta_t^{\mathrm{SM}}-\theta_{\mathrm{SM}}^\star)^\top
\left.\nabla_\theta C_{\mathrm{SM}}(\theta)
\right|_{\theta=\theta_t^{\mathrm{SM}}}\\
=&
\frac{1+\lambda}{d}\sum_{i=1}^d
p'\!\left((\theta_t^{\mathrm{SM}})_i\right)
\left[(\theta_t^{\mathrm{SM}})_i
-(\theta_{\mathrm{SM}}^\star)_i\right]^2\\
\ge&
\frac{(1+\lambda)\kappa_{\mathrm{SM}}}{d}
\|\theta_t^{\mathrm{SM}}-\theta_{\mathrm{SM}}^\star\|_2^2
=
\mu_{\mathrm{SM}}\|\theta_t^{\mathrm{SM}}-\theta_{\mathrm{SM}}^\star\|_2^2.
\end{aligned}
\label{eq:sm-temp-cross-term}
\end{equation}
The first equality uses the measurability of
$\theta_t^{\mathrm{SM}}$; the second uses
\eqref{eq:sm-temp-unbiased}.

For the second moment in \eqref{eq:sm-temp-error-recursion},
the definition of the gradient estimate and the score bound
already proved give
\begin{align*}
\EE\!\left[
\|\widehat g_t^{\mathrm{SM}}\|_2^2
\,\middle|\,\cF_t^{\mathrm{SM}}
\right]
&=
\EE\!\left[
\left\|
S_{\mathrm{stu},\theta_t^{\mathrm{SM}}}
(\widetilde x_{j_t^{\mathrm{SM}}},a_{t,1:2}^{\mathrm{SM}})
\right\|_2^2
\left|
Z_{\mathrm{SM}}(\theta_t^{\mathrm{SM}};
\widetilde x_{j_t^{\mathrm{SM}}},a_{t,1:2}^{\mathrm{SM}})
\right|^2
\,\middle|\,\cF_t^{\mathrm{SM}}
\right]\\
&\le
\EE\!\left[
\left|
Z_{\mathrm{SM}}(\theta_t^{\mathrm{SM}};
\widetilde x_{j_t^{\mathrm{SM}}},a_{t,1:2}^{\mathrm{SM}})
\right|^2
\,\middle|\,\cF_t^{\mathrm{SM}}
\right].
\end{align*}
It therefore suffices to bound the sampled cost uniformly.
The explicit student/reference ratio is
\[
\log
\frac{\pi_{\mathrm{stu},\theta}(a_{1:2}\mid\widetilde x_{i,\epsilon})}
{\pi_{\mathrm{pre}}(a_{1:2}\mid\widetilde x_{i,\epsilon})}
=
\theta_i\mathbf1\{a_1=1\}
-\log\frac{e^{\theta_i}+2}{3}.
\]
The average $(e^{\theta_i}+1+1)/3$ lies between
$e^{\min\{0,\theta_i\}}$ and $e^{\max\{0,\theta_i\}}$.
Hence both terms on the right lie between
$\min\{0,\theta_i\}$ and $\max\{0,\theta_i\}$, and the absolute
log ratio is at most $|\theta_i|$.
Similarly, \eqref{eq:sm-temp-teacher-ratio} and
$0\le\log[(e^{\alpha/\lambda}+2)/3]\le\alpha/\lambda$ yields $\left|
\log\frac{\pi_{\mathrm{tea}}(a_{1:2}\mid x)}
{\pi_{\mathrm{pre}}(a_{1:2}\mid x)}
\right|
\le\frac{\alpha}{\lambda}$.

Returning to the reference decomposition of $Z_{\mathrm{SM}}$,
we obtain
\begin{align*}
|Z_{\mathrm{SM}}(\theta;x,a_{1:2})|
&\le
(1+\lambda)\left|
\log\frac{\pi_{\mathrm{stu},\theta}(a_{1:2}\mid x)}
{\pi_{\mathrm{pre}}(a_{1:2}\mid x)}
\right|
+
\left|
\log\frac{\pi_{\mathrm{tea}}(a_{1:2}\mid x)}
{\pi_{\mathrm{pre}}(a_{1:2}\mid x)}
\right|\\
&\le(1+\lambda)|\theta_i|+\frac{\alpha}{\lambda}
\le(2+\lambda)B=G_{\mathrm{SM}}.
\end{align*}
Here $x=\widetilde x_{i,\epsilon}$ and $\alpha/\lambda\le B$.
This uniform bound proves the required second-moment inequality:
\begin{equation}
\EE\!\left[
\|\widehat g_t^{\mathrm{SM}}\|_2^2
\,\middle|\,\cF_t^{\mathrm{SM}}
\right]\le G_{\mathrm{SM}}^2.
\label{eq:sm-temp-second-moment}
\end{equation}

Substituting \eqref{eq:sm-temp-cross-term} and
\eqref{eq:sm-temp-second-moment} into the original error
recurrence \eqref{eq:sm-temp-error-recursion}, we obtain
\begin{align*}
\EE\!\left[
\|\theta_{t+1}^{\mathrm{SM}}-\theta_{\mathrm{SM}}^\star\|_2^2
\,\middle|\,\cF_t^{\mathrm{SM}}
\right]
\le&
(1-2\mu_{\mathrm{SM}}\eta_t^{\mathrm{SM}})
\|\theta_t^{\mathrm{SM}}-\theta_{\mathrm{SM}}^\star\|_2^2
+(\eta_t^{\mathrm{SM}})^2G_{\mathrm{SM}}^2\\
=&
\frac{t}{t+2}
\|\theta_t^{\mathrm{SM}}-\theta_{\mathrm{SM}}^\star\|_2^2
+\frac{G_{\mathrm{SM}}^2}{\mu_{\mathrm{SM}}^2(t+2)^2},
\end{align*}
Taking expectations and applying the law of total expectation, we have that
\begin{equation}
\EE\!\left[
\|\theta_{t+1}^{\mathrm{SM}}-\theta_{\mathrm{SM}}^\star\|_2^2
\right]
\le
\frac{t}{t+2}
\EE\!\left[
\|\theta_t^{\mathrm{SM}}-\theta_{\mathrm{SM}}^\star\|_2^2
\right]
+\frac{G_{\mathrm{SM}}^2}{\mu_{\mathrm{SM}}^2(t+2)^2}.
\label{eq:sm-temp-mean-recursion}
\end{equation}
We solve this recurrence by induction to obtain
\begin{equation}
\EE\!\left[
\|\theta_T^{\mathrm{SM}}-\theta_{\mathrm{SM}}^\star\|_2^2
\right]
\le\frac{G_{\mathrm{SM}}^2}{\mu_{\mathrm{SM}}^2(T+1)}.
\label{eq:sm-temp-mse}
\end{equation}
Indeed, the initial bound follows from
$\theta_0^{\mathrm{SM}}=0$,
$\|\theta_{\mathrm{SM}}^\star\|_2\le B$, and
\[
\frac{G_{\mathrm{SM}}}{\mu_{\mathrm{SM}}}
=
\frac{d(2+\lambda)B}{(1+\lambda)\kappa_{\mathrm{SM}}}
\ge4dB\ge B,
\]
where $\kappa_{\mathrm{SM}}\le1/4$.
For the induction step, substitution of the bound at time $t$
into the expected recurrence gives
\begin{align*}
\EE\!\left[
\|\theta_{t+1}^{\mathrm{SM}}-\theta_{\mathrm{SM}}^\star\|_2^2
\right]
&\le
\frac{G_{\mathrm{SM}}^2}{\mu_{\mathrm{SM}}^2}
\left[\frac{t}{(t+2)(t+1)}+\frac1{(t+2)^2}\right]=
\frac{G_{\mathrm{SM}}^2}{\mu_{\mathrm{SM}}^2}
\left[\frac1{t+2}-\frac1{(t+1)(t+2)^2}\right]
\le\frac{G_{\mathrm{SM}}^2}{\mu_{\mathrm{SM}}^2(t+2)}.
\end{align*}

It remains to translate the fixed parameter gap and the SGD
error into the policy KL in the theorem. For any $\theta\in\Theta$,
the explicit student policy $\pi_{\text{stu},\theta}$ gives
\begin{align*}
&\KL\!\left(
\pi_{\mathrm{stu},\theta}(\cdot\mid\widetilde x_{i,\epsilon})
\,\middle\|\,
\pi_{\mathrm{stu},\theta^\dagger_{\lambda,m}}
(\cdot\mid\widetilde x_{i,\epsilon})
\right)\\
=&
\sum_{a_1\in\{0,1,\mathtt{null}\}}
\pi_{\mathrm{stu},\theta}
((a_1,\mathtt{EOS})\mid\widetilde x_{i,\epsilon})
\left[
\bigl(\theta_i-(\theta^\dagger_{\lambda,m})_i\bigr)
\mathbf1\{a_1=1\}
+\log\frac{e^{(\theta^\dagger_{\lambda,m})_i}+2}
{e^{\theta_i}+2}
\right]\\
=&
\bigl(\theta_i-(\theta^\dagger_{\lambda,m})_i\bigr)p(\theta_i)
+\log\frac{e^{(\theta^\dagger_{\lambda,m})_i}+2}
{e^{\theta_i}+2}\\
=&
\bigl((\theta^\dagger_{\lambda,m})_i-\theta_i\bigr)^2
\int_0^1(1-s)\,
p'\!\left(
\theta_i+s[(\theta^\dagger_{\lambda,m})_i-\theta_i]
\right)\,ds\ge
\frac{\kappa_{\mathrm{SM}}}{2}
\bigl(\theta_i-(\theta^\dagger_{\lambda,m})_i\bigr)^2.
\end{align*}
The last equality is by Taylor's integral formula for
$u\mapsto\log(e^u+2)$, whose first and second derivatives are
$p(u)$ and $p'(u)$. The segment between the two coordinates
lies in $[-B,B]$, so the already proved lower bound
$p'(u)\ge\kappa_{\mathrm{SM}}$ applies throughout the integral.
Averaging over the $2d$ prompts gives
\begin{equation}
\frac1m\sum_{j=1}^m
\KL\!\left(
\pi_{\mathrm{stu},\theta}(\cdot\mid\widetilde x_j)
\,\middle\|\,
\pi_{\mathrm{stu},\theta^\dagger_{\lambda,m}}
(\cdot\mid\widetilde x_j)
\right)
\ge
\frac{\kappa_{\mathrm{SM}}}{2d}
\|\theta-\theta^\dagger_{\lambda,m}\|_2^2.
\label{eq:sm-temp-kl-curvature}
\end{equation}

By the triangle inequality, Cauchy-Schwarz, and
\eqref{eq:sm-temp-mse}, we have
\begin{align*}
\|\theta_{\mathrm{SM}}^\star-\theta^\dagger_{\lambda,m}\|_2
&\le
\EE\!\left[
\|\theta_T^{\mathrm{SM}}-\theta^\dagger_{\lambda,m}\|_2
\right]
+
\EE\!\left[
\|\theta_T^{\mathrm{SM}}-\theta_{\mathrm{SM}}^\star\|_2
\right]\le
\left(
\EE\!\left[
\|\theta_T^{\mathrm{SM}}-\theta^\dagger_{\lambda,m}\|_2^2
\right]
\right)^{1/2}
+\frac{G_{\mathrm{SM}}}{\mu_{\mathrm{SM}}\sqrt{T+1}}.
\end{align*}
Combining this with \eqref{eq:sm-temp-gap}, we obtain $\left(
\EE\!\left[
\|\theta_T^{\mathrm{SM}}-\theta^\dagger_{\lambda,m}\|_2^2
\right]
\right)^{1/2}
\ge
\left[
\frac{\sqrt d(1-\alpha)}{4\lambda}
-\frac{G_{\mathrm{SM}}}{\mu_{\mathrm{SM}}\sqrt{T+1}}
\right]_+$.

We can therefore start from the required policy error and conclude
\begin{align*}
\EE\!\left[
\frac1m\sum_{j=1}^m
\KL\!\left(
\pi_{\mathrm{stu},\theta_T^{\mathrm{SM}}}
(\cdot\mid\widetilde x_j)
\,\middle\|\,
\pi_{\mathrm{stu},\theta^\dagger_{\lambda,m}}
(\cdot\mid\widetilde x_j)
\right)
\right]
&\ge
\frac{\kappa_{\mathrm{SM}}}{2d}
\EE\!\left[
\|\theta_T^{\mathrm{SM}}-\theta^\dagger_{\lambda,m}\|_2^2
\right]\\
&\ge
\frac{\kappa_{\mathrm{SM}}}{2d}
\left[
\frac{\sqrt d(1-\alpha)}{4\lambda}
-\frac{G_{\mathrm{SM}}}{\mu_{\mathrm{SM}}\sqrt{T+1}}
\right]_+^2.
\end{align*}
The first inequality is \eqref{eq:sm-temp-kl-curvature};
the second is the square of the preceding bound.

For the stated iteration threshold, substituting the definition
of $\mu_{\mathrm{SM}}$ gives
\[
T+1\ge
\frac{64\lambda^2dG_{\mathrm{SM}}^2}
{(1+\lambda)^2\kappa_{\mathrm{SM}}^2(1-\alpha)^2}
=
\left(\frac{8\lambda G_{\mathrm{SM}}}{\mu_{\mathrm{SM}}\sqrt d(1-\alpha)}\right)^2.
\]
Hence $G_{\mathrm{SM}}/[\mu_{\mathrm{SM}}\sqrt{T+1}]\le\sqrt d(1-\alpha)/(8\lambda)$,
and the KL lower bound is at least
\[
\frac{\kappa_{\mathrm{SM}}}{2d}
\left[
\frac{\sqrt d(1-\alpha)}{4\lambda}
-\frac{\sqrt d(1-\alpha)}{8\lambda}
\right]^2
=
\frac{\kappa_{\mathrm{SM}}(1-\alpha)^2}{128\lambda^2}>0.
\]
\end{proof}

\end{document}